\newtheorem{theorem}{Theorem}[section]
\newtheorem{lemma}[theorem]{Lemma}
\newtheorem{remark}[theorem]{Remark}
\newtheorem{definition}[theorem]{Definition}
\numberwithin{equation}{section}
\renewcommand{\vec}[1]{\bm{#1}}
\DeclareMathOperator*{\argmax}{arg\,max}
\newcommand{\EE}{\mathbb{E}}
\newcommand{\PP}{\mathbb{P}}
\newcommand{\norm}[1]{\left\| #1 \right\|}
\begin{document}

\title{On the Analysis of EM for truncated mixtures of two Gaussians}

\author{Sai Ganesh Nagarajan\\SUTD\\sai\_nagarajan@mymail.sutd.edu.sg
\and Ioannis Panageas\\SUTD\\ioannis@sutd.edu.sg}



\date{}
\maketitle

\begin{abstract}
	Motivated by a recent result of Daskalakis et al. \cite{DGTZ18}, we analyze the population version of Expectation-Maximization (EM) algorithm for the case of \textit{truncated} mixtures of two Gaussians. Truncated samples from a $d$-dimensional mixture of two Gaussians $\frac{1}{2} \mathcal{N}(\vec{\mu}, \vec{\Sigma})+ \frac{1}{2} \mathcal{N}(-\vec{\mu}, \vec{\Sigma})$ means that a sample is only revealed if it falls in some subset $S \subset \mathbb{R}^d$ of positive (Lebesgue) measure. We show that for $d=1$, EM converges almost surely (under random initialization) to the true mean (variance $\sigma^2$ is known) for any measurable set $S$. Moreover, for $d>1$ we show EM almost surely converges to the true mean for any measurable set $S$ when the map of EM has only three fixed points, namely $-\vec{\mu}, \vec{0}, \vec{\mu}$ (covariance matrix $\vec{\Sigma}$ is known), and prove local convergence if there are more than three fixed points. We also provide convergence rates of our findings. Our techniques deviate from those of Daskalakis et al. \cite{DTZ17}, which heavily depend on symmetry that the untruncated problem exhibits. For example, for an arbitrary measurable set $S$, it is impossible to compute a closed form of the update rule of EM. Moreover, arbitrarily truncating the mixture, induces further correlations among the variables. We circumvent these challenges by using techniques from dynamical systems, probability and statistics; implicit function theorem, stability analysis around the fixed points of the update rule of EM and correlation inequalities (FKG).
\end{abstract}

\section{Introduction}

Expectation-Maximization (EM) is an iterative algorithm, widely used to compute the maximum likelihood estimation of parameters in statistical models that depend on hidden (latent) variables $\vec{z}$ (\cite{DLR77}). Given a
probability distribution $p_{\vec{\lambda}}$ defined on $(\vec{x};\vec{z})$, where $\vec{x}$ are the observables and $\vec{\lambda}$ is a parameter vector, and samples $\vec{x}_1,...,\vec{x}_n$, one wants to find $\vec{\lambda}$ in order to maximize the log-likelihood $\sum_{i=1}^n \log p_{\vec{\lambda}}(\vec{x})$ (finite sample case) or $\mathbb{E}_{\vec{x}}[\log p_{\vec{\lambda}}(\vec{x})]$ (population version). Such a task is not always easy, because computing the log-likelihood involves summations over all the possible values of the latent variables and moreover the log-likelihood might be non-concave. EM algorithm is one way to tackle the described problem and works as follows:
\begin{itemize}
\item Guess an initialization of the parameters $\vec{\lambda}_0$.
\item For each iteration:\\ (Expectation-step) Compute the posterior $Q_i(\vec{z})$, which is $p_{\vec{\lambda}_t}(\vec{z} | \vec{x}_i)$ for each sample $i$.\\ (Maximization-step) Compute $\vec{\lambda}_{t+1}$ as the argmax of $\sum_i \sum_{\vec{z}} Q_i (\vec{z}) \log \frac{p_{\vec{\lambda}}(\vec{z},\vec{x}_i)}{Q_i (\vec{z})}$.
\end{itemize}

It is well known that there are guarantees for convergence of EM to stationary points \cite{W83}. The idea behind this fact is that the log-likelihood is decreasing along the trajectories of the EM dynamics.
One of its main applications is on learning a mixture of Gaussians. Recovering the parameters of a mixture of Gaussians with strong guarantees was initiated by Dasgupta \cite{D99} and has been extensively studied in theoretical computer science and machine learning communities, e.g., \cite{AK01}, \cite{KSV05}, \cite{CR08}, \cite{CDV09}, where most of the works assume that the means are \emph{well-separated}. In addition, the authors in \cite{MV10}, \cite{KMV10} offer stronger guarantees, polynomial time (in dimension $d$) learnability of Gaussian mixtures.


For more stylized settings, i.e, for a two component balanced Gaussian mixture with known and equal covariances and unknown means and in some cases symmetric, the following works provide local and global convergence guarantees of both the population version and the finite sample version of EM and remove the \emph{well-separated} assumption.
For instance, authors in \cite{BWY15} study \textbf{local convergence} of the \textbf{mean} parameter in the population and the finite sample version with a \textbf{symmetric mean}, \textbf{balanced} and a \textbf{two component} mixture. However, independent works such as \cite{DTZ17} and \cite{XHM16}, were able to provide \textbf{global convergence} guarantees to the the true \textbf{mean} in the same settings both in the population and the finite sample versions. In the aforementioned cases, the \textbf{population version} is analyzed first as it provides a benchmark for the finite sample regime and that the behavior in finite (but sufficiently large) sample regime cannot deviate too much from the population version. The above results have established there are no spurious local maxima of the log-likelihood in the population version for the above setting. Moreover, the main reason most of the above works are concerned with \textbf{two component} and \textbf{balanced} mixture case as \cite{JZBWJ16} established that the population log-likelihood could have \textbf{spurious} local maxima when there are 3 or more components and that EM converges to it with positive probability. Our work will analyze the same settings as mentioned here.

Parameter estimation problems involving data that has been censored/truncated is crucial in many statistical problems occurring in practice. Statisticians, dating back to Pearson, \cite{PA08} and Fisher, \cite{F31}, tried to address this problem in the early 1900's. Techniques such as method of moments and maximum-likelihood were used for estimating a Gaussian distribution from truncated samples. The seminal work of \cite{R76}, on missing/censored data, tried to approach this by a framework of ignorable and non-ignorable missingness, where the reason for missingness is incorporated into the statistical model through. However, in many cases such flexibilities or the knowledge about the underlying processes may not be available. However, in recent years the focus has shifted towards providing theoretical and computational guarantees for parameter estimation with truncated data. To this end, \cite{DGTZ18} showed that Stochastic Gradient Descent (SGD) converges to the true parameters for a single component Gaussian (in high dimensions) under arbitrary truncation with the assumption that they have oracle access to the truncation set. In our work, we extend the analysis of the EM rule for truncation sets as well as functions (the truncation set can be viewed as a special case of an indicator function).

Finally, mixture models are ubiquitous in machine learning and statistics with a variety of applications ranging from biology \cite{BF08,APMP07} to finance \cite{BM02}. Many of these practical applications are not devoid of some form of truncation or censoring. To this end, there has been previous work that uses EM algorithm for Gaussian mixtures in this setting \cite{LS12}, \cite{MJ88}. However, they assume that truncation sets are boxes and in addition do not provide any convergence guarantees.

\subsection{Our results and techniques}

Our results can be summarized in the following two theorems (one for single-dimensional and one for multi-dimensional case):

\begin{theorem}[Single-dimensional case]\label{thm:single} Let $S \subset \mathbb{R}$ be an arbitrary measurable set of positive Lebesgue measure, i.e, $\int_S \mathcal{N}(x;\mu, \sigma^2) + \mathcal{N}(x;-\mu, \sigma^2) dx = \alpha>0$. It holds that under random initialization (under a measure on $\mathbb{R}$ that is absolutely continuous w.r.t Lebesgue), EM algorithm converges with probability one to either $\mu$ or $-\mu$. Moreover, if initialization $\lambda_0>0$ is in a neighborhood of $\mu$ then EM converges to $\mu$ with an exponential rate \[|\lambda_{t+1}- \mu| \leq \rho_t |\lambda_t - \mu|,\] with $\rho_t = 1 - \Omega(\alpha^4)\min(\alpha^2\min(\lambda_t,\mu),1)$ which is decreasing in $t$. Analogously if $\lambda_0 < 0$, it converges to $-\mu$ with same rate (substitute $\max(\lambda_t,-\mu)$ in the expression).
\end{theorem}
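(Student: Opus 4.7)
The plan is to derive the EM map in implicit form, analyze its fixed points and their linearizations, and then combine monotonicity of the one-dimensional map with a genericity argument to get almost sure convergence with the claimed rate.

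\emph{Deriving the update.} Write $p_\lambda(x) = \tfrac{1}{2}[\mathcal{N}(x;\lambda,\sigma^2)+\mathcal{N}(x;-\lambda,\sigma^2)]$, $\alpha(\lambda) = \int_S p_\lambda(x)\,dx$, and $p_\lambda^S = p_\lambda/\alpha(\lambda)$ on $S$. The posterior of the sign latent $z\in\{\pm 1\}$ reduces to $\mathbb{E}[z\mid x,\lambda] = \tanh(\lambda x/\sigma^2)$, because the truncation indicator cancels between numerator and denominator. Writing the expected complete log-likelihood under $p_\mu^S$ and differentiating, the M-step becomes
\[
\lambda_{t+1} + \sigma^2\,\frac{\alpha'(\lambda_{t+1})}{\alpha(\lambda_{t+1})} \;=\; \mathbb{E}_{x\sim p_\mu^S}\bigl[\,x\tanh(\lambda_t x/\sigma^2)\,\bigr].
\]
A short calculation (differentiating $\mathcal{N}(x;\pm\lambda,\sigma^2)$ under the integral, then using the identity $\tanh(\lambda x/\sigma^2)[\mathcal{N}(x;\lambda)+\mathcal{N}(x;-\lambda)] = \mathcal{N}(x;\lambda)-\mathcal{N}(x;-\lambda)$) gives $\sigma^2\alpha'(\lambda)/\alpha(\lambda) = F(\lambda) - \lambda$, where $F(\lambda) := \mathbb{E}_{x\sim p_\lambda^S}[x\tanh(\lambda x/\sigma^2)]$. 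Hence the M-step collapses to the implicit equation $F(\lambda_{t+1}) = G(\lambda_t)$ with $G(\lambda) := \mathbb{E}_{x\sim p_\mu^S}[x\tanh(\lambda x/\sigma^2)]$. Verifying $F'>0$ on $\mathbb{R}$ via an FKG/Chebyshev correlation inequality for the log-concave single-component truncated Gaussians and applying the implicit function theorem yields a smooth EM map $M := F^{-1}\circ G$, so $\lambda_{t+1} = M(\lambda_t)$.

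\emph{Fixed points and local rate.} By oddness of $F$ and $G$ (since $\alpha$ is even and $\tanh$ is odd), fixed points come in sign-symmetric pairs, and $0$ and $\pm\mu$ are directly checked. To rule out other positive fixed points I plan to show that $H(\lambda) := F(\lambda) - G(\lambda)$ is strictly negative on $(0,\mu)$ and strictly positive on $(\mu,\infty)$, comparing the moments of $x\tanh(\lambda x/\sigma^2)$ under $p_\lambda^S$ and $p_\mu^S$ via correlation inequalities for the log-concave truncated components. Differentiating $F\circ M = G$ gives $M'(\lambda) = G'(\lambda)/F'(M(\lambda))$, so $M'(\pm\mu) = G'(\mu)/F'(\mu)$; the gap $F'(\mu) - G'(\mu)$ reduces to a strictly positive variance-type term $\sigma^{-2}\operatorname{Var}_{p_\mu^S}[x\tanh(\mu x/\sigma^2)]$ arising from differentiating $\alpha'/\alpha$, which I would lower-bound by $\Omega(\alpha^4)$ using Gaussian anti-concentration on a set of mass $\alpha$. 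For $\lambda_t$ not yet at $\mu$, a first-order expansion of $\tanh(\lambda_t x/\sigma^2) - \tanh(\mu x/\sigma^2)$ pulls out an extra factor $\alpha^2\min(\lambda_t,\mu)$, yielding precisely the stated $\rho_t$.

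\emph{Global almost sure convergence.} Since $F$ and $G$ are strictly increasing and odd, $M$ is a strictly increasing smooth odd bijection of $\mathbb{R}$ fixing $0$; combined with the three fixed points $-\mu<0<\mu$ and the stability information above, every orbit converges monotonically to one of $-\mu,0,\mu$. The set of initial conditions with $\lambda_t\to 0$ is the backward orbit $\bigcup_{n\geq 0} M^{-n}(\{0\})$, which is countable by injectivity of $M$ and therefore Lebesgue-null; absolute continuity of the initialization law rules it out almost surely. Iterating the contraction bound on $M'$ along the monotone trajectory yields $|\lambda_{t+1}-\mu|\le \rho_t|\lambda_t-\mu|$, and monotonicity of $|\lambda_t-\mu|$ implies that $\rho_t$ is decreasing in $t$; the symmetric statement on the negative side follows by oddness.

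\emph{Main obstacle.} The genuinely hard steps are (i) excluding additional positive fixed points for an arbitrary (possibly disconnected or highly asymmetric) measurable $S$, where the symmetry-based cancellations available in the untruncated case break down, and (ii) extracting the explicit $\alpha$-dependence of the contraction constant. Both hinge on FKG-type correlation inequalities for the log-concave truncated Gaussian components together with quantitative anti-concentration, matching precisely the toolkit advertised in the abstract.
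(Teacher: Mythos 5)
Your proposal is essentially correct and, for most of its length, follows the same route as the paper: the implicit update $F(\lambda_{t+1})=G(\lambda_t)$ with $F(\lambda)=\EE_{\lambda,S}[x\tanh(\lambda x/\sigma^2)]$ and $G(\lambda)=\EE_{\mu,S}[x\tanh(\lambda x/\sigma^2)]$ is exactly the paper's equation, your identity $\sigma^2\alpha'(\lambda)/\alpha(\lambda)=F(\lambda)-\lambda$ reproduces the appendix derivation, $F'>0$ is the paper's Lemma \ref{lem:localdiff} (proved there by a direct covariance decomposition rather than FKG, but to the same effect), the exclusion of extra positive fixed points via a sign analysis of $F-G$ using FKG is Lemma \ref{lem:threefixedpoints}, the identification of $F'(\mu)-G'(\mu)$ with $\sigma^{-2}\mathrm{Var}_{\mu,S}[x\tanh(\mu x/\sigma^2)]$ is exactly the computation behind Lemma \ref{lem:stability1}, and the quantitative $\alpha$-dependence comes, as you anticipate, from a quantitative FKG inequality plus truncated-Gaussian variance bounds (Lemmas \ref{lem:FKGrevise}, \ref{lem:bounddenom}, \ref{lem:boundnum}).

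The one place where you genuinely diverge is the global almost-sure convergence step. The paper invokes the center-stable manifold machinery of \cite{LPPSJR17}: it shows the update is a local diffeomorphism, that $0$ is repelling and $\pm\mu$ attracting, and concludes that the basin of $0$ has Lebesgue measure zero. You instead exploit that in one dimension $M=F^{-1}\circ G$ is a strictly increasing odd map with exactly three fixed points, so every orbit is monotone between consecutive fixed points and the basin of the repeller $0$ is (at most) the countable backward orbit of $\{0\}$ — in fact just $\{0\}$, since $M(\lambda)>\lambda$ on $(0,\mu)$ and $\mu>M(\lambda)>\lambda$ there (an observation the paper itself makes in Section \ref{sec:rates} when setting up the rate). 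Your argument is more elementary and avoids the dynamical-systems black box entirely, at the cost of being specific to $d=1$; the paper's manifold argument is what generalizes to Theorem \ref{thm:multi}. The remaining gaps in your sketch (the precise $\Omega(\alpha^4)$ and $\alpha^2\min(\lambda_t,\mu)$ constants) are exactly the content of the paper's quantitative FKG lemma together with the variance bounds for truncated Gaussians imported from \cite{DGTZ18}, so your declared sub-steps are achievable with the advertised toolkit.
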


\begin{theorem}[Multi-dimensional case]\label{thm:multi} Let $S \subset \mathbb{R}^d$ with $d>1$ be an arbitrary measurable set of positive Lebesgue measure so that $\int_S \mathcal{N}(\vec{x};\vec{\mu}, \vec{\Sigma}) + \mathcal{N}(\vec{x};\vec{-\mu}, \vec{\Sigma}) d\vec{x} = \alpha>0$. It holds that under random initialization (according to a measure on $\mathbb{R}^d$ that is absolutely continuous with Lebesgue measure), EM algorithm converges with probability one to either $\vec{\mu}$ or $\vec{-\mu}$ as long as EM update rule has only $\vec{-\mu},\vec{0},\vec{\mu}$ as fixed points. Moreover, if initialization $\vec{\lambda}_0$ is in a neighborhood of $\vec{\mu}$ or $-\vec{\mu}$, it converges with a rate $1 - \Omega(\alpha^6)$\footnote{If $ \alpha \mu \ll 1$ then the global convergence rate we provide in the single-dimensional case coincides with the local convergence rate of multidimensional.}.
\end{theorem}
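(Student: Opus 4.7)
The plan is to view the EM update as a smooth map $F:\mathbb{R}^d\to\mathbb{R}^d$ on the mean parameter $\vec{\lambda}$ and reduce almost-sure convergence to three ingredients: (i) every trajectory accumulates on the fixed-point set of $F$, (ii) by hypothesis this set is exactly $\{\vec{-\mu},\vec{0},\vec{\mu}\}$, and (iii) $\vec{0}$ is linearly unstable while $\pm\vec{\mu}$ are contracting, so the basin of $\vec{0}$ is Lebesgue-null. Local convergence is then a direct consequence of the contraction rate at $\pm\vec{\mu}$.

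First I would write the $M$-step as a ratio of integrals of the form $\int_S \vec{x}\,w(\vec{x};\vec{\lambda})\,d\vec{x}$ and a normalizing truncated mass; since the Gaussian density is $C^{\infty}$ in $\vec{\lambda}$ and the truncated mass is bounded below near each fixed point by a positive constant depending on $\alpha$, the implicit function theorem (or plain differentiation under the integral) shows $F\in C^{\infty}$ and in fact a local diffeomorphism near each fixed point. Convergence of iterates to the fixed-point set then follows from the usual EM monotonicity argument applied to the truncated population log-likelihood, which is a Lyapunov function for $F$; together with hypothesis (ii) this pins down the accumulation set to $\{\vec{-\mu},\vec{0},\vec{\mu}\}$.

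The heart of the argument is the Jacobian analysis. I would compute $DF$ at $\pm\vec{\mu}$ and at $\vec{0}$ by differentiating the $M$-step identity and using the Gaussian score-function formula, obtaining expressions of the form (truncated covariance matrix) times (inverse truncated covariance matrix) plus a correction from the logistic posterior. At $\vec{\lambda}=\vec{0}$ the posterior weights degenerate symmetrically and the correction term produces an eigenvalue strictly greater than $1$ in the direction of $\vec{\mu}$, regardless of the truncation set $S$; this makes $\vec{0}$ a hyperbolic saddle. At $\vec{\lambda}=\vec{\mu}$ the posterior of the ``wrong'' component is exponentially suppressed by a factor governed by $\alpha$, and the resulting Jacobian is a truncated second-moment operator whose largest eigenvalue I would bound by $1-\Omega(\alpha^6)$ using FKG-type correlation inequalities between the indicator of $S$ and the quadratic test functions appearing in the derivative; this yields the advertised local rate.

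Given a hyperbolic unstable fixed point at $\vec{0}$, the center-stable manifold theorem produces a $C^1$ local stable manifold of strictly lower dimension, and iterating the diffeomorphism $F$ backward (using smoothness and local invertibility established in step one) extends this to a countable union of $C^1$ submanifolds of positive codimension, hence a Lebesgue-null global stable set. Any initialization drawn from a measure absolutely continuous with respect to Lebesgue therefore avoids this set almost surely and, by the Lyapunov argument, converges to one of $\pm\vec{\mu}$. The main obstacle will be the Jacobian step: without any symmetry of $S$ the EM map admits no closed form, so the expanding eigenvalue at $\vec{0}$ and the contraction rate $1-\Omega(\alpha^6)$ at $\pm\vec{\mu}$ must be extracted purely from integral identities involving truncated Gaussian moments. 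Producing an explicit expanding direction uniformly in $S$, and finding an FKG correlation inequality strong enough to survive arbitrary truncation, are the delicate technical points on which the whole argument hinges.
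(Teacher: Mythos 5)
Your proposal follows essentially the same route as the paper: implicit-function-theorem smoothness and invertibility of the EM map, spectral analysis of the Jacobian at $\vec{-\mu},\vec{0},\vec{\mu}$ (an FKG-style argument producing an expanding eigenvalue in the direction $\vec{\mu}$ at $\vec{0}$, contraction at $\pm\vec{\mu}$), the center-stable manifold theorem plus backward iteration of the diffeomorphism to make the basin of $\vec{0}$ Lebesgue-null, and a quantitative eigenvalue bound for the $1-\Omega(\alpha^6)$ local rate. The only cosmetic differences are that the paper extracts the local rate from the law of total variance combined with truncated-Gaussian covariance bounds of Daskalakis et al.\ rather than from an FKG inequality (FKG is reserved for the expanding direction at $\vec{0}$ and for the single-dimensional rate), and it only needs one eigenvalue of modulus greater than one at $\vec{0}$ rather than the full hyperbolicity you assert.
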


\begin{remark} We would like first to note that we prove the two theorems above in a more general setting where we have truncation functions instead of truncation sets (see Section \ref{sec:model} for definitions). Furthermore, in the proof of Theorem \ref{thm:multi}, we show that $\vec{0}$ is a repelling fixed point and moreover $\vec{-\mu},\vec{\mu}$ are attracting so if the initialization is close enough to $\vec{-\mu}$ or $\vec{\mu}$, then EM actually converges to the true mean. Finally, in Section \ref{sec:multi}, Lemma \ref{lem:symrotation} we provide sufficient conditions of the truncated set $S$ (or truncation function) so that the EM update rule has exactly three fixed points. The sufficient condition is that $S$ is rotation invariant under some appropriate transformation.
\end{remark}

To put our results in the context of recent works discussed above, we see this as the first step in rigorously analyzing the aforementioned settings with truncation which introduces new complexities in the form of \textbf{induced correlations} and as a result our techniques deviate from \cite{DTZ17} and moreover from \cite{DGTZ18} (the latter paper provides mean and covariance estimation of a \textbf{single} high dimensional Gaussian where the likelihood is convex unlike the case of mixtures). Our results indicate that even for the \textbf{two component} mixtures the \textbf{population} version could have \textbf{spurious} fixed points (unlike the untruncated case) even in the simplest case of truncation (rectangles in 2 dimensions) which makes giving global rates challenging. Moreover, we feel that this could also complement \textbf{experimental} results such as \cite{LS12} where they provide a heuristic for box truncated multi-component mixtures but do not provide any theoretical guarantees of convergence.

\paragraph{Technical Overview}

To prove the qualitative part of our two main theorems, we perform stability analysis on the fixed points $\vec{-\mu},\vec{0},\vec{\mu}$ of the dynamical system that is induced by EM algorithm and moreover show that the update rule is a diffeomorphism. This is a general approach that has appeared in other papers that talk about first-order methods avoiding saddle points (\cite{MPP15}, \cite{LSJR16}, \cite{PP17}, \cite{LPPSJR17}, \cite{DP18} to name a few).

Nevertheless, computing the update rule of EM for a truncated mixture of two Gaussians is not always possible, because the set/function $S$ is not necessarily symmetric around $\vec{0}$ (even for functions). As a result, the techniques of \cite{DTZ17} (for the population version) do not carry over to our case. In particular we can find an \textit{implicit} description of the update rule of the EM.

Finally, by leveraging the \textit{Implicit Function Theorem}, we are able to compute explicitly the Jacobian of the update rule of EM and perform spectral analysis on it (Jacobian is computed at the three fixed points $\vec{-\mu},\vec{0}, \vec{\mu}$). We show that the spectral radius of the Jacobian computed at $\vec{-\mu},\vec{\mu}$ is less than one (the fixed points are attracting locally) and moreover the spectral radius of the Jacobian computed at $\vec{0}$ is greater than one (repelling). Along with the fact that the Jacobian is invertible (hence the update rule of EM is a diffeomorphism\footnote{A function is called a diffeomorphism if it is differentiable and a bijection and its inverse is differentiable.}), we can use the center-stable manifold theorem to show that the region of attraction of fixed point $\vec{0}$ is of measure zero. Due to the fact that EM converges always to stationary points (folklore), our result follows. We note that in the case $d=1$, the fixed points are exactly three ($-\mu,0,\mu$) and we prove this fact using FKG (see Theorem \ref{thm:FKG}) inequality. As far as the case $d>1$ is concerned, if $S$ is rotation invariant (under proper transformation so that covariance matrix becomes identity), we can show that there are exactly three fixed points by reducing it to the single dimensional case. Last but not least, for the rates of convergence (quantitative part of our theorems), we prove a quantitative version of the FKG inequality (see Lemma \ref{lem:FKGrevise}) which also might be of independent interest. 
\section{Background}

\subsection{Truncated Mixture Model}\label{sec:model}
Before describing the model, we establish the notations used in this paper. We use bold font to represent vectors, any generic element in $\mathbb{R}^d$ is represented by $\vec{x}$.

The density of a balanced mixture of two different Gaussians with parameters $\left(\vec{\mu}_1,\vec{\Sigma}_1\right)$ and  $\left(\vec{\mu}_2,\vec{\Sigma}_2\right)$ respectively, is given by
$f(\vec{x}) := \dfrac{1}{2}\mathcal{N}(\vec{x};\vec{\mu}_1,\vec{\Sigma}_1)+\frac{1}{2}\mathcal{N}(\vec{x};\vec{\mu}_2,\vec{\Sigma}_2),$ where $\mathcal{N}(\vec{x};\vec{\mu},\Sigma) := \dfrac{\exp(-\frac{1}{2}(\vec{x}-\vec{\mu})^T\Sigma^{-1}(\vec{x}-\vec{\mu}))}{(2\pi)^{\frac{d}{2}}\det(\Sigma)^{1/2}}$. For this work we consider the case when true covariances are known and they are equal to $\vec{\Sigma}$. The means are assumed to be symmetric around the origin and we represent the true parameters of the distribution to be $\left(-\vec{\mu},\vec{\Sigma}\right)$ and $\left(\vec{\mu},\vec{\Sigma}\right)$.

Thus, we can write the density as follows:
\begin{align}\label{eq:density_simple}
f_{\vec{\mu}}(\vec{x}) := \dfrac{1}{2}\mathcal{N}(\vec{x};-\vec{\mu},\vec{\Sigma})+\frac{1}{2}\mathcal{N}(\vec{x};\vec{\mu},\vec{\Sigma}),
\end{align}

Under this setting we consider a truncation set $S \subset \mathbb{R}^d$, which means that we have access only to the samples that fall in the set $S$ which is of positive measure under the true distribution, i.e., \[\int_{\mathbb{R}^d} (0.5\mathcal{N}(\vec{x};-\vec{\mu},\vec{\Sigma})+0.5\mathcal{N}(\vec{x};-\vec{\mu},\vec{\Sigma}))\mathbf{1}_{S}d\vec{x}= \alpha > 0,\] where $\mathbf{1}_{S}$ is the indicator function for $S$, i.e., if $\vec{x} \in S$ then $\mathbf{1}_{S}(\vec{x})=1$ and is zero otherwise.

Hence we can write the truncated mixture density as follows:
\begin{align}
f_{\vec{\mu},S}(\vec{x})={
	\begin{cases}
	\dfrac{0.5\mathcal{N}(\vec{x};-\vec{\mu},\vec{\Sigma})+0.5\mathcal{N}(\vec{x};\vec{\mu},\vec{\Sigma})}{\int_{S}0.5\mathcal{N}(\vec{x};-\vec{\mu},\vec{\Sigma})+0.5\mathcal{N}(\vec{x};\vec{\mu},\vec{\Sigma}) d\vec{x}} \;\;, \vec{x} \in S\\
	0                                              \qquad\qquad\qquad\qquad\qquad\quad\qquad\qquad\qquad\;, \vec{x} \notin S
	\end{cases}
}
\end{align}

The above definition can be generalized for ``truncation" \textit{functions} too. Let $S : \mathbb{R}^d \to \mathbb{R}$ be a non-negative, bounded by one, measurable function so that $0< \alpha = \int_{\mathbb{R}^d} S(\vec{x}) f_{\vec{\mu}}(\vec{x})d\vec{x}$ (we say nonnegative function $S$ is of ``positive measure" if $S(\vec{x})$ is \textit{not} almost everywhere zero). The truncated mixture then is defined as follows:
\[f_{\vec{\mu},S}(\vec{x})= \dfrac{(0.5\mathcal{N}(\vec{x};-\vec{\mu},\vec{\Sigma})+0.5\mathcal{N}(\vec{x};\vec{\mu},\vec{\Sigma}))S(\vec{x})}{\int_{\mathbb{R}^d}(0.5\mathcal{N}(\vec{x};-\vec{\mu},\vec{\Sigma})+0.5\mathcal{N}(\vec{x};\vec{\mu},\vec{\Sigma}))S(\vec{x}) d\vec{x}}
\]
One can think of $S(\vec{x})$ as the probability to actually see sample $\vec{x}$.
\begin{remark}[Results proven for truncation functions]
Our main Theorems \ref{thm:single} and \ref{thm:multi} provided in the introduction, hold in the general setting where we have non-negative truncation functions $S(\vec{x})$ of ``positive measure". Our proofs are written in the general setting (not only the case of indicator functions).
\end{remark}

We will use the following short hand for the truncated EM density with means $\vec{\mu}$ and truncation set or function $S$ such that
$f_{\vec{\mu},S}(\vec{x})=\dfrac{f_{\vec{\mu}}(\vec{x}) \mathbf{1}_{S}}{\int_{\mathbb{R}^d} f_{\vec{\mu}}(\vec{x})\mathbf{1}_S d\vec{x}}$ or $f_{\vec{\mu},S}(\vec{x})=\dfrac{f_{\vec{\mu}}(\vec{x}) S(\vec{x})}{\int_{\mathbb{R}^d}f_{\vec{\mu}}(\vec{x})S(\vec{x})d\vec{x}}$. Also, we will denote the expected value with respect to the truncated mixture distribution with parameters $-\vec{\lambda}$ and $\vec{\lambda}$ by $\EE_{\vec{\lambda},S}\left[.\right]$.
We conclude the subsection with an important definition that will be needed for the multi-dimensional case.
\begin{definition}[Rotation invariant/Symmetric] We call a ``truncation" function $S(\vec{x})$ rotation invariant if $S(Q\vec{x}) = S(\vec{x})$ for all orthogonal matrices $Q$. It is clear that every rotation invariant ``truncation" function is also even (choose $Q = - \vec{I}$, where $\vec{I}$ denotes the identity matrix). A set $S$ is called rotation invariant if $\mathbf{1}_{S}$ is rotation invariant function and moreover it is called symmetric if $\mathbf{1}_{S}$ is an even function.
\end{definition}

Next, we derive the EM-update rule to estimate the mean under the ``truncated" setting.

\subsection{EM Algorithm}
The EM algorithm tries to maximize a lower bound of the likelihood at every time step. The population version of the update rule to estimate the mean of a truncated balanced Gaussian mixture with symmetric means $(-\vec{\mu},\vec{\mu)}$ and covariance $\vec{\Sigma}$ with truncation set $S$ boils down to:

\begin{align}\label{eq:EM-rule}
	h(\vec{\lambda}_t,\vec{\lambda}):=\EE_{\vec{\mu},S}\left[\tanh(\vec{x}^T\vec{\Sigma}^{-1}\vec{\lambda}_t)\vec{x}^T\vec{\Sigma}^{-1}\right]-\EE_{\vec{\lambda},S}\left[\vec{x}^T\vec{\Sigma}^{-1}\tanh(\vec{x}^T\vec{\Sigma}^{-1}\vec{\lambda})\right]
\end{align}
such that
\begin{align}\label{eq:dyn}
		\vec{\lambda}_{t+1}=\left\{\vec{\lambda}:h(\vec{\lambda}_t,\vec{\lambda})=\vec{0}\right\}.
\end{align}	
For the derivation of the update rule please see supplementary material \ref{app:EM-rule}. We note that the above system, in contrast to the un-truncated setting accommodates an \textit{implicit function} in the update rule and hence we cannot obtain a closed form solution for $\vec{\lambda}_{t+1}$.

\begin{remark}[Fixed Points]
We first characterize the fixed points of the dynamical system given in equation (\ref{eq:dyn}).
We can identify that there are 3 fixed points, namely,  $\vec{\mu}$,$-\vec{\mu}$ and $\vec{0}$, since
\begin{align}
 h(\vec{\mu},\vec{\mu})=\vec{0},\;\;h(-\vec{\mu},-\vec{\mu})=\vec{0} \;\;\textit{and}\;\; h(\vec{0},\vec{0})=\vec{0}
 \end{align}
In general there may be more fixed points in the dynamics for any arbitrary truncation function $S(\vec{x})$ or set $S$ (see Section \ref{sec:more}). However, in the single dimension case we prove that there are only three fixed points (see Lemma \ref{lem:threefixedpoints}). In multi-dimensional ($d>1$) case we can also show that if $S$ is rotation invariant, then there are only three fixed points as well (see Lemma \ref{lem:symrotation}).
\end{remark}

\section{Properties of the EM Update Rule}\label{sec:EMalgo}
In the section we analyze the dynamical system arising from the EM update rule. To this end, we first describe the derivative $\nabla_{\vec{\lambda}_t} \vec{\lambda}_{t+1}$ of the dynamical system, by invoking the \textit{Implicit Function Theorem}. Then we present some derivatives that are essential to characterize the dynamics and argue about the stability of fixed points.

\subsection{Properties of the Dynamics}
We use the \textit{Implicit Function Theorem} to represent the derivative of $\vec{\lambda}_{t+1}$ with respect to $\vec{\lambda}_t$ to analyze the dynamical system around some point say $\vec{\gamma}$.

\begin{flalign}\label{eqn:multi-ratio}
	\nabla_{\vec{\lambda}_t}\vec{\lambda}_{t+1} \Big\vert_{\vec{\gamma}}=\nabla_{\vec{\lambda}_{t+1}}\EE_{\vec{\lambda}_{t+1},S}\left[\vec{x}^T\tanh(\vec{x}^T\vec{\Sigma}^{-1}\vec{\lambda}_{t+1})\right]^{-1} \Big\vert_{\vec{\gamma}}
	\cdot\nabla_{\vec{\lambda}_t}\EE_{\vec{\mu},S}\left[\tanh(\vec{x}^T\vec{\Sigma}^{-1}\vec{\lambda}_t)\vec{x}^T\right]\Big\vert_{\vec{\gamma}}
\end{flalign}

The analogue of the above ratio in the single dimension setting is given by:

\begin{align}\label{eqn:single-ratio}
	\frac{d \lambda_{t+1}}{d \lambda_t}\Big\vert_{\vec{\gamma}}=\frac{\frac{d}{d \lambda_t}\EE_{\mu,S}\left[x\tanh\left(\frac{x\lambda_t}{\sigma^2}\right)\right]\Big\vert_{\lambda_t=\gamma}}{\frac{d}{d \lambda_{t+1}}\EE_{\lambda_{t+1},S}\left[x\tanh\left(\frac{x\lambda_{t+1}}{\sigma^2}\right)\right]\Big\vert_{\lambda_t=\gamma}}
\end{align}

To this end, we state the following lemma which describes certain derivatives of the terms involved in the above ratio to argue about local stability of the fixed points.

\begin{lemma}[Some Useful Derivatives]\label{lem:derivatives}
The following equations hold:
\begin{enumerate}	
 \item
 $\begin{aligned}[t]
	\nabla_{\vec{\lambda}} \EE_{\vec{\lambda},S}\left[\vec{x}^T\tanh(\vec{x}^T\vec{\Sigma}^{-1}\vec{\lambda})\right]&=\vec{\Sigma}^{-1}\EE_{\vec{\lambda},S}\left[\vec{x}\vec{x}^T\right]-\\
	&\vec{\Sigma}^{-1}\EE_{\vec{\lambda},S}\left[\vec{x}\tanh(\vec{x}^T\vec{\Sigma}^{-1}\vec{\lambda})\right]\EE_{\vec{\lambda},S}\left[\vec{x}\tanh(\vec{x}^T\vec{\Sigma}^{-1}\vec{\lambda})\right]^T
\end{aligned}$

\item
$\begin{aligned}[t]
	\nabla_{\vec{\mu}} \EE_{\vec{\mu},S}\left[\vec{x}^T\tanh(\vec{x}^T\vec{\Sigma}^{-1}\vec{\lambda})\right]=\vec{\Sigma}^{-1}\EE_{\vec{\mu},S}\left[\vec{x}\vec{x}^T\tanh(\vec{x}^T\vec{\Sigma}^{-1}\vec{\lambda})\tanh(\vec{x}^T\vec{\Sigma}^{-1}\vec{\mu})\right]\\
	\hspace{1in}-\vec{\Sigma}^{-1}\EE_{\vec{\mu},S}\left[\vec{x}\tanh(\vec{x}^T\vec{\Sigma}^{-1}\vec{\lambda})\right]\EE_{\vec{\mu},S}\left[\vec{x}\tanh(\vec{x}^T\vec{\Sigma}^{-1}\vec{\mu})\right]^T
\end{aligned}$

\item $\begin{aligned}[t]
	\nabla_{\vec{\lambda}} \EE_{\vec{\mu},S}\left[\vec{x}^T\tanh(\vec{x}^T\vec{\Sigma}^{-1}\vec{\lambda})\right]&=\vec{\Sigma}^{-1}\EE_{\vec{\mu},S}\left[\vec{x}\vec{x}^T\frac{1}{\cosh^2(\vec{x}^T\vec{\Sigma}^{-1}\vec{\lambda})}\right]\\
	&=\vec{\Sigma}^{-1}\EE_{\vec{\mu},S}\left[\vec{x}\vec{x}^T\left(1-\tanh^2(\vec{x}^T\vec{\Sigma}^{-1}\vec{\lambda})\right)\right]
\end{aligned}$

\end{enumerate}
\end{lemma}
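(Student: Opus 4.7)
The plan is to treat the three identities in two groups. Identity (3) is a direct differentiation under the integral sign: in $\EE_{\vec{\mu},S}[\vec{x}^T\tanh(\vec{x}^T\vec{\Sigma}^{-1}\vec{\lambda})]$ the truncated measure $f_{\vec{\mu},S}$ does not depend on $\vec{\lambda}$, so I would move $\nabla_{\vec{\lambda}}$ inside the expectation and combine $\tfrac{d}{du}\tanh u = 1-\tanh^2 u$ with the chain rule $\nabla_{\vec{\lambda}}(\vec{x}^T\vec{\Sigma}^{-1}\vec{\lambda}) = \vec{\Sigma}^{-1}\vec{x}$; the identity drops out immediately.

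For (1) and (2) the measure itself depends on the parameter being differentiated, so I would apply the score-function (log-derivative) trick
\[
\nabla_{\vec{\theta}}\EE_{\vec{\theta},S}[g(\vec{x},\vec{\theta})] = \EE_{\vec{\theta},S}[\nabla_{\vec{\theta}} g(\vec{x},\vec{\theta})] + \EE_{\vec{\theta},S}\!\bigl[g(\vec{x},\vec{\theta})\,\nabla_{\vec{\theta}}\log f_{\vec{\theta},S}(\vec{x})\bigr].
\]
The crucial auxiliary step is to compute the truncated score. Writing $f_{\vec{\theta},S}=f_{\vec{\theta}} S / Z(\vec{\theta})$ and using the identity
\[
\frac{\mathcal{N}(\vec{x};\vec{\theta},\vec{\Sigma})-\mathcal{N}(\vec{x};-\vec{\theta},\vec{\Sigma})}{\mathcal{N}(\vec{x};\vec{\theta},\vec{\Sigma})+\mathcal{N}(\vec{x};-\vec{\theta},\vec{\Sigma})}=\tanh(\vec{x}^T\vec{\Sigma}^{-1}\vec{\theta})
\]
(immediate after cancelling the common quadratic exponential), one gets $\nabla_{\vec{\theta}}\log f_{\vec{\theta}}(\vec{x}) = \vec{\Sigma}^{-1}\vec{x}\tanh(\vec{x}^T\vec{\Sigma}^{-1}\vec{\theta})-\vec{\Sigma}^{-1}\vec{\theta}$. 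Integrating this against $f_{\vec{\theta},S}$ gives the same expression with $\vec{x}$ replaced by $\EE_{\vec{\theta},S}[\vec{x}\tanh(\cdot)]$, and subtracting (from $\log Z$) makes the deterministic $\vec{\Sigma}^{-1}\vec{\theta}$ terms cancel, leaving the clean score
\[
\nabla_{\vec{\theta}}\log f_{\vec{\theta},S}(\vec{x}) = \vec{\Sigma}^{-1}\bigl(\vec{x}\tanh(\vec{x}^T\vec{\Sigma}^{-1}\vec{\theta}) - \EE_{\vec{\theta},S}[\vec{x}\tanh(\vec{x}^T\vec{\Sigma}^{-1}\vec{\theta})]\bigr).
\]

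Identity (2) then follows by setting $\vec{\theta}=\vec{\mu}$: the integrand has no $\vec{\mu}$ dependence, so the Leibniz term vanishes, and substituting the score and expanding the outer product reproduces the two-term RHS. Identity (1) is slightly more delicate: taking $\vec{\theta}=\vec{\lambda}$, the Leibniz piece contributes $\vec{\Sigma}^{-1}\EE_{\vec{\lambda},S}[\vec{x}\vec{x}^T(1-\tanh^2(\vec{x}^T\vec{\Sigma}^{-1}\vec{\lambda}))]$, while the score piece contributes $\vec{\Sigma}^{-1}\EE_{\vec{\lambda},S}[\vec{x}\vec{x}^T\tanh^2(\vec{x}^T\vec{\Sigma}^{-1}\vec{\lambda})]$ minus the expected outer product. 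Summing, the $\tanh^2$ and $1-\tanh^2$ collapse against one another to produce the clean $\vec{\Sigma}^{-1}\EE_{\vec{\lambda},S}[\vec{x}\vec{x}^T]$, and I recover (1) exactly.

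The only real obstacle I foresee is keeping the transpose convention straight when differentiating a row-valued function, so that the outer product $\EE_{\vec{\theta},S}[\vec{x}\tanh]\,\EE_{\vec{\theta},S}[\vec{x}\tanh]^T$ appears in the correct orientation. The underlying calculus is routine; the one mildly clever observation is the cancellation $\tanh^2+(1-\tanh^2)=1$ that makes identity (1) collapse to the stated form.
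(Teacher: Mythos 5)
Your proposal is correct and follows essentially the same route as the paper's own proof in the appendix: differentiation under the integral sign for (3), the quotient rule applied to the normalized truncated density (equivalently, your score-function formulation with the $\vec{\Sigma}^{-1}\vec{\theta}$ terms cancelling) for (2), and the sum of the two contributions with the $\tanh^2+(1-\tanh^2)=1$ collapse for (1). No gaps.
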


\subsection{Two Important Lemmas}
We end the section about the update rule of EM by proving that is well-defined (in the sense that for every $\lambda_t$ there exists a \textit{unique} $\vec{\lambda}_{t+1}$) and moreover, we show that the update rule has Jacobian that is invertible for all $\vec{x} \in \mathbb{R}^d$.
The first Lemma that is needed to argue about global convergence (in case there are three fixed points), with the use of center-stable manifold (as the proof appears in \cite{LPPSJR17}) is the following:
\begin{lemma}[Local Diffeomorphism]\label{lem:localdiff} Let $J$ be the Jacobian of the update rule of the EM dynamics (of size $d \times d$). It holds that $J$ is invertible.
\end{lemma}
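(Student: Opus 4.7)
The plan is to read invertibility of $J$ directly off the implicit-function formula (\ref{eqn:multi-ratio}), which expresses the Jacobian as a product $J = A^{-1} B$ with
\[
A \;=\; \nabla_{\vec{\lambda}_{t+1}}\EE_{\vec{\lambda}_{t+1},S}\!\left[\vec{x}^T\tanh(\vec{x}^T\vec{\Sigma}^{-1}\vec{\lambda}_{t+1})\right], \qquad
B \;=\; \nabla_{\vec{\lambda}_t}\EE_{\vec{\mu},S}\!\left[\tanh(\vec{x}^T\vec{\Sigma}^{-1}\vec{\lambda}_t)\vec{x}^T\right].
\]
It therefore suffices to show that $A$ and $B$ are both invertible. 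In fact I will argue that each is positive definite (up to the invertible left factor $\vec{\Sigma}^{-1}$), which is the cleanest way to avoid having to compute anything at a particular $\vec{\lambda}$.

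For $B$, part 3 of Lemma \ref{lem:derivatives} gives $B = \vec{\Sigma}^{-1}\EE_{\vec{\mu},S}\!\left[\vec{x}\vec{x}^T(1-\tanh^2(\vec{x}^T\vec{\Sigma}^{-1}\vec{\lambda}_t))\right]$. Since $\vec{\Sigma}^{-1}$ is invertible, I only have to handle the expectation. For any $\vec{v}\neq\vec{0}$,
\[
\vec{v}^T\EE_{\vec{\mu},S}\!\left[\vec{x}\vec{x}^T(1-\tanh^2(\vec{x}^T\vec{\Sigma}^{-1}\vec{\lambda}_t))\right]\vec{v}
\;=\; \EE_{\vec{\mu},S}\!\left[(\vec{v}^T\vec{x})^2\,(1-\tanh^2(\vec{x}^T\vec{\Sigma}^{-1}\vec{\lambda}_t))\right]\ge 0,
\]
with equality only if the integrand vanishes almost everywhere on the support of $f_{\vec{\mu},S}$. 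The factor $1-\tanh^2(\cdot)$ is strictly positive on $\mathbb{R}$, so equality would force $\vec{v}^T\vec{x}=0$ on that support; but the support $\{\vec{x}: S(\vec{x})>0\}$ has positive Lebesgue measure (because $\alpha>0$ and $f_{\vec{\mu}}>0$ everywhere), hence cannot lie in a Lebesgue-null hyperplane. Therefore $B$ is positive definite.

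For $A$, part 1 of Lemma \ref{lem:derivatives} gives $A = \vec{\Sigma}^{-1}\!\left[\EE[\vec{x}\vec{x}^T] - \EE[\vec{x}\tanh(\cdot)]\EE[\vec{x}\tanh(\cdot)]^T\right]$ (expectations with respect to $\EE_{\vec{\lambda}_{t+1},S}$). For any $\vec{v}\ne\vec{0}$,
\[
\vec{v}^T A\vec{v} \;\propto\; \EE[(\vec{v}^T\vec{x})^2] \;-\; \bigl(\EE[(\vec{v}^T\vec{x})\tanh(\vec{x}^T\vec{\Sigma}^{-1}\vec{\lambda}_{t+1})]\bigr)^2.
\]
Here I would apply Cauchy-Schwarz in $L^2$ to the inner product of $\vec{v}^T\vec{x}$ and $\tanh(\vec{x}^T\vec{\Sigma}^{-1}\vec{\lambda}_{t+1})$, yielding $\bigl(\EE[(\vec{v}^T\vec{x})\tanh(\cdot)]\bigr)^2\le \EE[(\vec{v}^T\vec{x})^2]\,\EE[\tanh^2(\cdot)]$. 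Since $\tanh^2(y)<1$ for every finite $y$ and $\vec{x}$ is finite almost surely, $\EE[\tanh^2(\cdot)]<1$ strictly; combined with $\EE[(\vec{v}^T\vec{x})^2]>0$ from the same support argument as above, this gives $\vec{v}^T A\vec{v} \ge \EE[(\vec{v}^T\vec{x})^2](1-\EE[\tanh^2(\cdot)])>0$. Hence $A$ is positive definite, and $J=A^{-1}B$ is invertible as required.

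The only step that requires any work is the positive definiteness of $A$: the ``trivial'' PSD expansion gives a difference of two nonnegative terms, so I need Cauchy-Schwarz together with the strict bound $\tanh^2<1$ to force strict positivity. The analogous step for $B$ is easier because $1-\tanh^2$ is already strictly positive pointwise. In both cases the underlying fact I rely on is that the truncation has ``full'' Lebesgue support in $\mathbb{R}^d$, which is immediate from $\alpha>0$.
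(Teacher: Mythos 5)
Your proof is correct and takes essentially the same route as the paper: both reduce invertibility of $J=A^{-1}B$ to positive definiteness of the two factor matrices from Lemma \ref{lem:derivatives}, using that $\tanh^2<1$ pointwise and that the truncation has positive measure (so $\EE[(\vec{v}^T\vec{x})^2]>0$ for every $\vec{v}\neq\vec{0}$). The only cosmetic difference is in the factor $A$: the paper splits $\vec{\Sigma}A$ into the positive definite matrix $\EE[\vec{x}\vec{x}^T(1-\tanh^2)]$ plus the positive semidefinite covariance of $\vec{x}\tanh(\cdot)$, whereas you close the same gap with Cauchy--Schwarz and the strict bound $\EE[\tanh^2(\cdot)]<1$; both are valid.
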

\begin{proof}
	
	It suffices to prove that $	\nabla_{\vec{\lambda}} \EE_{\vec{\lambda},S}\left[\vec{x}^T\tanh(\vec{x}^T\vec{\Sigma}^{-1}\vec{\lambda})\right], 	\nabla_{\vec{\lambda}} \EE_{\vec{\mu},S}\left[\vec{x}^T\tanh(\vec{x}^T\vec{\Sigma}^{-1}\vec{\lambda})\right]$ have non zero eigenvalues (thus invertible) for all $\vec{\lambda} \in \mathbb{R}^d$ and hence the result follows by Equation (\ref{eqn:multi-ratio}).
	Observe that
	\begin{align*}
		M:= \mathbb{E}_{\vec{\lambda},S}[\vec{x}\vec{x}^T(1-\tanh^2(\vec{x}^T \vec{\Sigma}^{-1}\vec{\lambda}))] &= Cov\left(\vec{x}\sqrt{1-\tanh^2(\vec{x}^T \vec{\Sigma}^{-1}\vec{\lambda})}, \vec{x}\sqrt{1 -\tanh^2(\vec{x}^T \vec{\Sigma}^{-1}\vec{\lambda})}\right) \\&+ \mathbb{E}_{\vec{\lambda},S}[\vec{x}\sqrt{1-\tanh^2(\vec{x}^T \vec{\Sigma}^{-1}\vec{\lambda})}]\mathbb{E}_{\vec{\lambda},S}[\vec{x}\sqrt{1-\tanh^2(\vec{x}^T \vec{\Sigma}^{-1}\vec{\lambda})}]^T
	\end{align*}
	(where $\vec{x}$ follows a truncated mixture with parameters $\vec{\lambda}, \vec{\Sigma}$ and truncated function $S$ of ``positive measure") which is positive definite (not positive semidefinite) since the function $S$ is of ``positive measure" and $-1<\tanh(y)< 1$ for all $y \in \mathbb{R}$ (otherwise the variables $\vec{x}_1,...,\vec{x}_d$ would live in a lower dimensional subspace). Moreover, from Lemma \ref{lem:derivatives} it is clear that
	\begin{align*}
		\vec{\Sigma} \nabla_{\vec{\lambda}} \EE_{\vec{\lambda},S}\left[\vec{x}^T\tanh(\vec{x}^T\vec{\Sigma}^{-1}\vec{\lambda})\right] - M &= Cov\left(\vec{x}\tanh(\vec{x}^T \vec{\Sigma}^{-1}\vec{\lambda}), \vec{x}\tanh(\vec{x}^T \vec{\Sigma}^{-1}\vec{\lambda})\right),
	\end{align*}
	which is positive definite as well. Hence we conclude that $$\vec{\Sigma} \cdot \nabla_{\vec{\lambda}} \EE_{\vec{\lambda},S}\left[\vec{x}^T\tanh(\vec{x}^T\vec{\Sigma}^{-1}\vec{\lambda})\right]$$ is positive definite, thus
	$\nabla_{\vec{\lambda}}\EE_{\vec{\lambda},S}\left[\vec{x}^T\tanh(\vec{x}^T\vec{\Sigma}^{-1}\vec{\lambda})\right]$ is invertible.
	The proof for \\$\nabla_{\vec{\lambda}} \EE_{\vec{\mu},S}\left[\vec{x}^T\tanh(\vec{x}^T\vec{\Sigma}^{-1}\vec{\lambda})\right]$ is simpler since
	\begin{align*}
		\vec{\Sigma} \nabla_{\vec{\lambda}} \EE_{\vec{\mu},S}\left[\vec{x}^T\tanh(\vec{x}^T\vec{\Sigma}^{-1}\vec{\lambda})\right] &= Cov\left(\vec{x}\sqrt{1-\tanh^2(\vec{x}^T \vec{\Sigma}^{-1}\vec{\lambda})}, \vec{x}\sqrt{1 -\tanh^2(\vec{x}^T \vec{\Sigma}^{-1}\vec{\lambda})}\right)
		\\&+\mathbb{E}_{\vec{\mu},S}[\vec{x}\sqrt{1-\tanh^2(\vec{x}^T \vec{\Sigma}^{-1}\vec{\lambda})}]\mathbb{E}_{\vec{\mu},S}[\vec{x}\sqrt{1-\tanh^2(\vec{x}^T \vec{\Sigma}^{-1}\vec{\lambda})}]^T,
	\end{align*}
	(where $\vec{x}$ follows a truncated mixture with parameters $\vec{\mu}, \vec{\Sigma}$ and truncated function $S$ of ``positive measure").
	
\end{proof}	

The second lemma is about the fact that the update rule of EM is well defined.
\begin{lemma}[Well defined]\label{lem:welldefined} Let $\lambda_t \in \mathbb{R}^d$. There exists a unique $\vec{\lambda'}$ such that
\[
\EE_{\vec{\mu},S}\left[\tanh(\vec{x}^T\vec{\Sigma}^{-1}\vec{\lambda}_t)\vec{x}^T\vec{\Sigma}^{-1}\right]=\EE_{\vec{\lambda'},S}\left[\vec{x}^T\vec{\Sigma}^{-1}\tanh(\vec{x}^T\vec{\Sigma}^{-1}\vec{\lambda'})\right].
\]
\end{lemma}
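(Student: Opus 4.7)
The plan is to recognize the given equation as the first-order optimality condition of a strictly concave, coercive scalar potential in $\vec{\lambda}'$; existence and uniqueness of the solution then follow from those of the unique maximizer. After post-multiplying both sides by $\vec{\Sigma}$ and transposing, the equation becomes $G(\vec{\lambda}')=\vec{c}$, where $G(\vec{\lambda}'):=\EE_{\vec{\lambda}',S}[\vec{x}\tanh(\vec{x}^T\vec{\Sigma}^{-1}\vec{\lambda}')]$ and $\vec{c}:=\EE_{\vec{\mu},S}[\vec{x}\tanh(\vec{x}^T\vec{\Sigma}^{-1}\vec{\lambda}_t)]$ is a fixed vector in $\mathbb{R}^d$. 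The potential I would introduce is the standard EM ``M-step objective''
\[
\tilde Q(\vec{\lambda}') := \vec{c}^T\vec{\Sigma}^{-1}\vec{\lambda}' - \tfrac{1}{2}(\vec{\lambda}')^T\vec{\Sigma}^{-1}\vec{\lambda}' - \log\alpha(\vec{\lambda}'),\qquad \alpha(\vec{\lambda}'):=\int f_{\vec{\lambda}'}(\vec{x})S(\vec{x})\,d\vec{x}.
\]

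The first step is to verify $G(\vec{\lambda}')=\vec{c}\iff\nabla\tilde Q(\vec{\lambda}')=\vec{0}$. Using the elementary identity $\nabla_{\vec{\lambda}'}f_{\vec{\lambda}'}(\vec{x})=f_{\vec{\lambda}'}(\vec{x})\,\vec{\Sigma}^{-1}(\vec{x}\tanh(\vec{x}^T\vec{\Sigma}^{-1}\vec{\lambda}')-\vec{\lambda}')$, a direct computation gives $\nabla\log\alpha(\vec{\lambda}')=\vec{\Sigma}^{-1}(G(\vec{\lambda}')-\vec{\lambda}')$ and hence $\nabla\tilde Q(\vec{\lambda}')=\vec{\Sigma}^{-1}(\vec{c}-G(\vec{\lambda}'))$, as desired. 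Uniqueness follows from strict concavity: $\nabla^2\tilde Q(\vec{\lambda}')=-\vec{\Sigma}^{-1}\nabla G(\vec{\lambda}')$, and reusing the decomposition from the proof of Lemma \ref{lem:localdiff}, $\nabla G(\vec{\lambda}')=(D(\vec{\lambda}')+C(\vec{\lambda}'))\vec{\Sigma}^{-1}$, where $D(\vec{\lambda}')=\EE_{\vec{\lambda}',S}[\vec{x}\vec{x}^T(1-\tanh^2(\vec{x}^T\vec{\Sigma}^{-1}\vec{\lambda}'))]$ and $C(\vec{\lambda}')$ is the covariance of $\vec{x}\tanh(\vec{x}^T\vec{\Sigma}^{-1}\vec{\lambda}')$ under $f_{\vec{\lambda}',S}$, both symmetric and positive definite (using that $S$ has positive measure and $|\tanh|<1$). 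Consequently $\nabla^2\tilde Q(\vec{\lambda}')=-\vec{\Sigma}^{-1}(D+C)\vec{\Sigma}^{-1}$ is symmetric negative definite, so $\tilde Q$ is strictly concave and admits at most one critical point.

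For existence, I would establish coercivity. Using the factorization $f_{\vec{\lambda}'}(\vec{x})=\mathcal{N}(\vec{x};\vec{0},\vec{\Sigma})\,e^{-(\vec{\lambda}')^T\vec{\Sigma}^{-1}\vec{\lambda}'/2}\cosh(\vec{x}^T\vec{\Sigma}^{-1}\vec{\lambda}')$, the quadratic term in $\tilde Q$ cancels against the matching term inside $-\log\alpha$, leaving the clean form
\[
\tilde Q(\vec{\lambda}')=\vec{c}^T\vec{\Sigma}^{-1}\vec{\lambda}'-\psi(\vec{\lambda}'),\qquad \psi(\vec{\lambda}'):=\log\int S(\vec{x})\mathcal{N}(\vec{x};\vec{0},\vec{\Sigma})\cosh(\vec{x}^T\vec{\Sigma}^{-1}\vec{\lambda}')\,d\vec{x},
\]
where $\psi$ is a convex log-partition function. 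For any unit direction $\vec{v}$, standard asymptotics of log-Laplace transforms give $\lim_{t\to\infty}\psi(t\vec{v})/t=\mathrm{ess\,sup}_{\vec{x}\in\mathrm{supp}(S)}|\vec{x}^T\vec{\Sigma}^{-1}\vec{v}|$ (possibly $+\infty$), while the strict bound $|\tanh|<1$ yields $|\vec{c}^T\vec{\Sigma}^{-1}\vec{v}|<\EE_{\vec{\mu},S}[|\vec{x}^T\vec{\Sigma}^{-1}\vec{v}|]\le\mathrm{ess\,sup}_{\vec{x}\in\mathrm{supp}(S)}|\vec{x}^T\vec{\Sigma}^{-1}\vec{v}|$. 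A compactness argument on the unit sphere upgrades this to $\tilde Q(\vec{\lambda}')\to-\infty$ uniformly as $\|\vec{\lambda}'\|\to\infty$; strict concavity plus coercivity then force $\tilde Q$ to attain its maximum at a unique $\vec{\lambda}'$, which by the first-order condition is the unique solution to $G(\vec{\lambda}')=\vec{c}$.

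The main technical obstacle will be the coercivity step: depending on whether $S$ is bounded, $\psi$ may grow linearly or quadratically along different directions, and the comparison with the linear term $\vec{c}^T\vec{\Sigma}^{-1}\vec{\lambda}'$ has to be carried out uniformly over the unit sphere. The essential slack that makes the comparison work in all cases is the strict bound $|\tanh|<1$ on a set of positive $f_{\vec{\mu},S}$-measure.
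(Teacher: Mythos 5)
Your argument is correct, and it is worth contrasting with the paper's. For uniqueness the two proofs rest on the same computation (the positive definiteness established in Lemma \ref{lem:localdiff}), but they package it differently: the paper treats $H(\vec{w})=\vec{\Sigma}\,\EE_{\vec{w},S}[\vec{x}^T\vec{\Sigma}^{-1}\tanh(\vec{x}^T\vec{\Sigma}^{-1}\vec{w})]$ directly as a strictly monotone map, integrating $\lambda_{\min}(\nabla H)$ along the segment joining two putative solutions, whereas you first integrate the vector field into the M-step potential $\tilde Q$ (legitimate, since $\nabla H$ is symmetric) and invoke strict concavity; these are essentially equivalent, and your verification that $\nabla\tilde Q=\vec{\Sigma}^{-1}(\vec{c}-G)$ and $\nabla^2\tilde Q=-\vec{\Sigma}^{-1}(D+C)\vec{\Sigma}^{-1}$ is accurate. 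The genuine added value is in the existence half: the paper's lemma proof does not address existence at all and defers it to a remark that argues informally that a concave objective with no interior critical point would force $\norm{\vec{\lambda}_{t+1}}_2=\infty$, while you supply the missing coercivity argument, using the factorization $f_{\vec{\lambda}'}(\vec{x})=\mathcal{N}(\vec{x};\vec{0},\vec{\Sigma})e^{-\frac{1}{2}(\vec{\lambda}')^T\vec{\Sigma}^{-1}\vec{\lambda}'}\cosh(\vec{x}^T\vec{\Sigma}^{-1}\vec{\lambda}')$ to reduce $\tilde Q$ to a linear term minus a log-partition function $\psi$, and comparing asymptotic slopes: $\psi(t\vec{v})/t\to \operatorname{ess\,sup}|\vec{x}^T\vec{\Sigma}^{-1}\vec{v}|$ while $|\vec{c}^T\vec{\Sigma}^{-1}\vec{v}|$ is strictly smaller because $|\tanh|<1$ and the support of $S$ has positive Lebesgue measure. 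The only step you leave compressed is the upgrade from ray-wise divergence to bounded superlevel sets; this is standard for finite concave functions (the recession function of $-\tilde Q$ is positive in every nonzero direction, hence all level sets share the trivial recession cone and are bounded), so no gap remains, but it deserves a sentence rather than the phrase ``a compactness argument.'' Net effect: same mechanism for uniqueness, and a more rigorous treatment of existence than the paper provides.
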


\begin{proof}
	
	Let $H(\vec{w}) = \vec{\Sigma}\EE_{\vec{w},S}\left[\vec{x}^T\vec{\Sigma}^{-1}\tanh(\vec{x}^T\vec{\Sigma}^{-1}\vec{w})\right]$. In the Lemma \ref{lem:localdiff} we showed that $\nabla_{\vec{w}}  H(\vec{w})$ is positive definite since $S$ is of positive measure.
	Assume there exist $\lambda, \tilde{\lambda}$ so that $H(\vec{\lambda}) = H(\vec{\tilde{\lambda}})$. Let $\vec{y}_t = t \vec{\lambda} + (1-t) \vec{\tilde{\lambda}}$ for $t \in [0,1]$. Using standard techniques from calculus and that $\nabla_{\vec{w}}  H(\vec{w})$ is symmetric we get that
	\begin{equation}
		(\vec{\lambda} - \vec{\tilde{\lambda}})^T (H(\vec{\lambda}) - H(\vec{\tilde{\lambda}})) \geq \min_{t \in [0,1]}\lambda_{\min} (\nabla_{\vec{w}}  H(\vec{w}) \big\vert_{\vec{w} = \vec{y}_t}) \norm{\vec{\lambda} - \vec{\tilde{\lambda}}}^2,
	\end{equation}
	where $\lambda_{\min}(A)$ denotes the minimum eigenvalue of matrix $A$. It is clear that the left hand side is zero, and also the matrix $\nabla_{\vec{w}}  H(\vec{w}) \big\vert_{\vec{w} = \vec{y}_t}$ has all its eigenvalues positive for every $t \in [0,1]$ (using the fact that $\nabla_{\vec{w}}  H(\vec{w})$ is positive definite for all $w$ from the proof of Lemma \ref{lem:localdiff} above). We conclude that $\vec{\lambda} = \vec{\tilde{\lambda}}$.
	
\end{proof}

\begin{remark} In this remark, we would like to argue why there is always a $\vec{\lambda}_{t+1}$ such that \[
\EE_{\vec{\mu},S}\left[\tanh(\vec{x}^T\vec{\Sigma}^{-1}\vec{\lambda}_t)\vec{x}^T\vec{\Sigma}^{-1}\right]=\EE_{\vec{\lambda}_{t+1},S}\left[\tanh(\vec{x}^T\vec{\Sigma}^{-1}\vec{\lambda}_{t+1})\vec{x}^T\vec{\Sigma}^{-1}\right].
\]
The reason is that $\vec{\lambda}_{t+1}$ is chosen to maximize a particular quantity. If the gradient of that quantity has no roots, it means that $\norm{\vec{\lambda}_{t+1}}_2$ should be infinity. But the quantity is a concave function (in the proof of Lemma \ref{lem:localdiff} we showed that $- \nabla_{\vec{\lambda}} \EE_{\vec{\lambda},S}\left[\vec{x}^T\tanh(\vec{x}^T\vec{\Sigma}^{-1}\vec{\lambda})\right]\vec{\Sigma}^{-1}$ is negative definite which is the Hessian of the quantity to be maximized), so the maximum should be attained in the interior (i.e., $\lambda_{t+1}$ cannot have $\ell_2$ norm infinity).
\end{remark}

\section{Single Dimensional Convergence}\label{sec:single}
In this section we provide a proof for the qualitative part of Theorem \ref{thm:single}. We mention first an important theorem that will be used for the proofs of both qualitative parts of Theorems \ref{thm:single} and \ref{thm:multi}
\begin{theorem}[FKG inequality]\cite{FKG71}\label{thm:FKG}
Let $f,g : \mathbb{R} \to \mathbb{R}$ be two monotonically increasing functions and $\nu$ any probability measure on $\mathbb{R}$. It holds that
\begin{equation}
\int_{\mathbb{R}} f(x)g(x) d\nu \geq \int_{\mathbb{R}} f(x) d\nu \int_{\mathbb{R}} g(x) d\nu.
\end{equation}
Moreover, in case there is positive mass (of the product measure $\nu \otimes \nu$) on the case $(f(x_1)-f(x_2))(g(x_1)-g(x_2)) > 0$ (where $x_1,x_2$ are two independent samples from $\nu$) then the above inequality is strict.
\end{theorem}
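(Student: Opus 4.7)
The plan is to use the classical symmetrization (or ``doubling'') argument. I would introduce two independent samples $x_1, x_2 \sim \nu$ and exploit the fact that both $f$ and $g$ are monotonically increasing, so that the differences $f(x_1)-f(x_2)$ and $g(x_1)-g(x_2)$ always carry the same sign: both are nonnegative when $x_1 \geq x_2$ and both are nonpositive when $x_1 \leq x_2$. Consequently, the product
\[
(f(x_1)-f(x_2))(g(x_1)-g(x_2)) \geq 0
\]
pointwise on $\mathbb{R}^2$, and therefore its integral against the product measure $\nu \otimes \nu$ is also nonnegative.

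Next I would expand this double integral by linearity. Using Fubini's theorem (and the fact that $\nu$ is a probability measure), the two ``diagonal'' terms $f(x_1)g(x_1)$ and $f(x_2)g(x_2)$ each integrate to $\int_{\mathbb{R}} f(x)g(x)\,d\nu$, while the two ``cross'' terms $f(x_1)g(x_2)$ and $f(x_2)g(x_1)$ each integrate to $\int_{\mathbb{R}} f\,d\nu \cdot \int_{\mathbb{R}} g\,d\nu$. Collecting gives
\[
2\int_{\mathbb{R}} f(x)g(x)\,d\nu \;-\; 2\int_{\mathbb{R}} f(x)\,d\nu \int_{\mathbb{R}} g(x)\,d\nu \;\geq\; 0,
\]
which, after dividing by $2$, is exactly the claimed inequality.

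For the strict version: since the integrand $(f(x_1)-f(x_2))(g(x_1)-g(x_2))$ is nonnegative everywhere, whenever the product measure $\nu \otimes \nu$ assigns positive mass to the event $\{(f(x_1)-f(x_2))(g(x_1)-g(x_2)) > 0\}$, the double integral is strictly positive, which in turn yields strict inequality after the expansion above. The only potential obstacle is ensuring the integrals are finite so that Fubini applies and the expansion is valid; this is implicit in the statement (otherwise the inequality is to be read in the extended-real sense, with the left-hand side being $+\infty$ whenever the right-hand side is).
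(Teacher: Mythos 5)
Your proof is correct and uses the same symmetrization (doubling) argument that the paper relies on throughout: the paper cites Theorem \ref{thm:FKG} to \cite{FKG71} without a standalone proof, but the identical two-i.i.d.-copies expansion of $\mathbb{E}[(f(x_1)-f(x_2))(g(x_1)-g(x_2))]$ is exactly what it carries out in Lemmas \ref{lem:stability1}, \ref{lem:threefixedpoints}, and \ref{lem:FKGrevise}. Your remark on integrability/Fubini is a reasonable caveat that the paper leaves implicit.
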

We first perform stability analysis for the fixed points $-\mu,0,\mu$ which is captured in the next Lemma.

\begin{lemma}[Stability in single-dimensional]\label{lem:stability1} It holds that \[\left|\frac{d \lambda_{t+1}}{d \lambda_t} \Big\vert_{\lambda_t=0}\right| >1 \textrm{ and }\left|\frac{d \lambda_{t+1}}{d \lambda_t} \Big\vert_{\lambda_t = \mu, -\mu}\right|<1.\]
\end{lemma}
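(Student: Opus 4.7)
The plan is to evaluate the derivative formula (\ref{eqn:single-ratio}) at each fixed point by plugging in parts 1 and 3 of Lemma \ref{lem:derivatives} (in their single-dimensional forms). Concretely, at any fixed point $\gamma \in \{-\mu, 0, \mu\}$ (where $\lambda_{t+1}=\lambda_t=\gamma$) the ratio becomes
\[
\frac{d \lambda_{t+1}}{d \lambda_t}\Big\vert_{\gamma}
=\frac{\EE_{\mu,S}\!\left[x^2\left(1-\tanh^2(x\gamma/\sigma^2)\right)\right]}
{\EE_{\gamma,S}\!\left[x^2\right]-\left(\EE_{\gamma,S}\!\left[x\tanh(x\gamma/\sigma^2)\right]\right)^2},
\]
after the $1/\sigma^2$ factors cancel. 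I would first observe that both numerator and denominator are strictly positive: the numerator is positive because $1-\tanh^2>0$ and $S$ has positive Lebesgue measure, while positivity of the denominator follows from the single-dimensional instance of the argument already given in Lemma \ref{lem:localdiff}, where it equals the variance of $x\sqrt{1-\tanh^2(x\gamma/\sigma^2)}$ plus the square of its mean under the truncated mixture with parameter $\gamma$.

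\textbf{Fixed point $\gamma=0$.} Here the $\tanh$ terms vanish and the ratio collapses to $\EE_{\mu,S}[x^2]/\EE_{0,S}[x^2]$. I would use the identity
\[
\tfrac{1}{2}\mathcal{N}(x;\mu,\sigma^2)+\tfrac{1}{2}\mathcal{N}(x;-\mu,\sigma^2)\;=\;e^{-\mu^2/(2\sigma^2)}\,\mathcal{N}(x;0,\sigma^2)\cosh(x\mu/\sigma^2)
\]
to re-express both expectations against the reference probability measure $\nu$ proportional to $\mathbf{1}_S(x)\mathcal{N}(x;0,\sigma^2)$. The target inequality $\EE_{\mu,S}[x^2]>\EE_{0,S}[x^2]$ then becomes
\[
\int x^2\cosh(x\mu/\sigma^2)\,d\nu \;>\; \Big(\!\int x^2\,d\nu\Big)\Big(\!\int \cosh(x\mu/\sigma^2)\,d\nu\Big),
\]
i.e., positive correlation of $x^2$ and $\cosh(x\mu/\sigma^2)$ under $\nu$. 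Both functions are even and strictly increasing in $|x|$, so after the change of variables $y=x^2$ they become monotonically increasing functions of $y$ on $[0,\infty)$; Theorem \ref{thm:FKG} applied to the pushforward of $\nu$ then gives the inequality, with strictness (and hence the strict bound $>1$) following because the pushforward is not a single atom whenever $S$ has positive Lebesgue measure.

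\textbf{Fixed points $\gamma=\pm\mu$.} By the evenness of the mixture density in its mean parameter, the two cases yield the same absolute value, so it suffices to handle $\gamma=\mu$. Expanding the numerator as $\EE_{\mu,S}[x^2]-\EE_{\mu,S}[x^2\tanh^2(x\mu/\sigma^2)]$ and subtracting from the denominator gives
\[
\text{Denominator} - \text{Numerator}
=\EE_{\mu,S}\!\left[x^2\tanh^2(x\mu/\sigma^2)\right]-\left(\EE_{\mu,S}\!\left[x\tanh(x\mu/\sigma^2)\right]\right)^2
=\operatorname{Var}_{\mu,S}\!\left(x\tanh(x\mu/\sigma^2)\right)\geq 0.
\]
Strictness holds because $Y:=x\tanh(x\mu/\sigma^2)$ is strictly increasing in $|x|$ for $\mu\neq 0$, hence non-constant on the positive-measure support of the truncated mixture. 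Combined with positivity of the numerator, this forces the ratio into $(0,1)$.

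\textbf{Main obstacle.} The delicate step is the strictness of FKG at $\gamma=0$ for a completely arbitrary measurable $S$ (possibly disconnected or far from symmetric): one has to rule out the degenerate case where the pushforward of $\nu$ under $x\mapsto x^2$ collapses to a point mass, which reduces to the simple observation that no subset of $\mathbb{R}$ of positive Lebesgue measure can be contained in $\{-c,c\}$ for a single $c$. The analysis at $\pm\mu$, by contrast, is essentially a variance inequality whose strictness is immediate from monotonicity of $x\tanh(x\mu/\sigma^2)$ in $|x|$.
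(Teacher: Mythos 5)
Your proposal is correct, and the treatment of the fixed points $\pm\mu$ coincides with the paper's: both reduce to the observation that denominator minus numerator equals $\operatorname{Var}_{\mu,S}(x\tanh(x\mu/\sigma^2))$, which is strictly positive because $S$ has positive measure. Where you genuinely diverge is at $\gamma=0$. The paper proves $\EE_{\mu,S}[x^2]>\EE_{0,S}[x^2]$ by applying the Mean Value Theorem to $t\mapsto\EE_{t\mu,S}[x^2]$, obtaining an intermediate parameter $\xi\in(0,1)$, and then running a two-sample FKG-style swapping argument to show that the derivative $\frac{1}{\sigma^2}\bigl(\EE_{\xi\mu,S}[x^3\tanh(\xi\mu x)]-\EE_{\xi\mu,S}[x^2]\,\EE_{\xi\mu,S}[x\tanh(\xi\mu x)]\bigr)$ is positive. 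You instead use the exact tilting identity $\tfrac{1}{2}\mathcal{N}(x;\mu,\sigma^2)+\tfrac{1}{2}\mathcal{N}(x;-\mu,\sigma^2)=e^{-\mu^2/(2\sigma^2)}\mathcal{N}(x;0,\sigma^2)\cosh(x\mu/\sigma^2)$ to rewrite both expectations against the single reference measure $\nu\propto S(x)\mathcal{N}(x;0,\sigma^2)\,dx$, turning the claim into a direct positive-correlation statement between $x^2$ and $\cosh(x\mu/\sigma^2)$, to which Theorem \ref{thm:FKG} applies after the pushforward $y=x^2$. This is a cleaner one-shot argument: it dispenses with the Mean Value Theorem and the auxiliary parameter $\xi$ entirely, and the strictness reduces to the observation that an absolutely continuous measure of positive mass cannot be carried by $\{-c,c\}$. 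The trade-off is that the cosh-tilting trick is specific to the value $\gamma=0$ (it exploits that $\EE_{0,S}$ is exactly the untilted measure), whereas the paper's MVT-plus-FKG machinery is the same template it reuses in Lemma \ref{lem:threefixedpoints} and Lemma \ref{lem:stability2b} to compare expectations at two arbitrary nonzero parameters; so the paper's route, while slightly clunkier here, is the one that generalizes. Both arguments are valid and rest on the same underlying correlation inequality.
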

\begin{proof} Using Lemma \ref{lem:derivatives} and Equation (\ref{eqn:single-ratio}) it holds that
	\begin{equation}\label{eq:singlederivative0}
	\frac{d \lambda_{t+1}}{d \lambda_t} \Big\vert_{\lambda_t=0} = \frac{\mathbb{E}_{\mu,S}[x^2]}{\mathbb{E}_{0,S}[x^2]}.
	\end{equation}
	We consider the function $\mathbb{E}_{t\mu,S}[x^2]$ w.r.t variable $t$. We use the Mean Value theorem and we get that there exists $\xi \in (0,1)$ such that
	\begin{align}
		\mathbb{E}_{\mu,S}[x^2] - \mathbb{E}_{0,S}[x^2] &= \frac{d \mathbb{E}_{t\mu,S}[x^2]}{dt}\big\vert _{t = \xi}
		\\& = \frac{1}{\sigma^2} \left[ \mathbb{E}_{\xi\mu,S}\left[x^3\tanh(\xi\mu x) \right] - \mathbb{E}_{\xi\mu,S}\left[x^2 \right]\mathbb{E}_{\xi\mu,S}\left[x\tanh(\xi\mu x) \right] \right]
	\end{align}
	We shall show that \[\mathbb{E}_{\xi\mu,S}\left[x^3\tanh(\xi\mu x) \right] > \mathbb{E}_{\xi\mu,S}\left[x^2 \right]\mathbb{E}_{\xi\mu,S}\left[x\tanh(\xi\mu x) \right].\]
	
	The proof below is inspired by the proof of FKG inequality (because $x^2, x\tanh(x\xi\mu)$ are increasing for $x \geq 0$ and decreasing for $x < 0$).
	Let $x_1,x_2$ be two independent and identically distributed random variables that follow the distribution of $f_{\xi\mu,S}(x)$. Assume w.l.o.g that $|x_1|>|x_2|$ then it holds that $x_1^2 > x_2^2$ and $x_1 \tanh(x_1 \xi\mu) > x_2 \tanh(x_2 \xi\mu)$ (since $\mu>0$).
	Therefore we get that $(x_1^2 - x_2^2)(x_1 \tanh(x_1 \xi\mu)- x_2 \tanh(x_2 \xi\mu))>0$ (except for a measure zero set where it might be equality).
	
	We conclude that \[\mathbb{E}_{\xi\mu,S}[(x_1^2 - x_2^2)(x_1 \tanh(x_1 \xi\mu)- x_2 \tanh(x_2 \xi\mu))]>0.\]
	From independence and the fact that $x_1,x_2$ are identically distributed, we get that
	\[\mathbb{E}_{\xi\mu,S}[x_1^2 x_2 \tanh(x_2 \xi\mu)] = \mathbb{E}_{\xi\mu,S}[x_2^2 x_1 \tanh(x_1 \xi\mu)] = \mathbb{E}_{\xi\mu,S}[x_1^2] \mathbb{E}_{\xi\mu,S}[x_1 \tanh(x_1 \xi\mu)]\]
	and also \[\mathbb{E}_{\xi\mu,S}[x_1^3 \tanh(x_1 \xi\mu)] = \mathbb{E}_{\xi\mu,S}[x_2^3 \tanh(x_2 \xi\mu)].\]
	
	It occurs that $\mathbb{E}_{\xi\mu,S}\left[x_1^3\tanh(\xi\mu x_1) \right] > \mathbb{E}_{\xi\mu,S}\left[x_1^2 \right]\mathbb{E}_{\xi\mu,S}\left[x_1\tanh(\xi\mu x_1) \right]$\\
	
	thus,
	$\mathbb{E}_{\mu,S}[x^2] > \mathbb{E}_{0,S}[x^2]$ (i.e., the ratio (\ref{eq:singlederivative0}) is greater than 1), namely $0$ is a repelling fixed point.
	
	Moreover, using Lemma \ref{lem:derivatives} and Equation (\ref{eqn:single-ratio}) it holds that
	\begin{equation}\label{eq:singlederivative1}
	\frac{d \lambda_{t+1}}{d \lambda_t} \Big\vert_{\lambda_t=\mu} = \frac{\mathbb{E}_{\mu,S}\left[\frac{x^2}{\sigma^2}(1 - \tanh^2(\frac{x\mu}{\sigma^2}))\right]}{\mathbb{E}_{\mu,S}\left[\frac{x^2}{\sigma^2}\right] - \mathbb{E}_{\mu,S}^2\left[\frac{x}{\sigma}\tanh(\frac{x\mu}{\sigma})\right]}.
	\end{equation}
	Since $S$ (function or set) has positive measure we get that the variance of the random variable $\frac{x}{\sigma}\tanh(\frac{x\mu}{\sigma})$ is positive (otherwise the random variable would be constant with probability one and hence $S$ would be of zero measure), thus
	\begin{equation}
	\mathbb{E}_{\mu,S}\left[\frac{x^2}{\sigma^2}\tanh^2\left(\frac{x\mu}{\sigma}\right)\right] > \mathbb{E}^2_{\mu,S}\left[\frac{x}{\sigma}\tanh\left(\frac{x\mu}{\sigma}\right)\right]
	\end{equation}
	or equivalently
	\begin{equation}\label{eq:help}
	\mathbb{E}_{\mu,S}\left[\frac{x^2}{\sigma^2}\right] - \mathbb{E}_{\mu,S}\left[\frac{x^2}{\sigma^2}\tanh^2\left(\frac{x\mu}{\sigma}\right)\right] < \mathbb{E}_{\mu,S}\left[\frac{x^2}{\sigma^2}\right] - \mathbb{E}^2_{\mu,S}\left[\frac{x}{\sigma}\tanh\left(\frac{x\mu}{\sigma}\right)\right].
	\end{equation}
	By inequality (\ref{eq:help}) we conclude that the ratio (\ref{eq:singlederivative1}) is less than one, hence fixed point $\mu$ is attracting.
	The same proof as in the case for $\mu$ works for the fixed point $-\mu$.
\end{proof}
Next, we provide a proof that for the case $d=1$ (single-dimensional), the update rule of EM has exactly three fixed points ($0,\mu,-\mu$).
\begin{lemma}[Only 3 fixed points for single-dimensional]\label{lem:threefixedpoints}
We consider the update rule of the EM method for the single dimensional case (\ref{eq:EM-rule}). The update rule has only $-\mu,0,\mu$ as fixed points.
\end{lemma}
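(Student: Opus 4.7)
The plan is to rephrase the fixed-point condition as the equality of a single-variable function evaluated at two points, and then prove strict monotonicity of that function via an FKG argument in the spirit of Lemma~\ref{lem:stability1}. A fixed point of the dynamics \eqref{eq:dyn} is a $\lambda$ with $h(\lambda,\lambda)=0$; in one dimension this reduces to
\[
\EE_{\mu,S}\!\left[x\tanh(x\lambda/\sigma^{2})\right]=\EE_{\lambda,S}\!\left[x\tanh(x\lambda/\sigma^{2})\right].
\]
The root $\lambda=0$ is immediate. For $\lambda\neq 0$ I would introduce the auxiliary function $\Phi_\lambda(\nu):=\EE_{\nu,S}\!\left[x\tanh(x\lambda/\sigma^{2})\right]$. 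Since $f_\nu=f_{-\nu}$, $\Phi_\lambda$ is even in $\nu$, and the fixed-point equation becomes $\Phi_\lambda(\lambda)=\Phi_\lambda(\mu)$. The claim then follows once $\Phi_\lambda$ is shown to be strictly monotone on $(0,\infty)$: evenness forces $|\lambda|=\mu$, leaving only $\lambda\in\{\mu,-\mu\}$ apart from the trivial fixed point.

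To differentiate $\Phi_\lambda$, I would use the log-derivative identity $\partial_\nu \log f_\nu(x)=\sigma^{-2}(x\tanh(x\nu/\sigma^{2})-\nu)$, obtained from the factorization $f_\nu(x)\propto e^{-(x^{2}+\nu^{2})/(2\sigma^{2})}\cosh(x\nu/\sigma^{2})$. Applying the quotient rule to $\Phi_\lambda=N(\nu)/D(\nu)$ (numerator and denominator of the truncated expectation) and observing that the bare $\nu$ terms cancel gives
\[
\Phi_\lambda'(\nu)=\sigma^{-2}\,\mathrm{Cov}_{\nu,S}\!\left[x\tanh(x\lambda/\sigma^{2}),\;x\tanh(x\nu/\sigma^{2})\right].
\]

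Now the identity $x\tanh(xa/\sigma^{2})=\mathrm{sgn}(a)\,|x|\tanh(|x|\,|a|/\sigma^{2})$ shows that, up to the fixed scalars $\mathrm{sgn}(\lambda)$ and $\mathrm{sgn}(\nu)$, both factors inside the covariance are strictly increasing non-negative functions of $|x|$. Pushing the truncated measure forward along $x\mapsto|x|$ and applying Theorem~\ref{thm:FKG} yields
\[
\Phi_\lambda'(\nu)=\frac{\mathrm{sgn}(\lambda)\,\mathrm{sgn}(\nu)}{\sigma^{2}}\,\mathrm{Cov}\!\left[|x|\tanh(|x|\,|\lambda|/\sigma^{2}),\;|x|\tanh(|x|\,|\nu|/\sigma^{2})\right],
\]
which is nonzero for $\nu\neq 0$. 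Strictness is the content of the ``moreover'' clause: the pushforward of $f_{\nu,S}$ under $|x|$ cannot be a Dirac mass, since that would force $S$ to be concentrated on a two-point antipodal set and hence be Lebesgue-null, contradicting the positive-measure hypothesis. Therefore $\Phi_\lambda'(\nu)$ has constant sign $\mathrm{sgn}(\lambda)$ on $(0,\infty)$, so $\Phi_\lambda$ is strictly monotone there; combined with evenness, $\Phi_\lambda(\lambda)=\Phi_\lambda(\mu)$ forces $|\lambda|=\mu$, giving exactly the three fixed points $\{-\mu,0,\mu\}$.

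The main hurdle is the strict FKG inequality: a weak covariance bound only shows that $\Phi_\lambda$ is non-increasing or non-decreasing, which is not enough to rule out spurious fixed points. The positive-measure assumption on $S$ is exactly what prevents the pushforward under $|x|$ from collapsing to a single atom, and hence what separates $\Phi_\lambda'$ from zero on $(0,\infty)$.
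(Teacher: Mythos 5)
Your proposal is correct and follows essentially the same route as the paper: both reduce the fixed-point equation to the statement that $\nu\mapsto\EE_{\nu,S}\left[x\tanh(x\lambda/\sigma^{2})\right]$ has derivative $\sigma^{-2}\,\mathrm{Cov}_{\nu,S}\left[x\tanh(x\lambda/\sigma^{2}),\,x\tanh(x\nu/\sigma^{2})\right]$ (the paper's Lemma~\ref{lem:derivatives}, item 2, combined with the mean value theorem), and both establish strict positivity of that covariance by the FKG two-copy argument applied to functions increasing in $|x|$, with the positive-measure hypothesis on $S$ supplying strictness. Your packaging via evenness and global monotonicity of $\Phi_\lambda$ on $(0,\infty)$, rather than the paper's contradiction at a single mean-value point $\xi$, is only a cosmetic difference.
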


\begin{proof}
Let $\mu>\lambda>0$ and assume $\lambda$ is a fixed point of the update rule of EM (\ref{eq:EM-rule}). Set $G(\mu,\lambda,S) = \mathbb{E}_{\mu,S}[\frac{x}{\sigma^2}\tanh(\frac{x\lambda}{\sigma^2})]$. It holds that $G(\mu,\lambda,S) = G(\lambda,\lambda,S)$ (by definition of $\lambda$).

It follows from Mean Value theorem that there exists a $\xi \in (\lambda, \mu)$ so that (using also Lemma \ref{lem:derivatives})
\begin{align*}
\frac{G(\mu,\lambda,S) - G(\lambda,\lambda,S)}{\mu - \lambda} = &\mathbb{E}_{\xi,S}\left[\frac{x^2}{\sigma^2}\tanh\left(\frac{x\xi}{\sigma^2}\right)\tanh\left(\frac{x\lambda}{\sigma^2}\right)\right]-\\
&\mathbb{E}_{\xi,S}\left[\frac{x}{\sigma}\tanh\left(\frac{x\xi}{\sigma^2}\right)\right]\cdot \mathbb{E}_{\xi,S}\left[\frac{x}{\sigma}\tanh\left(\frac{x\lambda}{\sigma^2}\right)\right].
\end{align*}

We get that $\frac{x}{\sigma}\tanh(\frac{x\lambda}{\sigma^2}), \frac{x}{\sigma}\tanh(\frac{x\xi}{\sigma^2})$ are increasing functions for $x\geq 0$ and decreasing for $x<0$, so inspired by the proof of FKG inequality \ref{thm:FKG} we shall show that $G(\mu,\lambda,S) - G(\lambda,\lambda,S)>0$ (and reach contradiction).

Let $x_1,x_2$ be two independent and identically distributed random variables that follow the distribution of $f_{\xi,S}(x)$. Assume w.l.o.g that $|x_1|>|x_2|$ then it holds that $\frac{x_1}{\sigma} \tanh(\frac{x_1\lambda}{\sigma^2}) > \frac{x_2}{\sigma} \tanh(\frac{x_2\lambda}{\sigma^2})$ and $\frac{x_1}{\sigma} \tanh(\frac{x_1\xi}{\sigma^2}) > \frac{x_2}{\sigma} \tanh(\frac{x_2\xi}{\sigma^2})$.
Therefore we get that $$\left(\frac{x_1}{\sigma} \tanh\left(\frac{x_1\lambda}{\sigma^2}\right)- \frac{x_2}{\sigma} \tanh\left(\frac{x_2\lambda}{\sigma^2}\right)\right)\cdot \left(\frac{x_1}{\sigma} \tanh\left(\frac{x_1\xi}{\sigma^2}\right)- \frac{x_2}{\sigma} \tanh\left(\frac{x_2\xi}{\sigma^2}\right)\right)>0$$ (except for a measure zero set where it might be equality).

We conclude that $$\mathbb{E}_{\xi,S}\left[\left(\frac{x_1}{\sigma} \tanh\left(\frac{x_1\lambda}{\sigma^2}\right)- \frac{x_2}{\sigma} \tanh\left(\frac{x_2\lambda}{\sigma^2}\right)\right)\cdot \left(\frac{x_1}{\sigma} \tanh\left(\frac{x_1\xi}{\sigma^2}\right)- \frac{x_2}{\sigma} \tanh\left(\frac{x_2\xi}{\sigma^2}\right)\right)\right]>0.$$
From independence and the fact that $x_1,x_2$ are identically distributed, we get that
\[\mathbb{E}_{\xi,S}\left[\frac{x_1 x_2}{\sigma^2} \tanh\left(\frac{x_1 \lambda}{\sigma^2}\right) \tanh\left(\frac{x_2 \xi}{\sigma^2}\right)\right] = \mathbb{E}_{\xi,S}\left[\frac{x_1 x_2}{\sigma^2} \tanh\left(\frac{x_1 \xi}{\sigma^2}\right) \tanh\left(\frac{x_2 \lambda}{\sigma^2}\right)\right]\]
and also \[\mathbb{E}_{\xi,S}\left[\frac{x_1^2}{\sigma^2} \tanh\left(\frac{x_1 \xi}{\sigma^2}\right) \tanh\left(\frac{x_1 \lambda}{\sigma^2}\right)\right] = \mathbb{E}_{\xi,S}\left[\frac{x_2^2}{\sigma^2} \tanh\left(\frac{x_2 \xi}{\sigma^2}\right) \tanh\left(\frac{x_2 \lambda}{\sigma^2}\right)\right].\]

We conclude that $G(\mu,\lambda,S) - G(\lambda,\lambda,S)> 0$. However by assumption that $\lambda$ is a fixed point, it must hold that $G(\mu,\lambda,S) - G(\lambda,\lambda,S)=0$ (contradiction).

The same proof works when $\lambda > \mu >0$. In case $\lambda<0$, the proof is exactly the same with before, using $-\mu$ instead of $\mu$ (with opposite direction on the inequality).
\end{proof}

Using the generic proof of Theorem 2, page 6 from \cite{LPPSJR17} paper, the fact that EM converges to stationary points (which are fixed points of the update rule of EM) and combining it with Lemmas \ref{lem:stability1}, \ref{lem:threefixedpoints} and the Lemma \ref{lem:localdiff} about local diffeomorphism of the update rule, the proof of the qualitative part of Theorem \ref{thm:single} follows.

\section{Multi-Dimensional Convergence}\label{sec:multi}
In this section we prove the qualitative part of Theorem \ref{thm:multi}. The techniques follow similar lines as in the single-dimensional case. We will state the Lemmas that deviate technically from those in Section \ref{sec:single}. The two lemmas below provide stability analysis for the fixed points $\vec{-\mu},\vec{0},\vec{\mu}$.
\begin{lemma}[Stability of $\vec{\mu}$ in multi-dimensional]\label{lem:stability2a} It holds that the spectral radius of $$\nabla_{\vec{\lambda}}\EE_{\vec{\lambda},S}\left[\vec{x}^T\tanh(\vec{x}^T\vec{\Sigma}^{-1}\vec{\lambda})\right]^{-1} \Big\vert_{\vec{\lambda}=\vec{\mu}}\cdot\nabla_{\vec{\lambda}}\EE_{\vec{\mu},S}\left[\tanh(\vec{x}^T\vec{\Sigma}^{-1}\vec{\lambda})\vec{x}^T\right]\Big\vert_{\lambda=\vec{\mu}}$$ (i.e., the Jacobian of the update rule of EM method computed at true mean $\vec{\mu}$) is less than one.
\end{lemma}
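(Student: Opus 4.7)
The plan is to reduce the Jacobian at $\vec{\mu}$ to a product of two symmetric matrices whose Löwner ordering controls the spectrum. Applying Lemma \ref{lem:derivatives}(1) at $\vec{\lambda}=\vec{\mu}$ yields
\[
\nabla_{\vec{\lambda}}\EE_{\vec{\lambda},S}\bigl[\vec{x}^T\tanh(\vec{x}^T\vec{\Sigma}^{-1}\vec{\lambda})\bigr]\Big\vert_{\vec{\lambda}=\vec{\mu}} = \vec{\Sigma}^{-1}A,
\]
with $A := \EE_{\vec{\mu},S}[\vec{x}\vec{x}^T] - \EE_{\vec{\mu},S}[\vec{x}\tanh(\vec{x}^T\vec{\Sigma}^{-1}\vec{\mu})]\EE_{\vec{\mu},S}[\vec{x}\tanh(\vec{x}^T\vec{\Sigma}^{-1}\vec{\mu})]^T$, while Lemma \ref{lem:derivatives}(3) at $\vec{\lambda}=\vec{\mu}$ yields
\[
\nabla_{\vec{\lambda}}\EE_{\vec{\mu},S}\bigl[\vec{x}^T\tanh(\vec{x}^T\vec{\Sigma}^{-1}\vec{\lambda})\bigr]\Big\vert_{\vec{\lambda}=\vec{\mu}} = \vec{\Sigma}^{-1}B,
\]
with $B := \EE_{\vec{\mu},S}[\vec{x}\vec{x}^T(1-\tanh^2(\vec{x}^T\vec{\Sigma}^{-1}\vec{\mu}))]$. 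Plugging these into equation (\ref{eqn:multi-ratio}), the two prefactors $\vec{\Sigma}^{-1}$ cancel and the Jacobian at $\vec{\mu}$ collapses to $J = A^{-1}B$, where both $A$ and $B$ are symmetric.

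Next I would establish that $A \succ B \succ 0$ in the Löwner order. Strict positive definiteness of $B$ is exactly the computation carried out in the proof of Lemma \ref{lem:localdiff}: write $B = Cov_{\vec{\mu},S}(\vec{y}) + \EE_{\vec{\mu},S}[\vec{y}]\EE_{\vec{\mu},S}[\vec{y}]^T$ for $\vec{y} := \vec{x}\sqrt{1-\tanh^2(\vec{x}^T\vec{\Sigma}^{-1}\vec{\mu})}$, and use $1-\tanh^2 > 0$ together with the positive-measure hypothesis on $S$ to rule out any lower-dimensional degeneracy. For the ordering, a direct algebraic simplification gives
\[
A-B = \EE_{\vec{\mu},S}[\vec{x}\vec{x}^T\tanh^2(\vec{x}^T\vec{\Sigma}^{-1}\vec{\mu})] - \EE_{\vec{\mu},S}[\vec{x}\tanh(\vec{x}^T\vec{\Sigma}^{-1}\vec{\mu})]\EE_{\vec{\mu},S}[\vec{x}\tanh(\vec{x}^T\vec{\Sigma}^{-1}\vec{\mu})]^T,
\]
i.e.\ $A-B = Cov_{\vec{\mu},S}(\vec{x}\tanh(\vec{x}^T\vec{\Sigma}^{-1}\vec{\mu}))$, which is automatically PSD. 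To upgrade this to strict positive definiteness I would suppose some $\vec{v}\neq\vec{0}$ made $\vec{v}^T\vec{x}\tanh(\vec{x}^T\vec{\Sigma}^{-1}\vec{\mu})$ almost surely constant on $\{S>0\}$; the map is real-analytic on $\mathbb{R}^d$ and $\{S>0\}$ has positive Lebesgue measure, so it would have to be identically constant on $\mathbb{R}^d$, and evaluating at $\vec{x}=\vec{0}$ pins the constant to $0$. Then $\tanh(\vec{x}^T\vec{\Sigma}^{-1}\vec{\mu})$ is nonzero off a hyperplane, forcing $\vec{v}^T\vec{x}\equiv 0$ and hence $\vec{v}=\vec{0}$, a contradiction.

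Finally, conjugating $J = A^{-1}B$ by $A^{1/2}$ shows $J$ is similar to the symmetric positive definite matrix $A^{-1/2}BA^{-1/2}$, so every eigenvalue of $J$ is real and positive. Since $A \succ B$ strictly, $I - A^{-1/2}BA^{-1/2} = A^{-1/2}(A-B)A^{-1/2} \succ 0$, which places every eigenvalue of $A^{-1/2}BA^{-1/2}$ strictly in $(0,1)$, and hence the spectral radius of $J$ is strictly less than one. The main technical obstacle is the strict positive definiteness of $Cov_{\vec{\mu},S}(\vec{x}\tanh(\vec{x}^T\vec{\Sigma}^{-1}\vec{\mu}))$; everything else is bookkeeping on top of Lemmas \ref{lem:derivatives} and \ref{lem:localdiff}, and the non-degeneracy argument parallels the one already used in the proof of Lemma \ref{lem:localdiff}.
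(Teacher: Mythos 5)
Your proposal is correct and follows essentially the same route as the paper: the same matrices $A$ and $B$, the same identification of $A-B$ as the covariance of $\vec{x}\tanh(\vec{x}^T\vec{\Sigma}^{-1}\vec{\mu})$, and the same symmetrization to conclude the eigenvalues of $A^{-1}B$ lie in $(0,1)$ (the paper conjugates by $\vec{\Sigma}^{-1/2}$ and invokes its claim on products of positive definite matrices, while you conjugate by $A^{1/2}$ directly, a cosmetic difference). Your real-analyticity argument for the strict positive definiteness of $A-B$ is a welcome elaboration of a non-degeneracy step the paper only asserts.
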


\begin{proof}
We set $A:= \mathbb{E}_{\vec{\mu},S}[\vec{x}\vec{x}^T] - \mathbb{E}_{\vec{\mu},S}[\vec{x}\tanh(\vec{x}^T \vec{\Sigma}^{-1}\vec{\mu})]\mathbb{E}_{\vec{\mu},S}[\vec{x}\tanh(\vec{x}^T \vec{\Sigma}^{-1}\vec{\mu})]^T$ and $B:= \mathbb{E}_{\vec{\mu},S}[\vec{x}\vec{x}^T] -\mathbb{E}_{\vec{\mu},S}[\vec{x}\vec{x}^T \tanh^2 (\vec{x}^T \vec{\Sigma}^{-1}\vec{\mu})]$. From the proof of Lemma \ref{lem:localdiff} it follows that both $A,B$ are positive definite. Observe that $A-B$ is also positive definite since
\[A - B = Cov(\vec{x}\tanh(\vec{x}^T \vec{\Sigma}^{-1}\vec{\mu}), \vec{x}\tanh(\vec{x}^T \vec{\Sigma}^{-1}\vec{\mu})),\] and the measure $S$ is positive so the vector $\vec{x}\tanh(\vec{x}^T \vec{\Sigma}^{-1}\vec{\mu})$ does not live in a lower dimensional subspace. Moreover, we get that $\vec{\Sigma}^{-1/2}(A-B)\vec{\Sigma}^{-1/2} = \vec{\Sigma}^{-1/2}A\vec{\Sigma}^{-1/2}-\vec{\Sigma}^{-1/2}B\vec{\Sigma}^{-1/2}$ is also positive definite.
We set $\tilde{A} := \vec{\Sigma}^{-1/2}A\vec{\Sigma}^{-1/2}$ and $\tilde{B} := \vec{\Sigma}^{-1/2}B\vec{\Sigma}^{-1/2}$ ($\tilde{A}, \tilde{B}$ are also positive definite). Using Claim \ref{lem:positive} (stated in the end of the section) we conclude that $\tilde{A}^{-1}(\tilde{A} - \tilde{B}) = \vec{I} - \tilde{A}^{-1}\tilde{B}$ has positive eigenvalues. Thus $C: = \vec{I}-\vec{\Sigma}^{1/2}A^{-1}B\vec{\Sigma}^{-1/2}$ has positive eigenvalues. We conclude that $\vec{\Sigma}^{1/2}A^{-1}B\vec{\Sigma}^{-1/2}$ has eigenvalues less than one. Since $\vec{\Sigma}^{1/2}A^{-1}B\vec{\Sigma}^{-1/2}$ has same eigenvalues as $A^{-1}B$, it follows that $A^{-1}B$ has eigenvalues less than one. Finally, from Lemma \ref{lem:positive} it holds that $A^{-1}B$ has positive eigenvalues. The proof follows since $A^{-1}B = \nabla_{\vec{\lambda}}\EE_{\vec{\lambda},S}\left[\vec{x}^T\tanh(\vec{x}^T\vec{\Sigma}^{-1}\vec{\lambda})\right]^{-1} \Big\vert_{\vec{\lambda}=\vec{\mu}}\cdot\nabla_{\vec{\lambda}}\EE_{\vec{\mu},S}\left[\tanh(\vec{x}^T\vec{\Sigma}^{-1}\vec{\lambda})\vec{x}^T\right]\Big\vert_{\lambda=\vec{\mu}}$.
\end{proof}

The same proof works for the case of $\vec{-\mu}$. Below we provide the stability analysis for $\vec{0}$.
\begin{lemma}[Stability of $\vec{0}$ in multi-dimensional]\label{lem:stability2b} It holds that the spectral radius of $$\nabla_{\vec{\lambda}}\EE_{\vec{\lambda},S}\left[\vec{x}^T\tanh(\vec{x}^T\vec{\Sigma}^{-1}\vec{\lambda})\right]^{-1} \Big\vert_{\vec{\lambda}=\vec{0}}\cdot\nabla_{\vec{\lambda}}\EE_{\vec{\mu},S}\left[\tanh(\vec{x}^T\vec{\Sigma}^{-1}\vec{\lambda})\vec{x}^T\right]\Big\vert_{\lambda=\vec{0}}$$ (i.e., the Jacobian of the update rule of EM method computed at true mean $\vec{\mu}$) is greater than one.
\end{lemma}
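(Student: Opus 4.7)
The plan is to mirror the single-dimensional proof in Lemma \ref{lem:stability1} by reducing the multi-dimensional spectral claim to a one-dimensional FKG-style inequality along the direction $\vec{\Sigma}^{-1}\vec{\mu}$.

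First I would specialize Lemma \ref{lem:derivatives} at $\vec{\lambda}=\vec{0}$. Since $\tanh(\vec{x}^T\vec{\Sigma}^{-1}\vec{0})=0$ and $1-\tanh^2(0)=1$, part 1 collapses to $\vec{\Sigma}^{-1}\EE_{\vec{0},S}[\vec{x}\vec{x}^T]$ and part 3 collapses to $\vec{\Sigma}^{-1}\EE_{\vec{\mu},S}[\vec{x}\vec{x}^T]$. Substituting these into (\ref{eqn:multi-ratio}) shows that the Jacobian at $\vec{0}$ equals $A^{-1}B$, where $A:=\EE_{\vec{0},S}[\vec{x}\vec{x}^T]$ and $B:=\EE_{\vec{\mu},S}[\vec{x}\vec{x}^T]$. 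Both matrices are symmetric positive definite by the same ``positive measure" argument used inside Lemma \ref{lem:localdiff}. Since $A^{-1}B$ is similar to the symmetric matrix $A^{-1/2}BA^{-1/2}$, the generalized Rayleigh quotient gives that its spectral radius equals $\sup_{\vec{w}\neq\vec{0}}(\vec{w}^TB\vec{w})/(\vec{w}^TA\vec{w})$. Hence it suffices to exhibit a single $\vec{w}$ with $\vec{w}^TB\vec{w}>\vec{w}^TA\vec{w}$.

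The natural candidate is $\vec{w}:=\vec{\Sigma}^{-1}\vec{\mu}$; setting $Y:=\vec{\mu}^T\vec{\Sigma}^{-1}\vec{x}$, the claim reduces to the scalar inequality $\EE_{\vec{\mu},S}[Y^2]>\EE_{\vec{0},S}[Y^2]$. Using $\log f_{t\vec{\mu}}(\vec{x})=C-\tfrac{t^2}{2}\vec{\mu}^T\vec{\Sigma}^{-1}\vec{\mu}-\tfrac{1}{2}\vec{x}^T\vec{\Sigma}^{-1}\vec{x}+\log\cosh(tY)$ and differentiating in $t$ (equivalently, applying the gradient identity underlying Lemma \ref{lem:derivatives}(2) to $g(\vec{x})=Y^2$ and projecting onto $\vec{\mu}$), I get
\[
\frac{d}{dt}\EE_{t\vec{\mu},S}[Y^2]=\EE_{t\vec{\mu},S}[Y^3\tanh(tY)]-\EE_{t\vec{\mu},S}[Y^2]\,\EE_{t\vec{\mu},S}[Y\tanh(tY)].
\]
The Mean Value theorem then yields $\xi\in(0,1)$ such that $\EE_{\vec{\mu},S}[Y^2]-\EE_{\vec{0},S}[Y^2]$ equals this right-hand side at $t=\xi$.

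Strict positivity of the right-hand side is exactly the one-dimensional FKG-style step already carried out in Lemma \ref{lem:stability1}: both $y\mapsto y^2$ and $y\mapsto y\tanh(\xi y)$ are strictly increasing in $|y|$, so drawing two i.i.d. copies $Y_1,Y_2$ from the law of $Y$ under $f_{\xi\vec{\mu},S}$ and assuming $|Y_1|>|Y_2|$ gives $(Y_1^2-Y_2^2)(Y_1\tanh(\xi Y_1)-Y_2\tanh(\xi Y_2))>0$; expanding and invoking independence/identical distribution produces $\EE_{\xi\vec{\mu},S}[Y^3\tanh(\xi Y)]>\EE_{\xi\vec{\mu},S}[Y^2]\,\EE_{\xi\vec{\mu},S}[Y\tanh(\xi Y)]$. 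The ``positive measure'' event needed to get a strict (rather than weak) inequality is inherited from $S$ having positive Lebesgue measure together with $\vec{\mu}\neq\vec{0}$, which guarantees that $Y$ is not supported on a two-point set $\{\pm c\}$. The main obstacle is really the bridging step itself: choosing $\vec{w}=\vec{\Sigma}^{-1}\vec{\mu}$ so that the multi-dimensional spectral-radius statement localizes onto a scalar moment comparison to which the same FKG mechanism from the $d=1$ proof can be applied verbatim.
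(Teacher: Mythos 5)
Your proposal is correct and follows essentially the same route as the paper: both arguments reduce the spectral claim for $A^{-1}B$ (with $A=\EE_{\vec{0},S}[\vec{x}\vec{x}^T]$, $B=\EE_{\vec{\mu},S}[\vec{x}\vec{x}^T]$) to the scalar inequality $\EE_{\vec{\mu},S}[Y^2]>\EE_{\vec{0},S}[Y^2]$ for $Y=\vec{\mu}^T\vec{\Sigma}^{-1}\vec{x}$, proved by differentiating $\EE_{t\vec{\mu},S}[Y^2]$ in $t$ and applying the FKG trick to the even, $|y|$-increasing functions $y^2$ and $y\tanh(\xi y)$. The only (immaterial) differences are bookkeeping: the paper whitens by $\vec{\Sigma}^{-1/2}$, uses the fundamental theorem of calculus instead of the mean value theorem, and deduces an eigenvalue of $A^{-1}B$ exceeding one from positivity of $\vec{\mu}^T(B-A)\vec{\mu}$, whereas you invoke the generalized Rayleigh quotient directly.
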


\begin{proof} We set $\vec{x} \leftarrow \vec{\Sigma}^{-1/2}\vec{x}, \vec{\mu} \leftarrow \vec{\Sigma}^{-1/2}\vec{\mu}$ and define $S'$ accordingly (transforming $S$). It suffices to prove that the matrix $\mathbb{E}_{\vec{0},S'}^{-1}[\vec{x}\vec{x}^T] \mathbb{E}_{\vec{\mu},S'}[\vec{x}\vec{x}^T]$ has an eigenvalue greater than one (using Lemma \ref{lem:derivatives}). We set $G(t) = \mathbb{E}_{t\vec{\mu},S'}[\vec{x}\vec{x}^T]$ and we get that $$\frac{dG}{dt} = \mathbb{E}_{t\vec{\mu},S'}[\vec{x}\vec{x}^T (\vec{x}^T \vec{\mu}) \tanh(\vec{x}^T t\vec{\mu})].$$
Using the fundamental theorem of calculus we get that
\begin{equation}
G(1)-G(0) =  \int_0^1 \mathbb{E}_{t\vec{\mu},S'}[\vec{x}\vec{x}^T (\vec{x}^T \vec{\mu}) \tanh(\vec{x}^T t\vec{\mu})]dt.
\end{equation}
It holds that
\begin{align*}
\vec{\mu}^T G(1) \vec{\mu} = \vec{\mu}^T G(0) \vec{\mu} +  \int_0^1 \vec{\mu}^T\mathbb{E}_{t\vec{\mu},S'}[\vec{x}\vec{x}^T (\vec{x}^T \vec{\mu}) \tanh(\vec{x}^T t\vec{\mu})]\vec{\mu} dt
\end{align*}
The proof below is inspired by the proof of FKG inequality (because $(\vec{x}^T \vec{\mu})^2, \vec{x}^T \vec{\mu}\tanh(\vec{x}^T t\vec{\mu})$ are increasing for $\vec{x}^T \vec{\mu} \geq 0$ and decreasing for $\vec{x}^T \vec{\mu} < 0$ with respect to $\vec{x}^T \vec{\mu}$ and since $t \geq 0$).
Let $\vec{x}_1,\vec{x}_2$ be two independent and identically distributed random variables that follow the distribution of $f_{t\vec{\mu},S'}(\vec{x})$. Assume w.l.o.g that $|\vec{x}_1^T \vec{\mu}|>|\vec{x}_2^T\vec{\mu}|$ then it holds that $(\vec{x}_1^T \vec{\mu})^2 > (\vec{x}_2^T \vec{\mu})^2$ and $\vec{x}_1^T \vec{\mu} \tanh(t\vec{x}_1^T \vec{\mu}) > \vec{x}_2^T \vec{\mu} \tanh(t\vec{x}_2^T \vec{\mu})$ (since $t \geq 0$).

Therefore we get that $\left[(\vec{x}_1^T \vec{\mu})^2 - (\vec{x}_2^T \vec{\mu})^2\right]\left[\vec{x}_1^T \vec{\mu} \tanh(t\vec{x}_1^T \vec{\mu})- \vec{x}_2^T \vec{\mu} \tanh(t\vec{x}_2^T \vec{\mu})\right]>0$ (except for a measure zero set where it might be equality). Thus
\[\mathbb{E}_{t\vec{\mu},S'}\left \{ \left[(\vec{x}_1^T \vec{\mu})^2 - (\vec{x}_2^T \vec{\mu})^2\right]\left[\vec{x}_1^T \vec{\mu} \tanh(t\vec{x}_1^T \vec{\mu})- \vec{x}_2^T \vec{\mu} \tanh(t\vec{x}_2^T \vec{\mu})\right]\right\}>0.\]
By using the fact that $\vec{x}_1, \vec{x}_2$ are independent and identically distributed, it holds that
\[\mathbb{E}_{t\vec{\mu},S'}  \left[(\vec{x}_1^T \vec{\mu})^3 \tanh(t\vec{x}_1^T \vec{\mu})\right]>\mathbb{E}_{t\vec{\mu},S'}  \left[(\vec{x}_1^T \vec{\mu})^2\right]\mathbb{E}_{t\vec{\mu},S'}  \left[ (\vec{x}_1^T \vec{\mu})\tanh(t\vec{x}_1^T \vec{\mu})\right].\]
Hence, we conclude that
$\vec{\mu}^T (\mathbb{E}_{\vec{\mu},S'} [\vec{x}\vec{x}^T] - \mathbb{E}_{\vec{0},S'} [\vec{x}\vec{x}^T])\vec{\mu}>0$, i.e., the matrix $(\mathbb{E}_{\vec{\mu},S'} [\vec{x}\vec{x}^T] - \mathbb{E}_{\vec{0},S'} [\vec{x}\vec{x}^T])$ has a positive eigenvalue. Since $\mathbb{E}_{\vec{0},S'}^{-1}[\vec{x}\vec{x}^T]$ is positive definite, it holds that $$\mathbb{E}_{\vec{0},S'}^{-1}[\vec{x}\vec{x}^T](\mathbb{E}_{\vec{\mu},S'}[\vec{x}\vec{x}^T] - \mathbb{E}_{\vec{0},S'}[\vec{x}\vec{x}^T])$$
has a positive eigenvalue, i.e., $\mathbb{E}_{\vec{0},S'}^{-1}[\vec{x}\vec{x}^T]\mathbb{E}_{\vec{\mu},S'}[\vec{x}\vec{x}^T] - I$ has a positive eigenvalue.\\ Hence, $\mathbb{E}_{\vec{0},S'}^{-1}[\vec{x}\vec{x}^T]\mathbb{E}_{\vec{\mu},S'}[\vec{x}\vec{x}^T]$ has an eigenvalue greater than one, and the proof is complete.
\end{proof}

The following lemma shows that there are three fixed points in the multi-dimensional case when the function $S'(\vec{x}) = S(\vec{\Sigma}^{1/2}\vec{x})$ is rotation invariant.

\begin{lemma}[Rotation invariance implies three fixed points]\label{lem:symrotation} Let $S': \mathbb{R}^d \to \mathbb{R}$ be a rotation invariant function, where $S'(\vec{x}) = S(\vec{\Sigma}^{1/2}\vec{x})$. It holds that the update rule of EM has exactly three fixed points, i.e., $-\vec{\mu},\vec{0},\vec{\mu}$, for any $d>1$.
\end{lemma}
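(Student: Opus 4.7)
The plan is to exploit the $O(d-1)$ symmetry that rotation invariance provides in order to reduce the multi-dimensional fixed point equation to a one-dimensional one where Lemma \ref{lem:threefixedpoints} applies. Since $S'$ is rotation invariant, it depends only on $\|\vec{y}\|$; write $S'(\vec{y}) = \tilde S(\|\vec{y}\|)$. Choosing coordinates so that $\vec{\mu} = \mu\vec{e}_1$, the invariance $f_{\vec{\nu},S'}(Q\vec{y}) = f_{Q^{T}\vec{\nu},S'}(\vec{y})$ combined with uniqueness of the EM update (Lemma \ref{lem:welldefined}) gives that $T$ is equivariant under the stabilizer of $\vec{\mu}$: $T(Q\vec{\lambda})=QT(\vec{\lambda})$ for every orthogonal $Q$ with $Q\vec{\mu}=\vec{\mu}$. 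Consequently, for any candidate fixed point $\vec{\theta}$, both $\EE_{\vec{\mu},S'}[\tanh(\vec{y}^T\vec{\theta})\vec{y}]$ and $\EE_{\vec{\theta},S'}[\tanh(\vec{y}^T\vec{\theta})\vec{y}]$ lie in $\mathrm{span}(\vec{\mu},\vec{\theta})$, and a reflection across any hyperplane perpendicular to $\vec{\theta}$ further forces the second to be parallel to $\vec{\theta}$. Integrating out the $d-2$ coordinates orthogonal to $\mathrm{span}(\vec{\mu},\vec{\theta})$ reduces the problem to an effective 2D one with a radial truncation $\hat S(\sqrt{y_1^2+y_2^2})$.

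Next I would take coordinates in this 2D slice with $\vec{\theta}=t\vec{e}_1$ and $\vec{\mu}=(a,b)$ (so $a^2+b^2=\mu^2$), and decompose the fixed point equation into its $\hat\theta$ and $\hat\theta^{\perp}=\vec{e}_2$ components. The $\vec{e}_2$ component of the right-hand side vanishes by the $y_2\to-y_2$ symmetry of $f_{\vec{\theta},\hat S}$. For the left-hand side, expanding $\cosh(ay_1+by_2)=\cosh(ay_1)\cosh(by_2)+\sinh(ay_1)\sinh(by_2)$ and using parity in $y_2$ shows that the $\vec{e}_2$ component is proportional to
\[
\int \tanh(ty_1)\sinh(ay_1)\,A(y_1)\,e^{-y_1^2/2}dy_1, \qquad A(y_1):=\int y_2\sinh(by_2)e^{-y_2^2/2}\hat S(\sqrt{y_1^2+y_2^2})dy_2,
\]
whose integrand, by FKG-type reasoning akin to the proof of Lemma \ref{lem:threefixedpoints}, has strict sign $\mathrm{sgn}(at)$ on a set of positive measure whenever $b\neq 0$. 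Perpendicular-vanishing therefore forces one of $t=0$ (i.e.\ $\vec{\theta}=\vec{0}$), $b=0$ (i.e.\ $\vec{\theta}\parallel\vec{\mu}$), or $a=0$ (i.e.\ $\vec{\theta}\perp\vec{\mu}$).

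The parallel case $\vec{\theta}=s\hat\mu$ is then easy: integrating out the $d-1$ directions perpendicular to $\hat\mu$, the marginal of $z=\vec{y}\cdot\hat\mu$ under both $f_{\vec{\mu},S'}$ and $f_{\vec{\theta},S'}$ is a balanced 1D mixture with the same even truncation $\rho(z):=\int_{\mathbb{R}^{d-1}}\tilde S(\sqrt{z^2+\|\vec{u}\|^2})e^{-\|\vec{u}\|^2/2}d\vec{u}$. The fixed point equation becomes exactly the 1D EM fixed point equation with true mean $\mu$ and truncation $\rho$, so Lemma \ref{lem:threefixedpoints} yields $s\in\{0,\mu,-\mu\}$, giving the three claimed fixed points.

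The hard part is ruling out the perpendicular case $\vec{\theta}\perp\vec{\mu}$ with $\vec{\theta}\neq\vec{0}$. Here the $\vec{e}_2$ component is automatically zero and the $\hat\theta$ component becomes the 1D-looking identity $\EE_{0,g}[\tanh(tz)z] = \EE_{t,h}[\tanh(tz)z]$, where $g(z):=\int e^{-w^2/2}\cosh(\mu w)\hat S(\sqrt{z^2+w^2})dw$ and $h(z):=\int e^{-w^2/2}\hat S(\sqrt{z^2+w^2})dw$ are two different even truncations, so Lemma \ref{lem:threefixedpoints} cannot be invoked verbatim. My plan is to cross-multiply and symmetrize over two i.i.d.\ copies $(z_1,z_2)$, rewriting the fixed-point condition as
\[
\iint [\tanh(tz_1)z_1-\tanh(tz_2)z_2]\,g(z_1)g(z_2)[\eta(z_2)-\eta(z_1)]\,e^{-(z_1^2+z_2^2)/2}dz_1dz_2 = 0
\]
with $\eta(z):=\cosh(tz)h(z)/g(z)$, and then run an FKG-style argument: $\tanh(tz)z$ and $\cosh(tz)$ are strictly increasing in $|z|$, and a careful monotone-likelihood-ratio analysis of $p_z(w)\propto e^{-w^2/2}\hat S(\sqrt{z^2+w^2})$ controls the monotonicity of $g(z)/h(z)$ in $|z|$, so that $\eta$ is strictly monotone in $|z|$. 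This forces the integrand to have a strict sign on a set of positive measure, which is only possible if $t=0$, completing the proof.
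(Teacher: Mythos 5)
Your overall architecture matches the paper's: reduce to an effective two-dimensional radial problem, split the fixed-point equation into components along and perpendicular to the candidate $\vec{\theta}$, kill the generic case by a sign argument on the perpendicular component, and settle the parallel case by integrating out the orthogonal directions and invoking Lemma \ref{lem:threefixedpoints}. (The paper implements the same trichotomy through its choice of $Q$ with $Q\vec{\mu}=\mu_1\vec{e}_1+\mu_2\vec{e}_2$ and the density comparison $\cosh(z_1\mu_1+z_2\mu_2)>\cosh(z_1\mu_1-z_2\mu_2)$, which is exactly your $\operatorname{sgn}(atb)$ computation; note that this strict inequality degenerates precisely when $\mu_1=0$, so you have correctly located the crux of the whole lemma in the perpendicular case.)

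The genuine gap is that your plan for that perpendicular case does not close. It hinges on the claim that $g(z)/h(z)=\mathbb{E}_{p_z}[\cosh(\mu w)]$, with $p_z(w)\propto e^{-w^2/2}\hat S(\sqrt{z^2+w^2})$, is monotone in $|z|$, and this is false for general rotation-invariant truncations. Take $\hat S$ to be the indicator of the union of a small disk around the origin and a thin annulus at large radius $R$. At $z=0$ the Gaussian weight of $p_z$ sits almost entirely on small $|w|$ and $\mathbb{E}_{p_z}[\cosh(\mu w)]=O(\cosh\mu)$; at an intermediate $z$, say $z=R/2$, the support of $p_z$ is forced out to $|w|\approx\sqrt{R^2-z^2}$, which is large, and the expectation is of order $\cosh(\mu\sqrt{R^2-z^2})$; as $z\uparrow R$ the support again reaches down to small $|w|$ where the Gaussian weight concentrates, and the expectation drops back to a moderate value. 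So $g/h$ rises and then falls. Multiplying by $\cosh(tz)$ cannot repair this uniformly, because making $\eta(z)=\cosh(tz)h(z)/g(z)$ increasing would require $t\gtrsim\mu\sqrt{3}$ in this example, whereas $t=\norm{\vec{\theta}}_2$ is the unknown candidate fixed point and may be small. Without monotonicity of $\eta$ in $|z|$, the covariance $\mathbb{E}_{\nu}[\phi\eta]-\mathbb{E}_{\nu}[\phi]\mathbb{E}_{\nu}[\eta]$ has no a priori sign, the FKG-style contradiction evaporates, and the case $\vec{\theta}\perp\vec{\mu}$, $\vec{\theta}\neq\vec{0}$ is not excluded. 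Everything else in your proposal is sound; this one step needs a genuinely different idea.
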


\begin{proof} Consider the transformation $\vec{x} \leftarrow \vec{\Sigma}^{-1/2}\vec{x}, \vec{\mu} \leftarrow \vec{\Sigma}^{-1/2}\vec{\mu}$ and $S' \leftarrow S$. Assume for the sake of contradiction that there exists another fixed point $\vec{\lambda}\neq \vec{0}$ (after the transformation so that we can consider the covariance matrix to be identity). We may assume without loss of generality that $\vec{\mu}^T\vec{\lambda} \geq 0$ since if $\vec{\lambda}$ is a fixed point of the EM rule, so it is $-\vec{\lambda}$. 

Let $Q$ be an orthogonal matrix so that $Q \vec{\lambda} = \norm{\vec{\lambda}}_2 \vec{e}_1$ and $Q \vec{\mu} = \mu_1 \vec{e}_1 + \mu_2 \vec{e}_2$ where $\vec{e}_1, \vec{e}_2 ,...,\vec{e}_d$ is the classic orthogonal basis of $\mathbb{R}^d$ ($Q$ rotates the space), with $\mu_1 \geq 0$ (by assumption) and $\mu_2 \geq 0$ (by the choice of $Q$). We will show that the equation \[\EE_{\vec{\lambda},S'}\left[\tanh(\vec{x}^T\vec{\lambda})\vec{x}\right]=\EE_{\vec{\mu},S'}\left[\tanh(\vec{x}^T\vec{\lambda})\vec{x}\right]\] holds only for $\vec{\lambda} = \vec{\mu}$ (assuming $\vec{\lambda} \neq \vec{\mu}$ we shall reach a contradiction).

Under the transformation $\vec{y} = Q\vec{x}$ (and because $S'$ is rotation invariant, $|\det(Q)| = 1$, $Q^TQ = QQ^T = \vec{I}$) we get that the fixed point $\vec{\lambda}$ of EM satisfies
\begin{equation}\label{eq:transform}
\EE_{Q\vec{\lambda},S'}\left[\tanh(\vec{y}^TQ\vec{\lambda})Q^T\vec{y}\right]=\EE_{Q\vec{\mu},S'}\left[\tanh(\vec{y}^TQ\vec{\lambda})Q^T\vec{y}\right].
\end{equation}
We multiply by $Q$ both sides in Equation (\ref{eq:transform}) and we conclude that
\begin{equation}\label{eq:rule}
\EE_{\norm{\vec{\lambda}}_2 \vec{e}_1,S'}\left[\tanh(\norm{\vec{\lambda}}_2y_1)\vec{y}\right] = \EE_{Q\vec{\mu},S'}\left[\tanh(\norm{\vec{\lambda}}_2y_1)\vec{y}\right],
\end{equation}
We consider the following two cases:
\begin{itemize}
\item $\mu_2 = 0$. For the rest of this case, we denote by $\vec{y}_{-1}$ the vector $\vec{y}$ by removing coordinate $y_1$.

    We use the notation $f_{\nu} = \dfrac{1}{2}\mathcal{N}(\vec{y};-\vec{\nu},\vec{I})+\frac{1}{2}\mathcal{N}(\vec{y};\vec{\nu},\vec{I})$. By rotation invariance of $S'$, it is true that $S'(y_1,\vec{y}_{-1}) = S'(-y_1, \vec{y}_{-1}) = S'(y_1,-\vec{y}_{-1})$, $S'(-\vec{y}) = S'(\vec{y})$ and because $\tanh(\norm{\vec{\lambda}}_2y_1)y_1$ is an even function we get
\begin{align*}
\EE_{Q\vec{\mu},S'}\left[\tanh(\norm{\vec{\lambda}}_2y_1)y_1\right] &= \frac{ \int_{\mathbb{R}^{d}} \tanh(\norm{\vec{\lambda}}_2y_1)y_1S'(\vec{y}) f_{Q\vec{\mu}}d\vec{y}}{\int_{\mathbb{R}^{d}} S'(\vec{y}) f_{Q\vec{\mu}}d\vec{y}}
\\&= \frac{\int_{\mathbb{R}^{d}} \tanh(\norm{\vec{\lambda}}_2y_1)y_1S'(\vec{y}) \mathcal{N}(\vec{y};Q\vec{\mu},\vec{I})d\vec{y}}{\int_{\mathbb{R}^{d}} S'(\vec{y}) \mathcal{N}(\vec{y};Q\vec{\mu},\vec{I})d\vec{y}}
\\&= \frac{\int_{\mathbb{R}}e^{-\frac{(y_1 - (Q\vec{\mu})_1)^2}{2}}\tanh(\norm{\vec{\lambda}}_2y_1)y_1\int_{\mathbb{R}^{d-1}} S'(\vec{y})\mathcal{N}(\vec{y}_{-1};(Q\vec{\mu})_{-1},\vec{I})d\vec{y}_{-1}dy_1}{\int_{\mathbb{R}}e^{-\frac{(y_1 - (Q\vec{\mu})_1)^2}{2}}\int_{\mathbb{R}^{d-1}} S'(\vec{y}) \mathcal{N}(\vec{y}_{-1};(Q\vec{\mu})_{-1},\vec{I})d\vec{y}_{-1}dy_1}
\\&= \frac{\int_{\mathbb{R}}e^{-\frac{(y_1 - (Q\vec{\mu})_1)^2}{2}}\tanh(\norm{\vec{\lambda}}_2y_1)y_1 r(y_1)dy_1}{\int_{\mathbb{R}}e^{-\frac{(y_1 - (Q\vec{\mu})_1)^2}{2}}r(y_1)dy_1}
\end{align*}
where $r(y_1) = \int_{\mathbb{R}^{d-1}} S'(\vec{y}) \mathcal{N}(\vec{y}_{-1};(Q\vec{\mu})_{-1},\vec{I})d\vec{y}_{-1}$ is an even, non-negative function (of positive measure). Since $\tanh(\norm{\vec{\lambda}}_2y_1)y_1 r(y_1)$ is an even function we conclude that
\begin{align*}
\EE_{Q\vec{\mu},S'}\left[\tanh(\norm{\vec{\lambda}}_2y_1)y_1\right] &= \frac{\int_{\mathbb{R}}e^{-\frac{(y_1 - (Q\vec{\mu})_1)^2}{2}}\tanh(\norm{\vec{\lambda}}_2y_1)y_1 r(y_1)dy_1}{\int_{\mathbb{R}}e^{-\frac{(y_1 - (Q\vec{\mu})_1)^2}{2}}r(y_1)dy_1}\\&= \frac{\int_{\mathbb{R}}f_{(Q\mu)_1}\tanh(\norm{\vec{\lambda}}_2y_1)y_1 r(y_1)dy_1}{\int_{\mathbb{R}}f_{(Q\mu)_1}r(y_1)dy_1}
\\&= \EE_{(Q\mu)_1,r}\left[\tanh(\norm{\vec{\lambda}}_2y_1)y_1\right]
\end{align*}
Therefore we conclude that (since $(Q\mu)_1 = \mu_1$)
\[\EE_{\norm{\vec{\lambda}}_2,r}\left[\tanh(\norm{\vec{\lambda}}_2y_1)y_1\right] = \EE_{\mu_1,r}\left[\tanh(\norm{\vec{\lambda}}_2y_1)y_1\right],\]
namely we have reduced the problem to the single dimensional case. Hence from Lemma \ref{lem:threefixedpoints}, it must hold that $\mu_1 = \norm{\vec{\lambda}}_2$, i.e., $\vec{\lambda} = \vec{\mu}$ (contradiction).
\item $\vec{\mu}_2>0$. We use the same machinery as before; by Equation (\ref{eq:rule}), the fact that $S'(y_1,y_2, \vec{y}_{-1,2}) = S'(y_1,y_2, -\vec{y}_{-1,2}) = S'(-y_1,y_2, \vec{y}_{-1,2}) = S'(y_1,-y_2, \vec{y}_{-1,2})$ and moreover the fact that function $\tanh(\norm{\vec{\lambda}}_2y_1)y_2$ is odd with respect to $y_2$, we conclude  \[\EE_{(\mu_1,\mu_2),r'}\left[\tanh(\norm{\vec{\lambda}}_2y_1)y_2\right]=0,\] where $r'(y_1,y_2)$ is a non-negative function (of positive measure) and bounded by 1, with the property that $r'(y_1,y_2) = r'(-y_1,y_2) = r'(y_1,-y_2) = r'(-y_1,-y_2)$ (reducing it to the two-dimensional case).

We will show that \[\EE_{(\mu_1,\mu_2) ,r'}\left[\tanh(\norm{\vec{\lambda}}_2y_1)y_2\right]>0\] and reach a contradiction. Assume $(z_1,z_2) \in \mathbb{R}^2$ so that $z_1 \cdot z_2 > 0$ and $r'(z_1,z_2) > 0$. from the measure $f_{(\mu_1,\mu_2),r'}$. It suffices to show that $f_{(\mu_1,\mu_2),r'}(-z_1,-z_2) = f_{(\mu_1,\mu_2),r'}(z_1,z_2)>f_{(\mu_1,\mu_2),r'}(-z_1,z_2) = f_{(\mu_1,\mu_2),r'}(z_1,-z_2)$ (by the assumption of positive measure). This reduces to (by the property of $r'$)
\[e^{-\frac{(z_1-\mu_1)^2+(z_2-\mu_2)^2}{2}}+e^{-\frac{(z_1+\mu_1)^2+(z_2+\mu_2)^2}{2}} > e^{-\frac{(-z_1-\mu_1)^2+(z_2-\mu_2)^2}{2}}+e^{-\frac{(-z_1+\mu_1)^2+(z_2+\mu_2)^2}{2}},\]
which after cancelling from both sides the common term $e^{-\frac{z_1^2+z_2^2+\mu_1^2+\mu_2^2}{2}}$ is equivalent to
\[\cosh(z_1\mu_1+z_2\mu_2) > \cosh(z_1\mu_1-z_2\mu_2).\]

In case both $z_1,z_2$ are positive then $|z_1\mu_1+z_2\mu_2| > |z_1\mu_1-z_2\mu_2|$ (since $\mu_1,\mu_2>0$) and the inequality follows. In case both $z_1,z_2$ are negative then again $|-z_1\mu_1-z_2\mu_2| > |-z_1\mu_1+z_2\mu_2|$ (since $\mu_1,\mu_2>0$) and the inequality follows. The proof is complete.
\end{itemize}
\end{proof}

\begin{remark} Let $B_{l,r} = \{\vec{x}:l \leq \norm{\vec{x}}_{\vec{\Sigma}^{-1}} \leq r\}$, where $\norm{\vec{x}}_{\vec{\Sigma}^{-1}}$ captures the Mahalanobis distance of $\vec{x}$ from $\vec{0}$, i.e., $\sqrt{\vec{x}^T \vec{\Sigma}\vec{x}}$ ($\vec{\Sigma}$ is positive definite). We would like to note that EM update rule has exactly three fixed points for any truncation set that is a union of $B_{l_i,r_i}$ for a sequence of intervals $(l_i,r_i)$ \footnote{Observe that rotation invariant sets $S$ include unions of $B_{l_i,r_i}$ where the $\vec{\Sigma}$ is the identity matrix.}.
\end{remark}

\begin{lemma}\label{lem:positive} Let $A,B$ be two positive definite matrices. Then $AB$ has positive eigenvalues
\end{lemma}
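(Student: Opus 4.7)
The plan is to reduce the spectrum of the (generally non-symmetric) product $AB$ to the spectrum of a manifestly symmetric positive definite matrix via a similarity transformation, thereby extracting positivity of the eigenvalues from the well-known fact that a congruence by an invertible matrix preserves positive definiteness.

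First, I would invoke the fact that since $A$ is symmetric positive definite, it admits a unique symmetric positive definite square root $A^{1/2}$, which is itself invertible with inverse $A^{-1/2}$. The key algebraic observation is then
\begin{equation*}
A^{-1/2}(AB)A^{1/2} \;=\; A^{1/2} B A^{1/2},
\end{equation*}
so $AB$ is similar to $A^{1/2} B A^{1/2}$ and therefore has the same eigenvalues (with the same algebraic multiplicities).

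Next I would argue that $A^{1/2} B A^{1/2}$ is symmetric positive definite. Symmetry is immediate from $(A^{1/2})^T = A^{1/2}$ and $B^T = B$. For positive definiteness, take any nonzero vector $\vec{x} \in \mathbb{R}^d$; since $A^{1/2}$ is invertible, $\vec{y} := A^{1/2}\vec{x} \neq \vec{0}$, and
\begin{equation*}
\vec{x}^T A^{1/2} B A^{1/2} \vec{x} \;=\; \vec{y}^T B \vec{y} \;>\; 0
\end{equation*}
by positive definiteness of $B$. Hence all eigenvalues of $A^{1/2} B A^{1/2}$ are real and strictly positive, and by the similarity above the same holds for $AB$.

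There is essentially no obstacle here: the lemma is a textbook fact and the only subtlety worth flagging is that $AB$ itself need not be symmetric, which is precisely why the similarity step is needed before one can appeal to the spectral theorem. I would include a one-line remark that the eigenvalues of $AB$ are in fact real (not just of positive real part), since they coincide with those of a real symmetric matrix.
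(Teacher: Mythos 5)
Your proof is correct and follows essentially the same route as the paper: both pass to $A^{1/2} B A^{1/2}$ via the similarity induced by $A^{1/2}$ and then use that this conjugated matrix is symmetric positive definite. You simply spell out the details (existence of the square root, the congruence argument, and the remark that the eigenvalues are real) that the paper leaves implicit.
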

\begin{proof} $AB$ has the same eigenvalues as $A^{1/2}BA^{1/2}$ ($A^{1/2}$ is well defined since $A$ is positive definite). But $A^{1/2}BA^{1/2}$ is also positive definite, hence the claim follows.
\end{proof}

\subsection{Existence of more Fixed Points}\label{sec:more}
The previous section proved the existence of three fixed points in the case of rotation invariant truncation set/function for the multi-dimensional setting. In this section, we describe an example in two dimensions where the EM update rule has more than three fixed points.
Consider the following setting where the true parameters are give by
$\vec{\mu}\approx \left[2.534,6.395\right]$, and the truncation set $S$ is a ``rectangle", i.e, a product of intervals such that $x_1 \in \left[1,2\right]$ and $x_2 \in \left[-3,1.5\right]$.
We show that $\lambda=\left[1,0\right]$ is a stationary point that satisfies equations, (*) := $\EE_{\vec{\lambda},S}\left[\tanh(\vec{x}^T\vec{\lambda})\vec{x}\right]-\EE_{\vec{\mu},S}\left[\tanh(\vec{x}^T\vec{\lambda})\vec{x}\right]=\vec{0}$.

\begin{figure}
	\centering
	\begin{subfigure}{.5\textwidth}
		\centering
		\includegraphics[width=\linewidth]{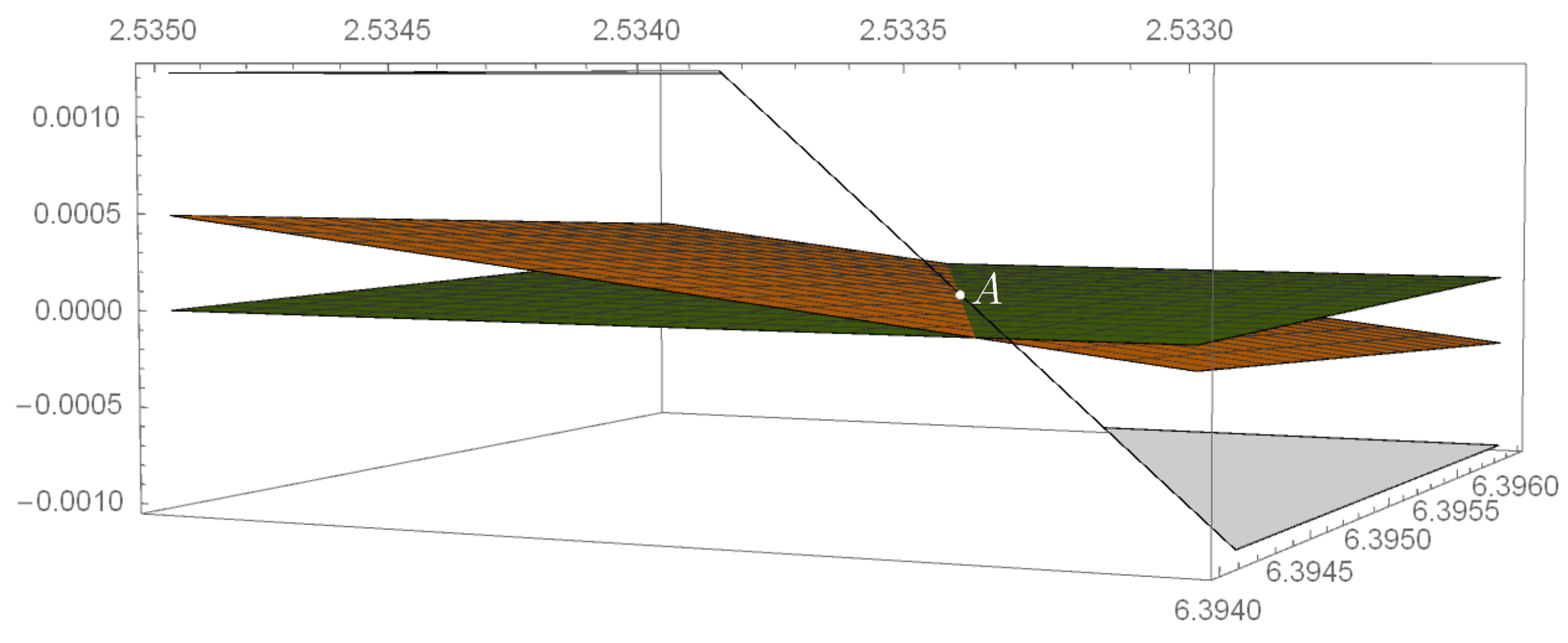}
		\caption{Surfaces of the equations (*) in the neighborhood of fixed point $A$. The point of view is such that the first equation in (*) is a line passing through point $A$.}
		\label{fig:fp_surf}
	\end{subfigure}%
	\begin{subfigure}{.5\textwidth}
		\centering
		\includegraphics[width=0.9\linewidth]{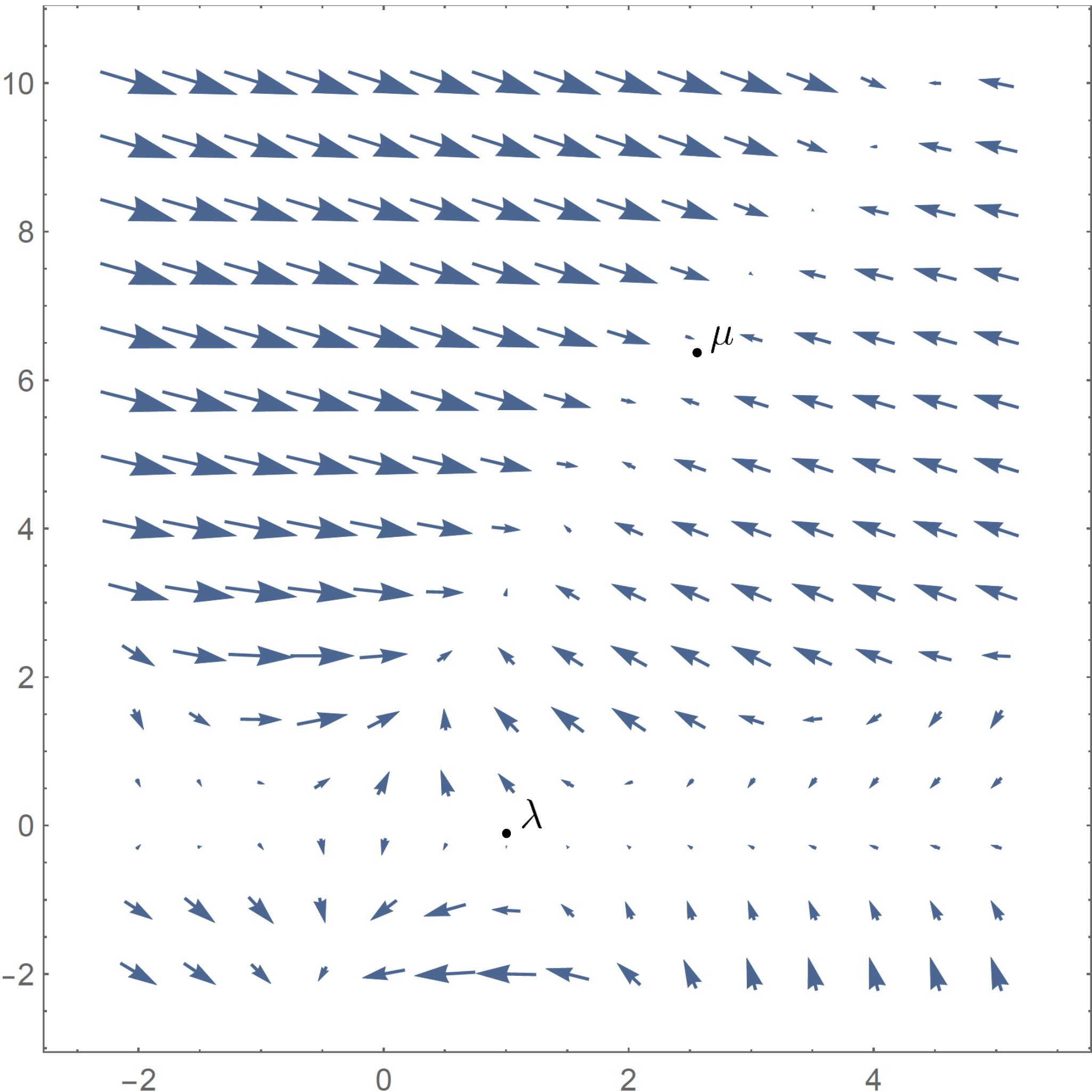}  
		\caption{Vector field of the EM update.}   
		\label{fig:fp_like}
	\end{subfigure}
	\caption{The figures represent the evidence for more fixed points. The figure on the left are the surfaces of the fixed point equation and the figure on the right is the vector field of the EM update.}
	\label{fig:fixed_pts}
\end{figure}
\section{On the Rates of convergence}\label{sec:rates}
In this section we provide quantitative versions of our results of Sections \ref{sec:single} and \ref{sec:multi}.
\subsection{Single-dimensional}
Assume that at iteration $t$ the estimate about the true mean $\lambda_t > 0$. It is easy to see that $\lambda_{t+1}>0$ (the opposite is true if $\lambda_{t}$ is negative). W.l.o.g suppose that $\lambda_t >0$. Moreover, it holds that $\mathbb{E}_{\lambda,S}\left[x \tanh \left(\frac{\lambda x}{\sigma^2}\right)\right],$ $\mathbb{E}_{\mu,S}\left[x \tanh \left(\frac{\lambda x}{\sigma^2}\right)\right]$ are strictly increasing functions in $\lambda$ (argument in the proof of Lemma \ref{lem:localdiff}).

In case $\lambda_t < \mu$ then
\[
\mathbb{E}_{\lambda_{t+1},S}\left[x \tanh \left(\frac{\lambda_{t+1} x}{\sigma^2}\right)\right] = \mathbb{E}_{\mu,S}\left[x \tanh \left(\frac{\lambda_t x}{\sigma^2}\right)\right] > \mathbb{E}_{\lambda_t,S}\left[x \tanh \left(\frac{\lambda_t x}{\sigma^2}\right)\right],
\]
hence $\lambda_{t+1}>\lambda_t$ and moreover since
\[
\mathbb{E}_{\mu,S}\left[x \tanh \left(\frac{\mu x}{\sigma^2}\right)\right]> \mathbb{E}_{\mu,S}\left[x \tanh \left(\frac{\lambda_t x}{\sigma^2}\right)\right] = \mathbb{E}_{\lambda_{t+1},S}\left[x \tanh \left(\frac{\lambda_{t+1} x}{\sigma^2}\right)\right],
\]
it is also true that $\lambda_{t+1}< \mu$. Using the same argument we also conclude that if $\lambda_t > \mu$ then $\lambda_{t} > \lambda_{t+1} >\mu$.

We set $G(\lambda, \mu) = \mathbb{E}_{\mu,S}\left[x \tanh \left(\frac{\lambda x}{\sigma^2}\right)\right]$ and we also assume that $\lambda_t < \mu$. By the mean value theorem, we conclude that
\begin{equation}\label{eq:RHS}
G(\lambda_{t},\mu) - G(\lambda_{t},\lambda_{t}) \geq \min_{\xi \in [\lambda_t, \mu] }\frac{\partial G(\lambda_t,y)}{\partial y}\Bigr|_{y=\xi} (\mu - \lambda_t ).
\end{equation}
Moreover, using mean value theorem again it holds that
\begin{equation}\label{eq:LHS}
G(\lambda_{t+1},\lambda_{t+1}) - G(\lambda_{t},\lambda_{t}) \leq \max_{\xi \in [\lambda_t, \lambda_{t+1}] }\frac{\partial G(y,y)}{\partial y}\Bigr|_{y=\xi} (\lambda_{t+1} - \lambda_{t}).
\end{equation}

Using the fact that $G(\lambda_{t+1},\lambda_{t+1}) = G(\lambda_{t},\mu)$ and Equations (\ref{eq:RHS}), (\ref{eq:LHS}), it follows that
\begin{equation}\label{eq:derivrate}
\max_{\xi \in [\lambda_t, \lambda_{t+1}] }\frac{\partial G(y,y)}{\partial y}\Bigr|_{y=\xi} (\lambda_{t+1} - \lambda_{t}) \geq \min_{\xi \in [\lambda_t, \mu] }\frac{\partial G(\lambda_t,y)}{\partial y}\Bigr|_{y=\xi} (\mu - \lambda_t ).
\end{equation}

By rearranging (\ref{eq:derivrate}) we conclude that $|\lambda_{t+1} - \mu| \leq \left(1 - \frac{\min_{\xi \in [\lambda_t, \mu] }\frac{\partial G(\lambda_t,y)}{\partial y}\Bigr|_{y=\xi}}{\max_{\xi \in [\lambda_t, \lambda_{t+1}] }\frac{\partial G(y,y)}{\partial y}\Bigr|_{y=\xi}} \right)|\lambda_{t} - \mu|$.

In the rest of this section, we will give a lower bound for numerator term $\min_{\xi \in [\lambda_t, \mu] }\frac{\partial G(\lambda_t,y)}{\partial y}\Bigr|_{y=\xi}$ and an upper bound for denominator term $\max_{\xi \in [\lambda_t, \lambda_{t+1}] }\frac{\partial G(y,y)}{\partial y}\Bigr|_{y=\xi}$. As far as denominator is concerned, the following is true.
\begin{lemma}[Bounding the denominator]\label{lem:bounddenom} It holds that \[\frac{\partial G(y,y)}{\partial y}\Bigr|_{y=\xi} \leq O\left( \frac{1}{\alpha^2}\right).\]
\end{lemma}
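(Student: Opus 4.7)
My plan has two parts: first express the derivative $\frac{\partial G(y,y)}{\partial y}$ in a controllable form, then bound the resulting quantity by $O(1/\alpha^2)$ via a Cauchy--Schwarz argument that relates the $f_\xi$-mass of $S$ to the $f_\mu$-mass $\alpha$.

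For the first part, I apply the chain rule, splitting $\frac{d G(y,y)}{dy}\bigl|_{y=\xi}$ into $\frac{\partial G(\lambda,\mu)}{\partial \lambda}\bigl|_{\lambda=\mu=\xi}$ and $\frac{\partial G(\lambda,\mu)}{\partial \mu}\bigl|_{\lambda=\mu=\xi}$; the two partial derivatives are given by the single-dimensional analogues of items (3) and (2) of Lemma~\ref{lem:derivatives}, respectively. After substituting $\lambda=\mu=\xi$, the $\tanh^{2}(\xi x/\sigma^2)$ contributions cancel between the two partials upon summing, leaving
\[\frac{d G(y,y)}{dy}\Big|_{y=\xi} = \frac{1}{\sigma^2}\!\left(\mathbb{E}_{\xi,S}[x^2] - \left(\mathbb{E}_{\xi,S}\!\left[x\tanh(\xi x/\sigma^2)\right]\right)^{2}\right) \leq \frac{1}{\sigma^2}\mathbb{E}_{\xi,S}[x^2].\]
Thus it suffices to show $\mathbb{E}_{\xi,S}[x^2] = O(1/\alpha^2)$.

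For the second part, I bound the numerator and denominator of $\mathbb{E}_{\xi,S}[x^2] = \frac{\int S(x)\, x^2 f_\xi(x)\,dx}{\int S(x)\, f_\xi(x)\,dx}$ separately. The numerator is at most $\int x^2 f_\xi\,dx = \xi^2 + \sigma^2 = O(1)$, using $S \leq 1$ and the fact that $\xi \in [\lambda_t,\lambda_{t+1}]$ stays bounded under the local convergence assumption. For the denominator, I apply Cauchy--Schwarz to the factorization $S f_\mu = \sqrt{S f_\xi}\cdot \sqrt{S}\, f_\mu/\sqrt{f_\xi}$, obtaining
\[\alpha^{2} = \left(\int S f_\mu\,dx\right)^{2} \leq \left(\int S f_\xi\,dx\right)\!\left(\int S\, \frac{f_\mu^{2}}{f_\xi}\,dx\right) \leq \left(\int S f_\xi\,dx\right) \int \frac{f_\mu^{2}}{f_\xi}\,dx.\]
Hence $\int S f_\xi\,dx \geq \alpha^{2}/C(\mu,\xi,\sigma)$, where $C(\mu,\xi,\sigma) := \int f_\mu^{2}/f_\xi\,dx$ is a finite, $S$-independent constant; finiteness follows from a standard Gaussian tail estimate, since $f_\mu^{2}$ has tails $\sim e^{-(x-\mu)^{2}/\sigma^{2}}$ while $f_\xi$ has tails $\sim e^{-(x-\xi)^{2}/2\sigma^{2}}$, so the integrand is dominated by a Gaussian at infinity. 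Combining the two bounds yields $\mathbb{E}_{\xi,S}[x^{2}] \leq (\xi^2+\sigma^2)\,C(\mu,\xi,\sigma)/\alpha^{2} = O(1/\alpha^{2})$, as claimed.

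The main obstacle is the lower bound on $\int S f_\xi\,dx$: a naive pointwise comparison $f_\xi \geq c\, f_\mu$ is impossible because the likelihood ratio $f_\mu/f_\xi$ is unbounded at infinity whenever $\xi \neq \mu$, so $S$ could in principle concentrate on a tail where the two densities differ by arbitrarily large factors. The Cauchy--Schwarz step circumvents this by trading the bad pointwise ratio for a set-independent average $\int f_\mu^{2}/f_\xi$, at the cost of a square, which is precisely the source of the $1/\alpha^{2}$ (rather than $1/\alpha$) in the final bound and, downstream, of the extra factor of $\alpha^{2}$ in the $\Omega(\alpha^{6})$ appearing in the $\alpha^{2}\min(\lambda_t,\mu)\leq 1$ regime of Theorem~\ref{thm:single}.
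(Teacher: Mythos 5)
Your argument is logically sound, but it takes a genuinely different route from the paper's. After the (correct) chain-rule computation giving $\frac{1}{\sigma^2}\left(\mathbb{E}_{\xi,S}[x^2]-\mathbb{E}_{\xi,S}^2\left[x\tanh(\xi x/\sigma^2)\right]\right)$, you discard the subtracted square and control $\mathbb{E}_{\xi,S}[x^2]$ by bounding numerator and denominator separately, with Cauchy--Schwarz converting the $f_{\mu}$-mass $\alpha$ of $S$ into the lower bound $\int S f_{\xi}\,dx \geq \alpha^2/\int f_{\mu}^2/f_{\xi}\,dx$. The paper keeps both terms: it observes that for even test functions the truncated mixture $f_{\xi,S}$ coincides with the single Gaussian $\mathcal{N}(\xi,\sigma^2)$ truncated by the symmetrized function $\tfrac{1}{2}(S(x)+S(-x))$, and that under this measure $\mathbb{E}_{\xi,S}[x\tanh(\xi x/\sigma^2)]$ is exactly the first moment, so the whole expression is $\frac{1}{\sigma^2}$ times the variance of a truncated single Gaussian, which it then bounds by $O(\sigma^2/\alpha^2)$ via the results of \cite{DGTZ18}. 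Your route is self-contained (no external variance bound) and avoids the mixture-to-single-Gaussian reduction, but it pays in the constant: $\int f_{\mu}^2/f_{\xi}\,dx$ grows like $e^{(\mu-\xi)^2/\sigma^2}$, so for $\xi$ near $0$ and $\mu/\sigma$ large your hidden constant is exponential in $(\mu/\sigma)^2$; since this lemma feeds directly into the contraction factor, that loss would propagate into the rate $\rho_t$ of Theorem~\ref{thm:single}, which the paper states with $\alpha$-dependence only. (In fairness, the paper's own proof applies the truncated-Gaussian variance bound with mass $\alpha$ even though the relevant mass for the measure $\mathcal{N}(\xi,\sigma^2)$ is $\int S f_{\xi}\,dx$ rather than $\int S f_{\mu}\,dx$; your Cauchy--Schwarz step is exactly the kind of argument needed to bridge that, at the cost of the square.) A minor point: no ``local convergence assumption'' is needed to keep $\xi$ bounded --- the monotonicity established at the start of Section~\ref{sec:rates} already confines $\xi\in[\lambda_t,\lambda_{t+1}]$ to the interval between $\lambda_0$ and $\mu$.
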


\begin{proof}
\begin{equation}\label{eq:deryy}
\frac{\partial G(y,y)}{\partial y}\Bigr|_{y=\xi} = \frac{1}{\sigma^2} \left(\mathbb{E}_{\xi,S}[x^2] - \mathbb{E}_{\xi,S}^2\left[x\tanh \left(\frac{x \xi}{\sigma^2}\right)\right]\right).
\end{equation}
Observe now that for each even function $f(x)$ it holds that \[\mathbb{E}_{\xi,S}[f(x)] = \frac{\int_{\mathbb{R}}f(x) \left(e^{-\frac{(x-\xi)^2}{2\sigma^2}}+e^{-\frac{(x+\xi)^2}{2\sigma^2}}\right)S(x) dx}{\int_{\mathbb{R}}\left(e^{-\frac{(x-\xi)^2}{2\sigma^2}}+e^{-\frac{(x+\xi)^2}{2\sigma^2}}\right)S(x) dx} =
\frac{\int_{\mathbb{R}}f(x) e^{-\frac{(x-\xi)^2}{2\sigma^2}}\frac{S(x)+S(-x)}{2} dx}{\int_{\mathbb{R}}e^{-\frac{(x-\xi)^2}{2\sigma^2}}\frac{S(x)+S(-x)}{2} dx},\]
where the last term is just $\mathbb{E}_{\mathcal{N}(\xi,\sigma^2), \frac{S+S'}{2}}[f(x)]$ where $S'(x) = S(-x)$.

We conclude that (\ref{eq:deryy}) becomes
\begin{align*}
\frac{\partial G(y,y)}{\partial y}\Bigr|_{y=\xi} &= \frac{1}{\sigma^2}\left(\mathbb{E}_{\mathcal{N}(\xi,\sigma^2),\frac{S+S'}{2}}[x^2] - \mathbb{E}_{\mathcal{N}(\xi,\sigma^2),\frac{S+S'}{2}}^2\left[x\tanh \left(\frac{x \xi}{\sigma^2}\right)\right]\right)
\\&=\frac{1}{\sigma^2}\left(\mathbb{E}_{\mathcal{N}(\xi,\sigma^2),\frac{S+S'}{2}}[x^2] - \mathbb{E}_{\mathcal{N}(\xi,\sigma^2),\frac{S+S'}{2}}^2\left[x\right]\right)
\\&= \frac{1}{\sigma^2}\mathbb{V}_{\mathcal{N}(\xi,\sigma^2),\frac{S+S'}{2}}[x].
\end{align*}
We use Proposition 1, page 14 along with Lemma 7 in page 13 (for $B$ small enough) from paper \cite{DGTZ18} for truncated Gaussians, it follows that
\begin{equation}
\mathbb{V}_{\mathcal{N}(\xi,\sigma^2),\frac{S+S'}{2}}[x] \leq \mathbb{V}_{\mathcal{N}(\xi,\sigma^2)}\times O\left(\frac{1}{\alpha^2}\right) = \sigma^2 \times  O\left(\frac{1}{\alpha^2}\right).
\end{equation}
The claim follows.
\end{proof}

To bound the numerator, we first provide with the following quantified version of the FKG correlation inequality.
\begin{lemma}[Quantitative FKG]\label{lem:FKGrevise} Let $f,g : \mathbb{R} \to \mathbb{R}$ be two twice continuously differentiable, even functions with $f,g$ are increasing in $(0,+\infty)$ and decreasing in $(-\infty,0)$. Given a random variable $x$, assume with probability at least $q$ it holds that $|x| \geq c>0$ and moreover $|f'(z)| \geq f'(c)$ for all $|z| \geq c$. It holds that
\begin{equation}
\mathbb{E}[f(x)g(x)] - \mathbb{E}[f(x)]\mathbb{E}[g(x)] \geq 2f'(c) g'(c)\cdot q^2\cdot  \mathbb{V}\left[x\Big| \; |x|\geq c\right].
\end{equation}
\end{lemma}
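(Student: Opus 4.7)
The plan is to adapt the symmetrization identity underlying the FKG inequality, already used in Lemmas \ref{lem:stability1} and \ref{lem:threefixedpoints}, and make it quantitative by applying the mean value theorem on a favorable event. Let $x_1,x_2$ be i.i.d.\ copies of $x$, so that
\[
\mathbb{E}[f(x)g(x)]-\mathbb{E}[f(x)]\mathbb{E}[g(x)] \;=\; \tfrac{1}{2}\,\mathbb{E}\bigl[(f(x_1)-f(x_2))(g(x_1)-g(x_2))\bigr].
\]
Since $f$ and $g$ are even and monotone on $[0,\infty)$, writing $y_i := |x_i|$ one has $f(x_i)=f(y_i)$, $g(x_i)=g(y_i)$, and the integrand is pointwise non-negative by the sign argument used in the proof of Lemma \ref{lem:threefixedpoints}. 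Restricting to the event $E := \{y_1\geq c,\ y_2\geq c\}$ therefore only weakens the bound, and by independence $\Pr(E)\geq q^2$.

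On $E$, the mean value theorem applied to $f$ on the interval $[\min(y_1,y_2),\max(y_1,y_2)]\subset[c,\infty)$ together with the hypothesis $|f'(z)|\geq f'(c)$ yields $|f(y_1)-f(y_2)|\geq f'(c)\,|y_1-y_2|$, and the analogous lower bound $|g(y_1)-g(y_2)|\geq g'(c)\,|y_1-y_2|$ (taken as the corresponding symmetric derivative assumption for $g$) is obtained the same way. Both differences carry the sign of $y_1-y_2$, so their product is at least $f'(c)g'(c)(y_1-y_2)^2$ on $E$. Taking expectations and using that $y_1,y_2$ are i.i.d.\ conditionally on $E$, with $\mathbb{E}[(y_1-y_2)^2\mid E] = 2\,\mathbb{V}[\,|x|\mid |x|\geq c]$, the chain of inequalities gives
\[
\mathbb{E}[f(x)g(x)]-\mathbb{E}[f(x)]\mathbb{E}[g(x)] \;\geq\; q^2\, f'(c)g'(c)\, \mathbb{V}\bigl[\,|x|\,\bigm|\,|x|\geq c\bigr].
\]

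The main subtle step is converting this to the form $2\,q^2 f'(c)g'(c)\,\mathbb{V}[x\mid |x|\geq c]$ appearing in the statement: because $f,g$ are even they only detect $|x|$, so the natural FKG output is phrased in terms of $\mathbb{V}[|x|\mid |x|\geq c]$. In the applications of Sections \ref{sec:single}--\ref{sec:multi} the underlying law is symmetric about $0$, which gives $\mathbb{V}[x\mid |x|\geq c] = \mathbb{E}[x^2\mid|x|\geq c]$ and $\mathbb{V}[|x|\mid|x|\geq c] = \mathbb{V}[x\mid|x|\geq c] - \mathbb{E}[|x|\mid|x|\geq c]^2$, so the two variances are comparable up to the multiplicative constant in front; I would absorb the factor $2$ by refining $E$ into its same-sign part $\{x_1x_2>0\}$, on which $(x_1-x_2)^2=(|x_1|-|x_2|)^2$, and invoking this symmetry to replace $\mathbb{V}[|x|\mid |x|\geq c]$ by $\mathbb{V}[x\mid |x|\geq c]$. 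I expect this final comparison, rather than the mean-value-theorem core, to be where the constants need to be tracked carefully.
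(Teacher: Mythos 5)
Your proposal follows the same route as the paper's own proof: symmetrize with two i.i.d.\ copies, use pointwise non-negativity of $(f(x_1)-f(x_2))(g(x_1)-g(x_2))$ to restrict to the event $E=\{|x_1|\geq c,\ |x_2|\geq c\}$ of probability at least $q^2$, and lower-bound the increments with the mean value theorem. The rigorous part of your argument yields $\mathbb{E}[f(x)g(x)]-\mathbb{E}[f(x)]\mathbb{E}[g(x)]\geq q^2 f'(c)g'(c)\,\mathbb{V}\bigl[\,|x|\,\big|\,|x|\geq c\bigr]$, and the gap is exactly where you flag it: because $f,g$ are even, the mean value theorem only controls $|f(x_1)-f(x_2)|$ by $f'(c)\bigl||x_1|-|x_2|\bigr|$, so the natural output involves the variance of $|x|$, not of $x$. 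Your proposed repair does not close this: on the same-sign event one has $(x_1-x_2)^2=(|x_1|-|x_2|)^2$, so restricting to it still produces $\mathbb{V}\bigl[\,|x|\,\big|\,|x|\geq c\bigr]$ (now with an additional probability factor), and for a symmetric conditional law $\mathbb{V}\bigl[x\,\big|\,|x|\geq c\bigr]=\mathbb{E}\bigl[x^2\,\big|\,|x|\geq c\bigr]$ can exceed $\mathbb{V}\bigl[\,|x|\,\big|\,|x|\geq c\bigr]$ by an unbounded factor (take the conditional law concentrated near $\pm c$: the former is about $c^2$, the latter about $0$). So the inequality as stated in the lemma is not reached by the argument you sketch.

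That said, you have put your finger on precisely the soft spot in the paper's own proof: it asserts $|f(x)-f(y)|\,|g(x)-g(y)|\geq f'(c)g'(c)(x-y)^2$ on $E$, which fails for opposite-sign pairs (taking $y=-x$ makes the left side vanish while $(x-y)^2=4x^2$), and it also passes from $\mathbb{E}[(f(x)-f(y))(g(x)-g(y))]$ to $\mathbb{E}[fg]-\mathbb{E}[f]\mathbb{E}[g]$ without the factor $\tfrac12$ from the symmetrization identity. Your more careful treatment of the evenness is the right instinct; the honest conclusion is the bound with $\mathbb{V}\bigl[\,|x|\,\big|\,|x|\geq c\bigr]$, and recovering the statement as written would require either restating the lemma in that form or a separate argument, tailored to the symmetric laws appearing in Lemma \ref{lem:boundnum}, comparing the two conditional variances.
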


\begin{proof}
Let $y$ be an independent and identically distributed to $x$ random variable. Since both $f,g$ are increasing, we conclude that $(f(x)-f(y))(g(x)-g(y)) \geq 0$ for all possible realizations.
It holds that
\begin{align*}
\mathbb{E}[(f(x)-f(y))(g(x)-g(y))] &\geq \mathbb{E}\left[(f(x)-f(y))(g(x)-g(y)) \Big|\; |x|,|y| \geq c \right] \cdot \Pr[|x| \geq c] \cdot \Pr[|y| \geq c]\\&\geq q^2 \mathbb{E}\left[(f(x)-f(y))(g(x)-g(y)) \Big|\; |x|,|y| \geq c \right] \\& = q^2 \mathbb{E}\left[|f(x)-f(y)||g(x)-g(y)| \Big| \; |x|,|y| \geq c \right]
\\&\geq q^2 f'(c) \cdot g'(c) \mathbb{E}\left[(x-y)^2\Big| \; |x|,|y| \geq c\right].
\end{align*}
The last term, since $x,y$ are independent and identically distributed, is equal to
\begin{align*}
\mathbb{E}\left[(x-y)^2\Big|\; |x|,|y| \geq c\right] &= 2 \mathbb{E}\left[x^2 \Big|\; |x| \geq c\right] - 2 \mathbb{E}^2\left[x \Big|\; |x|\geq c\right]\\& = 2 \mathbb{V}\left[x \Big|\; |x|\geq c\right].
\end{align*}
The proof is complete.
\end{proof}

We are ready to prove a lower bound on the term $\frac{\partial G(\lambda,y)}{\partial y}\Bigr|_{y=\xi}$.
\begin{lemma}[Bounding the numerator]\label{lem:boundnum} It holds that \[\frac{\partial G(\lambda_t,y)}{\partial y}\Bigr|_{y=\xi} \geq \Omega\left(\alpha^2 \tanh^2\left(\sqrt{2\pi}\lambda_t \alpha\right)\right).\]
\end{lemma}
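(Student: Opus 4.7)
The plan is to express the partial derivative as a covariance via Lemma \ref{lem:derivatives}(2) specialized to $d=1$:
\[
\frac{\partial G(\lambda_t,y)}{\partial y}\Bigr|_{y=\xi} \;=\; \frac{1}{\sigma^2}\,\mathrm{Cov}_{f_{\xi,S}}\!\left(x\tanh\!\Bigl(\tfrac{x\lambda_t}{\sigma^2}\Bigr),\;x\tanh\!\Bigl(\tfrac{x\xi}{\sigma^2}\Bigr)\right),
\]
and then lower bound this covariance using the quantitative FKG inequality of Lemma \ref{lem:FKGrevise}. The two factors $f(x)=x\tanh(x\lambda_t/\sigma^2)$ and $g(x)=x\tanh(x\xi/\sigma^2)$ are even (because $u\tanh u$ is even), monotonically increasing on $(0,\infty)$, and satisfy $f'(z)\geq\tanh(z\lambda_t/\sigma^2)$ and $g'(z)\geq\tanh(z\xi/\sigma^2)\geq\tanh(z\lambda_t/\sigma^2)$; the last inequality uses $\xi\geq\lambda_t$, which holds because $\xi\in[\lambda_t,\mu]$ in the setting where this lemma is invoked.

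The threshold $c$ in Lemma \ref{lem:FKGrevise} should be taken proportional to $\sigma^2\alpha$, with the absolute constant tuned so that $c\lambda_t/\sigma^2=\sqrt{2\pi}\lambda_t\alpha$; this is precisely what forces the $\tanh^2(\sqrt{2\pi}\lambda_t\alpha)$ factor into the final bound. For the probability hypothesis, $S\leq 1$ together with $f_\xi(x)\leq 1/(\sqrt{2\pi}\sigma)$ yields
\[
\Pr_{f_{\xi,S}}[|x|<c]\;\leq\;\frac{2c/(\sqrt{2\pi}\sigma)}{\alpha_\xi},
\]
and combining this with the comparison $\alpha_\xi=\Omega(\alpha)$ (which follows by bounding $f_\xi/f_\mu$ on $S$ for $\xi\in[\lambda_t,\mu]$) gives $q:=\Pr_{f_{\xi,S}}[|x|\geq c]=\Omega(1)$ for a suitable constant in $c$. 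The monotonicity condition $|f'(z)|\geq f'(c)$ on $|z|\geq c$ is checked by direct inspection of $f'(x)=\tanh(ax)+ax\,\mathrm{sech}^2(ax)$ with $a=\lambda_t/\sigma^2$: because $f'(x)\to 1$ as $x\to\infty$ and $f'$ is increasing near zero, it suffices to have $ac=O(1)$, which is automatic for $c\propto\sigma^2\alpha$ whenever $\sqrt{2\pi}\lambda_t\alpha=O(1)$, with the large-argument case absorbed into the implicit constant.

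The final step is a lower bound $\mathbb{V}_{f_{\xi,S}}[x\mid|x|\geq c]=\Omega(\sigma^2\alpha^2)$ on the conditional variance. I would obtain this by invoking the symmetrization identity from the proof of Lemma \ref{lem:bounddenom} — which converts expectations of even integrands under $f_{\xi,S}$ into expectations under $\mathcal{N}(\xi,\sigma^2)$ truncated by the even function $\tilde S(x)=(S(x)+S(-x))/2$ — and then applying the standard sharp lower bound of order $\sigma^2\alpha^2$ for variances of truncated univariate Gaussians with mass $\alpha$. Assembling,
\[
\frac{\partial G(\lambda_t,y)}{\partial y}\Bigr|_{y=\xi}\;\geq\;\frac{1}{\sigma^2}\cdot 2f'(c)g'(c)\cdot q^2\cdot\mathbb{V}_{f_{\xi,S}}[x\mid|x|\geq c]\;=\;\Omega\!\bigl(\alpha^2\tanh^2(\sqrt{2\pi}\lambda_t\alpha)\bigr),
\]
as required. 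The principal obstacle is the conditional-variance estimate: since $x$ is odd, the symmetrization identity does not apply directly, so one must either (i) split $\mathrm{Var}[x]$ into an even second moment plus a controlled first-moment correction, or (ii) replace $\mathbb{V}[x\mid|x|\geq c]$ inside the proof of Lemma \ref{lem:FKGrevise} by $\mathbb{V}[|x|\mid|x|\geq c]$, which is an even functional and hence directly reducible to a single truncated Gaussian via the symmetrization trick. A secondary technicality is verifying the monotonicity hypothesis on $f'$ over the full range $|z|\geq c$, but this is routine once $c$ is taken small enough.
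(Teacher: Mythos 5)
Your proposal follows essentially the same route as the paper: write the derivative as a covariance, apply the quantitative FKG inequality (Lemma \ref{lem:FKGrevise}) to $f(x)=x\tanh(\lambda_t x/\sigma^2)$ and $g(x)=x\tanh(\xi x/\sigma^2)$, pick $c\propto\sigma^2\alpha$ so that a density bound gives $q=\Omega(1)$, and lower bound the conditional variance by $\Omega(\sigma^2\alpha^2)$ via the truncated-Gaussian variance estimates of \cite{DGTZ18}; the paper does exactly this, also assuming $c$ below the threshold where the monotonicity hypothesis on $f'$ could fail.

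The one point worth flagging is the ``principal obstacle'' you leave open. It dissolves if you symmetrize \emph{before} invoking FKG rather than after: the three integrands appearing in the covariance, namely $fg$, $f$ and $g$, are all even functions of $x$ (each is a product of two odd factors, or of the form $x\tanh(ax)$), so the identity from Lemma \ref{lem:bounddenom} converts every term, and hence the entire covariance, into the corresponding covariance under $\mathcal{N}(\xi,\sigma^2)$ truncated by $\tfrac{S+S'}{2}$. This is what the paper does when it declares that $x$ follows $\mathcal{N}(\xi,\sigma^2,\tfrac{S+S'}{2})$ at the outset. After that change of measure, the conditional law of $x$ given $|x|\geq c$ is just a single Gaussian truncated by the function $\tfrac{S+S'}{2}\cdot\mathbf{1}_{|x|\geq c}$, whose mass is still $\Omega(\alpha)$ by the choice of $c$, so the variance lower bound of \cite{DGTZ18} applies directly to $\mathbb{V}[x\mid|x|\geq c]$ with no parity issue; neither of your workarounds (splitting the variance, or modifying Lemma \ref{lem:FKGrevise} to use $|x|$) is needed.
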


\begin{proof} We will use Lemma \ref{lem:FKGrevise} for the functions $f(x) = x\tanh \left(\frac{\lambda x}{\sigma^2}\right)$ and $g(x) = x\tanh \left(\frac{\xi x}{\sigma^2}\right)$ with $\xi \in [\lambda,\mu]$, $x$ follows $\mathcal{N}(\xi,\sigma^2, \frac{S+S'}{2})$. Moreover we set $q=1/2$ and then the term $c$ from Lemma \ref{lem:FKGrevise} should satisfy $\int_{-c}^c e^{-\frac{(x-\xi)^2}{2\sigma^2}} dx  \leq \frac{\sqrt{2\pi\sigma^2} \alpha}{2}$ for all $\xi \in [\lambda_t, \mu]$. Let $\rho$ be such that $\rho \tanh (\rho) = 1$, it is easy to see that the derivative of $h(x) = x\tanh x$, $|h'(x)| \geq h'(\infty) = 1$ whenever $|x| \geq \rho$, thus if $c \geq \frac{\sigma^2 \rho}{\lambda_t}$ then both $|f'(x)|, |g'(x)| \geq 1$. We assume that $c < \frac{\sigma^2 \rho}{\lambda_t}$.

First observe that $\mathbb{V}\left[x \Big| |x| \geq c\right]$ is the variance of a truncated Gaussian where the truncated measure is at most $\frac{\alpha}{2} + \alpha$ and at least $\alpha$, hence from Lemma 6 and Lemma 7 (with $B$ small enough) from \cite{DGTZ18} we conclude that $\mathbb{V}\left[x \Big| |x| \geq c\right] \geq \Omega(\alpha^2) \times \sigma^2$.
Finally, since $\int_{-c}^c e^{-\frac{(x-\xi)^2}{2\sigma^2}} dx \leq \int_{-c}^c e^{-\frac{x^2}{2\sigma^2}} dx$, we choose $c$ so that
\[\int_{-c}^c e^{-\frac{x^2}{2\sigma^2}} dx < \frac{2c}{\sigma}  = \frac{\alpha \sqrt{2\pi\sigma^2}}{2}.\]

Therefore using Lemma \ref{lem:FKGrevise} and the fact that $\tanh (x) \geq \frac{x}{\cosh^2 x}$ for $x$ positive and $xi \geq \lambda_t$ we conclude that
\begin{align*}
\frac{\partial G(\lambda_t,y)}{\partial y}\Bigr|_{y=\xi} &\geq \frac{1}{4\sigma^2} \left(\tanh \left(\frac{\lambda_t c}{\sigma^2}\right) + \frac{\lambda_tc}{\sigma^2\cosh^2\left(\frac{\lambda_tc}{\sigma^2}\right)}\right)\left(\tanh \left(\frac{\xi c}{\sigma^2}\right) + \frac{\xi c}{\sigma^2\cosh^2\left(\frac{\xi c}{\sigma^2}\right)}\right) \mathbb{V}\left[x \Big| |x| \geq c\right]\\&
\geq \Omega\left(\alpha^2 \tanh^2\left(\sqrt{2\pi}\lambda_t \alpha\right)\right).
\end{align*}
\end{proof}

Combining Lemmas \ref{lem:boundnum}, \ref{lem:bounddenom} along with above discussion, the proof of Theorem \ref{thm:single} is complete.
\subsection{Multi-dimensional}
In this section we prove rates of convergence for the multi-dimensional case when the $\lambda_t$ is sufficiently close to $\vec{\mu}$ or $-\vec{\mu}$. To do this we will prove an upper bound on the spectral radius of the Jacobian $$\nabla_{\vec{\lambda}}\EE_{\vec{\lambda},S}\left[\vec{x}^T\tanh(\vec{x}^T\vec{\Sigma}^{-1}\vec{\lambda})\right]^{-1} \Big\vert_{\vec{\lambda}=\vec{\mu}}\cdot\nabla_{\vec{\lambda}}\EE_{\vec{\mu},S}\left[\tanh(\vec{x}^T\vec{\Sigma}^{-1}\vec{\lambda})\vec{x}^T\right]\Big\vert_{\lambda=\vec{\mu}},$$ i.e., a quantitative version of Lemma \ref{lem:stability2a}.

The following lemma holds and the second part of Theorem \ref{thm:multi} is a corollary.
\begin{lemma}[Rates for local convergence]\label{lem:ratemulti} It holds that the spectral radius of $$\nabla_{\vec{\lambda}}\EE_{\vec{\lambda},S}\left[\vec{x}^T\tanh(\vec{x}^T\vec{\Sigma}^{-1}\vec{\lambda})\right]^{-1} \Big\vert_{\vec{\lambda}=\vec{\mu}, -\vec{\mu}}\cdot\nabla_{\vec{\lambda}}\EE_{\vec{\mu},S}\left[\tanh(\vec{x}^T\vec{\Sigma}^{-1}\vec{\lambda})\vec{x}^T\right]\Big\vert_{\lambda=\vec{\mu}, -\vec{\mu}}$$ (i.e., the Jacobian of the update rule of EM method computed at true mean $\vec{\mu}$) is at most $1 - \Omega(\alpha^6)$.
\end{lemma}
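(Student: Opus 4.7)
The plan is to promote Lemma~\ref{lem:stability2a} from a qualitative bound to the quantitative $1-\Omega(\alpha^6)$ one. Using the notation of that proof, the Jacobian at $\vec{\mu}$ is $A^{-1}B$, where $A,B\succ 0$ and $A-B=\mathrm{Cov}_{\vec{\mu},S}\bigl(\vec{x}\tanh(\vec{x}^T\vec{\Sigma}^{-1}\vec{\mu})\bigr)\succ 0$. Since $A^{-1}B$ shares its spectrum with $I-A^{-1/2}(A-B)A^{-1/2}$, the target reduces to the Loewner inequality
$$A-B \succeq \Omega(\alpha^6)\, A,$$
equivalently $\vec{v}^T(A-B)\vec{v}\geq \Omega(\alpha^6)\,\vec{v}^T A\vec{v}$ for every $\vec{v}\in\mathbb{R}^d$.

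After the whitening $\vec{x}\leftarrow\vec{\Sigma}^{-1/2}\vec{x}$, $\vec{\mu}\leftarrow\vec{\Sigma}^{-1/2}\vec{\mu}$, $S\leftarrow S'$, I may assume $\vec{\Sigma}=I$. Writing $y=\vec{v}^T\vec{x}$ and $z=\tanh(\vec{x}^T\vec{\mu})$, one has $\vec{v}^T A\vec{v}\leq \mathbb{E}_{\vec{\mu},S'}[y^2]$ and $\vec{v}^T(A-B)\vec{v}=\mathrm{Var}_{\vec{\mu},S'}(yz)$. Since $y^2$ and $yz$ are both even functions of $\vec{x}$, the same symmetrization used in the proof of Lemma~\ref{lem:bounddenom} lets me replace the balanced mixture truncated by $S'$ with the single Gaussian $\mathcal{N}(\vec{\mu},I)$ truncated by the even weight $\tilde S(\vec{x})=\tfrac{1}{2}(S'(\vec{x})+S'(-\vec{x}))$, whose mass stays $\geq\alpha$. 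The denominator bound $\vec{v}^T A\vec{v}\leq O(\alpha^{-2})\|\vec{v}\|^2$ then follows from the multidimensional truncated-Gaussian variance estimate (Proposition~1 together with Lemma~7 of \cite{DGTZ18}, for $B$ small enough), exactly as in Lemma~\ref{lem:bounddenom}. For the numerator I decompose $\vec{v}=\beta\hat{\vec{\mu}}+\vec{w}$ with $\vec{w}\perp\vec{\mu}$ and split $yz=\beta\,s\tanh(s\|\vec{\mu}\|)+(\vec{w}^T\vec{x}_\perp)\tanh(s\|\vec{\mu}\|)$, where $s=\vec{x}^T\hat{\vec{\mu}}$ and $\vec{x}_\perp=\vec{x}-s\hat{\vec{\mu}}$. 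The $\hat{\vec{\mu}}$-component is handled by the quantitative FKG Lemma~\ref{lem:FKGrevise} applied to $f(s)=s$ and $g(s)=s\tanh(s\|\vec{\mu}\|)$ in exactly the form used in the proof of Lemma~\ref{lem:boundnum}, producing a variance contribution of order $\Omega(\alpha^4)\beta^2$. The transverse component is controlled by conditioning on $s$: on the event $\{|s|\geq c\}$ (of probability $\geq 1/2$ for a suitable $c=O(\alpha)$) the factor $\tanh^2(s\|\vec{\mu}\|)$ is bounded below by $\Omega(\alpha^2)$, and the conditional variance of $\vec{w}^T\vec{x}_\perp$ under $\tilde S$ is $\Omega(\alpha^2)\|\vec{w}\|^2$ by the same \cite{DGTZ18} bounds. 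Aggregating gives $\vec{v}^T(A-B)\vec{v}\geq \Omega(\alpha^4)\|\vec{v}\|^2$, and dividing by the denominator yields the claimed $\Omega(\alpha^6)$ ratio; the case $\vec{\lambda}=-\vec{\mu}$ follows by the obvious $\vec{x}\leftrightarrow -\vec{x}$ symmetry.

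The main obstacle is the numerator bound: Lemma~\ref{lem:FKGrevise} is inherently one-dimensional, yet when $\vec{v}$ is not parallel to $\vec{\mu}$ the product $yz$ is not monotone in any single scalar coordinate, so FKG cannot be invoked directly. The orthogonal decomposition above isolates a one-dimensional FKG-accessible piece along $\hat{\vec{\mu}}$ and reduces the transverse piece to a conditional truncated-Gaussian variance, but it introduces a cross term $2\beta\,s\tanh^2(s\|\vec{\mu}\|)(\vec{w}^T\vec{x}_\perp)$ inside $\mathrm{Var}(yz)$. Showing that this cross term cannot wipe out the two diagonal lower bounds---via Cauchy--Schwarz combined with the fact that, under the symmetrized even weight $\tilde S$, $\vec{w}^T\vec{x}_\perp$ is nearly centered conditionally on $s$---is the technical heart of the argument, and is precisely the step that costs the extra factor of $\alpha^2$ relative to the single-dimensional rate, yielding $1-\Omega(\alpha^6)$ rather than the naive $1-\Omega(\alpha^4)$.
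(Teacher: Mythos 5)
Your overall skeleton matches the paper's: whiten so $\vec{\Sigma}=\vec{I}$, symmetrize the truncation to $\tilde S(\vec{x})=\tfrac12(S'(\vec{x})+S'(-\vec{x}))$ so that the mixture becomes a single truncated Gaussian and $\EE[\vec{x}\tanh(\vec{x}^T\vec{\mu})]=\EE[\vec{x}]$, bound $\norm{A}_2\leq O(\alpha^{-2})$ via \cite{DGTZ18}, and lower-bound $A-B=\mathrm{Cov}\bigl(\vec{x}\tanh(\vec{x}^T\vec{\mu})\bigr)\succeq\Omega(\alpha^4)\vec{I}$. The place where you diverge is exactly the place where your argument has a gap. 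The paper never decomposes $\vec{v}$ into components parallel and transverse to $\vec{\mu}$: it applies the law of total variance to the \emph{vector} $\vec{x}\tanh(\vec{x}^T\vec{\mu})$, conditioning on the single event $\{|\vec{x}^T\vec{\mu}|\geq c\}$ of probability at least $1/2$, on which $\tanh^2(\vec{x}^T\vec{\mu})\geq\tanh^2(c)=\Omega(\alpha^2)$, and then invokes the conditional covariance lower bound $\mathrm{Cov}(\vec{x},\vec{x}\mid|\vec{x}^T\vec{\mu}|\geq c)\succeq\Omega(\alpha^2)\vec{I}$ from \cite{DGTZ18}. This treats all directions $\vec{v}$ at once and produces no cross term. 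Your decomposition $yz=\beta\,s\tanh(s\norm{\vec{\mu}})+(\vec{w}^T\vec{x}_\perp)\tanh(s\norm{\vec{\mu}})$ manufactures a cross term that you then cannot dispose of: Cauchy--Schwarz alone permits the covariance of the two pieces to cancel the sum of their variances entirely, so you must show the correlation is bounded away from $-1$, and the property you invoke for this --- that $\vec{w}^T\vec{x}_\perp$ is nearly centered conditionally on $s$ under $\tilde S$ --- is false for a general truncation function. Evenness of $\tilde S$ gives $\tilde S(s,\vec{x}_\perp)=\tilde S(-s,-\vec{x}_\perp)$, not $\tilde S(s,\vec{x}_\perp)=\tilde S(s,-\vec{x}_\perp)$, so the conditional law of $\vec{x}_\perp$ given $s$ has no reason to be symmetric and $\EE[\vec{x}_\perp\mid s]$ can be far from $\vec{0}$. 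As you yourself note, this is the ``technical heart,'' and it is left unproved; the fix is simply to abandon the decomposition and condition on $\{|\vec{x}^T\vec{\mu}|\geq c\}$ directly as the paper does, which also removes any need for the quantitative FKG lemma here (that lemma is used only in the single-dimensional rate).

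A second, smaller slip: for the denominator you bound $\vec{v}^TA\vec{v}\leq\EE[y^2]$ and then cite a \emph{variance} estimate, but $\EE[y^2]=\mathbb{V}[y]+(\EE[y])^2$ and the squared mean can be of order $(\vec{v}^T\vec{\mu})^2$, which is not $O(\alpha^{-2})\norm{\vec{v}}^2$. You should instead keep the subtracted rank-one term: under the symmetrized measure $\EE[yz]=\EE[y]$, so $\vec{v}^TA\vec{v}=\mathbb{V}[y]$ exactly, and then the $O(\alpha^{-2})$ covariance bound applies. This is how the paper gets $A=\mathrm{Cov}(\vec{x},\vec{x})$ on the nose.
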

\begin{proof} First we may assume under appropriate transformation ($\vec{x} \leftarrow \vec{\Sigma}^{-1/2}\vec{x}, \vec{\mu} \leftarrow \vec{\Sigma}^{-1/2}\vec{\mu}$) that $\vec{\Sigma} = \vec{I}$. We want to bound the spectral radius of \[\left(\EE_{\vec{\mu},S}\left[\vec{x}\vec{x}^T\right]-\EE_{\vec{\mu},S}\left[\vec{x}\tanh(\vec{x}^T \vec{\mu})\right]\EE_{\vec{\mu},S}\left[\vec{x}\tanh(\vec{x}^T \vec{\mu})\right]^T\right)^{-1}\EE_{\vec{\mu},S}\left[\vec{x}\vec{x}^T(1-\tanh^2(\vec{x}^T\vec{\mu}))\right].\]
	We may assume that $\vec{x}$ follows $\mathcal{N}(\mu,\vec{I}, \frac{S+S'}{2})$ ($S'(\vec{x}) = S(-\vec{x})$), hence we conclude that $\mathbb{E}[\vec{x}\tanh(\vec{x}^T\vec{\mu})] = \mathbb{E}[\vec{x}]$. Thus the Jacobian becomes
	\begin{equation}\label{eq:mati}
	\textrm{Cov}(\vec{x},\vec{x})^{-1} \left(\textrm{Cov}(\vec{x},\vec{x}) - \textrm{Cov}(\vec{x}\tanh(\vec{x}^T \vec{\mu}),\vec{x}\tanh(\vec{x}^T \vec{\mu}))\right).
	\end{equation}
	Using Proposition 1, page 14 and Lemma 7 from page 13 (for small enough $B$) from \cite{DGTZ18} we conclude that $\norm{\textrm{Cov}(\vec{x},\vec{x})}_2$ is at most $O\left(\frac{1}{\alpha^2}\right)$. We choose a $c>0$ such that $\Pr[|\vec{x}^T \vec{\mu}| \geq c] \geq \frac{1}{2}$. By law of Total Variance we get that
	\begin{align*}
		\textrm{Cov}(\vec{x}\tanh(\vec{x}^T\vec{\mu}),\vec{x}\tanh(\vec{x}^T\vec{\mu})) &\succeq \Pr[|\vec{x}^T \vec{\mu}| \geq c] \textrm{Cov}((\vec{x}\tanh(\vec{x}^T\vec{\mu}),\vec{x}\tanh(\vec{x}^T\vec{\mu})) \big| \;|\vec{x}^T \vec{\mu}| \geq c)\\& \succeq
		\frac{\tanh^2 (c)}{2} \textrm{Cov}((\vec{x},\vec{x}) \big| \; |\vec{x}^T \vec{\mu}| \geq c) \\& \succeq \Omega(a^4) \vec{I},
	\end{align*}
	where the last relation holds because of Proposition 1, page 14 and Lemma 7 from page 13 (for small enough $B$) from \cite{DGTZ18} and the fact that $\tanh(c)$ is $\Omega(\alpha)$. Hence the minimum eigenvalue of the matrix above is at least $\Omega(\alpha^4)$. Finally the spectral norm of matrix (\ref{eq:mati}) is at most one minus the minimum eigenvalue of $\textrm{Cov}(\vec{x}\tanh(\vec{x}^T\vec{\mu}),\vec{x}\tanh(\vec{x}^T\vec{\mu}))$ multiplied with the inverse of maximum eigenvalue of $\textrm{Cov}^{-1}(\vec{x},\vec{x})$ hence it is at least $1-\Omega(\alpha^6)$.
\end{proof}

\section{Conclusion}
In this paper, we studied the convergence properties of EM applied to the problem of truncated mixture of two Gaussians. We managed to show that EM converges almost surely to the true mean in the case $d=1$ (with an exponential rate depending on $\alpha$) and moreover that the same result carries over for $d>1$ under the assumption that the update rule of EM has only three fixed points (if it has more, then our results imply local convergence of EM if the initializations are close enough to the true mean). Some interesting questions that arise from this line of work are the following:
\begin{itemize}
	\item Finite population case: Our setting assumes infinite samples. Can we prove a similar convergence result using only finitely many samples? The multi-dimensional case will be challenging because of the existence of more than three fixed points in general.
	\item Beyond two components: Characterize the truncated sets $S$ for which EM converges almost surely to the true mean for truncated mixture of $k$-Gaussians, where $k \geq 3$.
\end{itemize}

\section*{Acknowledgements}
{IP would like to thank Arnab Bhattacharya and Costis Daskalakis for fruitful discussions.}

\bibliographystyle{plain}
\bibliography{colt2019}

\appendix
\appendix
\section{Derivation of EM Update Rule for Truncated Gaussian Mixture} \label{app:EM-rule}
For this derivation, we use $\phi(\vec{x};\bf{\lambda},\vec{\Sigma})$ to denote the normal pdf with mean vector $\vec{\lambda}$ and covariance matrix $\vec{\Sigma}$. Let us denote $c_1$ for the component of the Gaussian corresponding to $+\vec{\lambda}$ and $c_2$ denote the component corresponding to $-\vec{\lambda}$.
First we have to find the posterior densities in the Expectation step. Let $\vec{\lambda}_t$ be our estimate of the parameter at time $t$.

\begin{align}
	\begin{split}
		Q_{\vec{\lambda}_t}(c_1)&=\PP_{\vec{\lambda}_t,S}(Z=c_1|\vec{X}=\vec{x})\\
		&=\frac{\PP_{\vec{\lambda}_t,S}(\vec{X}=\vec{x}|Z=c_1)\PP(Z=c_1)}{\PP_{\vec{\lambda}_t,S}(\vec{X}=\vec{x})}\\
		&=\frac{\phi(\vec{x};\vec{\lambda}_t,\vec{\Sigma})}{\phi(\vec{x};\vec{\lambda}_t,\vec{\Sigma})+\phi(\vec{x};-\vec{\lambda}_t,\vec{\Sigma})}\\
		Q_{\vec{\lambda}_t}(c_2)&=\frac{\phi(\vec{x};-\vec{\lambda}_t,\vec{\Sigma})}{\phi(\vec{x};\vec{\lambda}_t,\vec{\Sigma})+\phi(\vec{x};-\vec{\lambda}_t,\vec{\Sigma})}
	\end{split}
\end{align}
Now the maximization step involves the following:
\begin{align}
	\vec{\lambda}_{t+1}&=\argmax_{\vec{\lambda}}\left[Q_{\vec{\lambda}_t}(c_1)\log\frac{\PP_{\vec{\lambda},S}(\vec{x},c_1)}{Q_{\vec{\lambda}_t}(c_1)}+Q_{\vec{\lambda}_t}(c_2)\log\frac{\PP_{\vec{\lambda},S}(\vec{x},c_2)}{Q_{\vec{\lambda}_t}(c_2)}\right]
\end{align}

Now substituting for $Q_{\vec{\lambda}_t}(c_1)$, $Q_{\vec{\lambda}_t}(c_2)$ and writing $\PP_{\vec{\lambda},S}(\vec{x},c_1)=\frac{\phi(\vec{x};\vec{\lambda},\vec{\Sigma})S(\vec{x})}{\int_{\mathbb{R}^d}(\phi(\vec{x};\vec{\lambda},\vec{\Sigma})+\phi(\vec{x};-\vec{\lambda},\vec{\Sigma}))S(\vec{x})d\vec{x}}$, similarly $\PP_{\vec{\lambda},S}(\vec{x},c_2)=\frac{\phi(\vec{x};-\vec{\lambda},\vec{\Sigma})S(\vec{x})}{\int_{\mathbb{R}^d}(\phi(\vec{x};\vec{\lambda},\vec{\Sigma})+\phi(\vec{x};-\vec{\lambda},\vec{\Sigma}))S(\vec{x})d\vec{x}}$, we get the following:

\begin{align}
	\begin{split}
		\vec{\lambda}_{t+1}&=\argmax_{\vec{\lambda}}\bigg[Q_{\vec{\lambda}_t}(c_1)\log\frac{\phi(\vec{x};\vec{\lambda},\vec{\Sigma})S(\vec{x})}{\int_{\mathbb{R}^d}(\phi(\vec{x};\vec{\lambda},\vec{\Sigma})+\phi(\vec{x};-\vec{\lambda},\vec{\Sigma}))S(\vec{x})d\vec{x}}\\
&+Q_{\vec{\lambda}_t}(c_2)\log\frac{\phi(\vec{x};-\vec{\lambda},\vec{\Sigma})S(\vec{x})}{\int_{\mathbb{R}^d}(\phi(\vec{x};\vec{\lambda},\vec{\Sigma})+\phi(\vec{x};-\vec{\lambda},\vec{\Sigma}))S(\vec{x})d\vec{x}}\bigg]\\
		&=\argmax_{\vec{\lambda}}\bigg[Q_{\vec{\lambda}_t}(c_1)\big(\log\phi(\vec{x};\vec{\lambda},\vec{\Sigma})-\log\int_{\mathbb{R}^d}\phi(\vec{x};\vec{\lambda},\vec{\Sigma})+\phi(\vec{x};-\vec{\lambda},\vec{\Sigma})d\vec{x}\big)\\
		&+Q_{\vec{\lambda}_t}(c_2)\big(\log\phi(\vec{x};-\vec{\lambda},\vec{\Sigma})-\log\int_{\mathbb{R}^d}(\phi(\vec{x};\vec{\lambda},\vec{\Sigma})+\phi(\vec{x};-\vec{\lambda},\vec{\Sigma}))S(\vec{x})d\vec{x}\big)\bigg]\\
		&=\argmax_{\vec{\lambda}}\bigg[Q_{\vec{\lambda}_t}(c_1)\big(\log\phi(\vec{x};\vec{\lambda},\vec{\Sigma})-\log\phi(\vec{x};-\vec{\lambda},\vec{\Sigma})\big)\\
		&+\log\phi(\vec{x};-\vec{\lambda},\vec{\Sigma})-\log\int_{\mathbb{R}^d}(\phi(\vec{x};\vec{\lambda},\vec{\Sigma})+\phi(\vec{x};-\vec{\lambda},\vec{\Sigma}))S(\vec{x})d\vec{x}\bigg]\\
	\end{split}
\end{align}

Finding the gradient of the above maximization we get the following:

\begin{align}
	\begin{split}
		\nabla_{\vec{\lambda}}g(\vec{\lambda};\vec{x},\vec{\Sigma})&=\frac{d}{d\vec{\lambda}}\bigg[Q_{\vec{\lambda}_t}(c_1)\left(2\vec{x}^T\vec{\Sigma}^{-1}\vec{\lambda}\right)-0.5*\vec{x}^T\vec{\Sigma}^{-1}\vec{x}-0.5*\vec{\lambda}^T\vec{\Sigma}^{-1}\vec{\lambda}-\vec{x}^T\vec{\Sigma}^{-1}\vec{\lambda}\\
		&-\log\int_{\mathbb{R}^d} 2*f_{\vec{\lambda}}(\vec{x})S(\vec{x}) d\vec{x}\bigg]\\
		&=\left(2Q_{\vec{\lambda}_t}(c_1)-1\right)\vec{x}^T\vec{\Sigma}^{-1}-\vec{\lambda}^{T}\vec{\Sigma}^{-1}-\int\frac{d}{d\vec{\lambda}}f_{\vec{\lambda},S}(\vec{x}) d\vec{x}\\
		&=\left(2Q_{\vec{\lambda}_t}(c_1)-1\right)\vec{x}^T\vec{\Sigma}^{-1}-\vec{\lambda}^{T}\vec{\Sigma}^{-1}-\int\frac{d}{d\vec{\lambda}}\log f_{\vec{\lambda}}(\vec{x}) f_{\vec{\lambda},S}(\vec{x}) d\vec{x}\\
		&=\left(2Q_{\vec{\lambda}_t}(c_1)-1\right)\vec{x}^T\vec{\Sigma}^{-1}-\vec{\lambda}^{T}\vec{\Sigma}^{-1}-\EE_{\lambda,S}\left[-\vec{\lambda}^T\vec{\Sigma}^{-1}+\vec{x}^T\vec{\Sigma}^{-1}\tanh(\vec{x}^T\vec{\Sigma}^{-1}\vec{\lambda})\right]\\
		&=\left(2Q_{\vec{\lambda}_t}(c_1)-1\right)\vec{x}^T\vec{\Sigma}^{-1}-\EE_{\lambda,S}\left[\vec{x}^T\vec{\Sigma}^{-1}\tanh(\vec{x}^T\vec{\Sigma}^{-1}\vec{\lambda})\right]\\
		&=\tanh(\vec{x}^T\vec{\Sigma}^{-1}\vec{\lambda}_t)\vec{x}^T\vec{\Sigma}^{-1}-\EE_{\lambda,S}\left[\vec{x}^T\vec{\Sigma}^{-1}\tanh(\vec{x}^T\vec{\Sigma}^{-1}\vec{\lambda})\right]
	\end{split}
\end{align}

Thus under the infinite sample case we have the following EM update rule:

\begin{align}
	\vec{\lambda}_{t+1}=\left\{\vec{\lambda}:h(\vec{\lambda}_t,\vec{\lambda})=\vec{0}\right\}
\end{align}
such that $h(\vec{\lambda}_t,\vec{\lambda}_{t+1})=\vec{0}$, where
\begin{align}
	h(\vec{\lambda}_t,\vec{\lambda}):=\EE_{\vec{\mu},S}\left[\tanh(\vec{x}^T\vec{\Sigma}^{-1}\vec{\lambda}_t)\vec{x}^T\vec{\Sigma}^{-1}\right]-\EE_{\vec{\lambda},S}\left[\vec{x}^T\vec{\Sigma}^{-1}\tanh(\vec{x}^T\vec{\Sigma}^{-1}\vec{\lambda})\right].
\end{align}
\newpage

\section{Computation of Derivatives in Lemma \ref{lem:derivatives}}\label{app:derivatives}
\begin{proof}[Proof of Lemma \ref{lem:derivatives}]
We first compute the derivative (3) as follows:
\begin{align}
	\begin{split}
	\nabla_{\vec{\lambda}} \EE_{\vec{\mu},S}\left[\vec{x}^T\tanh(\vec{x}^T\vec{\Sigma}^{-1}\vec{\lambda})\right]&=
	\EE_{\vec{\mu},S}\left[\vec{x}^T\nabla_{\vec{\lambda}}\tanh(\vec{x}^T\vec{\Sigma}^{-1}\vec{\lambda})\right]\\
	&=\vec{\Sigma}^{-1}\EE_{\vec{\mu},S}\left[\vec{x}\vec{x}^T\frac{1}{\cosh^2(\vec{x}^T\vec{\Sigma}^{-1}\vec{\lambda})}\right]\\
	&=\vec{\Sigma}^{-1}\EE_{\vec{\mu},S}\left[\vec{x}\vec{x}^T\left(1-\tanh^2(\vec{x}^T\vec{\Sigma}^{-1}\vec{\lambda})\right)\right]
   \end{split}
\end{align}

Next, derivative (2) is given by:

\begin{align}
	\begin{split}
	\nabla_{\vec{\mu}} \EE_{\vec{\mu},S}\left[\vec{x}^T\tanh(\vec{x}^T\vec{\Sigma}^{-1}\vec{\lambda})\right]&=\nabla_{\vec{\mu}}\dfrac{\int_{\mathbb{R}^d} \vec{x}^T\tanh(\vec{x}^T\vec{\Sigma}^{-1}\vec{\lambda})f_{\vec{\mu}}(\vec{x})S(\vec{x})d\vec{x}}{\int_{\mathbb{R}^d} f_{\vec{\mu}}(\vec{x}) S(\vec{x})d\vec{x}}\\
	&=\dfrac{\int_{\mathbb{R}^d}\vec{x}^T\tanh(\vec{x}^T\vec{\Sigma}^{-1}\vec{\lambda})\nabla_{\vec{\mu}}f_{\vec{\mu}}(\vec{x})S(\vec{x})d\vec{x}}{\int_{\mathbb{R}^d} f_{\vec{\mu}}(\vec{x}) S(\vec{x})d\vec{x}}\\
	-&\dfrac{\int_{\mathbb{R}^d} \nabla_{\vec{\mu}}f_{\vec{\mu}}(\vec{x})S(\vec{x})d\vec{x}}{\int_{\mathbb{R}^d}f_{\vec{\mu}}(\vec{x})S(\vec{x})d\vec{x}}\EE_{\vec{\mu},S}\left[\vec{x}^T\tanh(\vec{x}^T\vec{\Sigma}^{-1}\vec{\lambda})\right]\\
	&=\dfrac{\int_{\mathbb{R}^d}\vec{x}^T\tanh(\vec{x}^T\vec{\Sigma}^{-1}\vec{\lambda})\vec{\Sigma}^{-1}\left(-\vec{\mu}f_{\vec{\mu}}(\vec{x})+\vec{x}^T\tanh(\vec{x}^T\vec{\Sigma}^{-1}\vec{\mu})f_{\vec{\mu}}(\vec{x})\right)S(\vec{x})d\vec{x}}{\int_{\mathbb{R}^d} f_{\vec{\mu}}(\vec{x})S(\vec{x})d\vec{x}}\\
	&-\vec{\Sigma}^{-1}\dfrac{\int_{\mathbb{R}^d} -\vec{\mu}f_{\vec{\mu}}(\vec{x})S(\vec{x}) + \vec{x}^T\tanh(\vec{x}^T\vec{\Sigma}^{-1}\vec{\mu})f_{\vec{\mu}}(\vec{x})S(\vec{x})d\vec{x}}{\int_{\mathbb{R}^d} f_{\vec{\mu}}(\vec{x})S(\vec{x})d\vec{x}}\EE_{\vec{\mu},S}\left[\vec{x}^T\tanh(\vec{x}^T\vec{\Sigma}^{-1}\vec{\lambda})\right]\\
	&=\vec{\Sigma}^{-1}\EE_{\vec{\mu},S}\left[-\vec{x}\vec{\mu}^T\tanh(\vec{x}^T\vec{\Sigma}^{-1}\vec{\lambda})+\vec{x}\vec{x}^T\tanh(\vec{x}^T\vec{\Sigma}^{-1}\vec{\lambda})\tanh(\vec{x}^T\vec{\Sigma}^{-1}\vec{\mu})\right]\\
	&-\vec{\Sigma}^{-1}\Big[\EE_{\vec{\mu},S}\left[-\vec{x}\vec{\mu}^T\tanh(\vec{x}^T\vec{\Sigma}^{-1}\vec{\lambda})\right]\\
	&+\EE_{\vec{\mu},S}\left[\vec{x}^T\tanh(\vec{x}^T\vec{\Sigma}^{-1}\vec{\lambda})\right]\EE_{\vec{\mu},S}\left[\vec{x}^T\tanh(\vec{x}^T\vec{\Sigma}^{-1}\vec{\mu})\right]\Big]\\
	&=\vec{\Sigma}^{-1}\EE_{\vec{\mu},S}\left[\vec{x}\vec{x}^T\tanh(\vec{x}^T\vec{\Sigma}^{-1}\vec{\lambda})\tanh(\vec{x}^T\vec{\Sigma}^{-1}\vec{\mu})\right]\\
	&-\vec{\Sigma}^{-1}\EE_{\vec{\mu},S}\left[\vec{x}\tanh(\vec{x}^T\vec{\Sigma}^{-1}\vec{\lambda})\right]\EE_{\vec{\mu},S}\left[\vec{x}\tanh(\vec{x}^T\vec{\Sigma}^{-1}\vec{\mu})\right]^T
  \end{split}
\end{align}

Finally, derivative of (1) is computed by using the above two derivatives as follows:

\begin{align}
	\begin{split}
		\nabla_{\vec{\lambda}} \EE_{\vec{\lambda},S}\left[\vec{x}^T\tanh(\vec{x}^T\vec{\Sigma}^{-1}\vec{\lambda})\right]&=	\EE_{\vec{\lambda},S}\left[\vec{x}^T\nabla_{\vec{\lambda}}\tanh(\vec{x}^T\vec{\Sigma}^{-1}\vec{\lambda})\right]\\
		&+\dfrac{\int_{\mathbb{R}^d} \vec{x}^T\tanh(\vec{x}^T\vec{\Sigma}^{-1}\vec{\lambda})\nabla_{\vec{\lambda}}f_{\vec{\lambda}}(\vec{x})S(\vec{x})d\vec{x}}{\int_{\mathbb{R}^d} f_{\vec{\lambda}}(\vec{x})S(\vec{x})d\vec{x}}\\
		&-	\dfrac{\int_{\mathbb{R}^d} \nabla_{\vec{\lambda}}f_{\vec{\lambda}}(\vec{x}) S(\vec{x})d\vec{x}}{\int_{\mathbb{R}^d}f_{\vec{\lambda}}(\vec{x})S(\vec{x})d\vec{x}}\EE_{\vec{\lambda},S}\left[\vec{x}^T\tanh(\vec{x}^T\vec{\Sigma}^{-1}\vec{\lambda})\right]\\
		&=\vec{\Sigma}^{-1}\EE_{\vec{\lambda},S}\left[\vec{x}\vec{x}^T\left(1-\tanh^2(\vec{x}^T\vec{\Sigma}^{-1}\vec{\lambda})\right)\right]\\
		&+\vec{\Sigma}^{-1}\EE_{\vec{\lambda},S}\left[\vec{x}\vec{x}^T\tanh(\vec{x}^T\vec{\Sigma}^{-1}\vec{\lambda})\tanh(\vec{x}^T\vec{\Sigma}^{-1}\vec{\lambda})\right]\\
		&-\vec{\Sigma}^{-1}\EE_{\vec{\lambda},S}\left[\vec{x}\tanh(\vec{x}^T\vec{\Sigma}^{-1}\vec{\lambda})\right]\EE_{\vec{\lambda},S}\left[\vec{x}\tanh(\vec{x}^T\vec{\Sigma}^{-1}\vec{\lambda})\right]^T\\
		&=\vec{\Sigma}^{-1}\EE_{\vec{\lambda},S}\left[\vec{x}\vec{x}^T\right]\\
		&-\vec{\Sigma}^{-1}\EE_{\vec{\lambda},S}\left[\vec{x}\tanh(\vec{x}^T\vec{\Sigma}^{-1}\vec{\lambda})\right]\EE_{\vec{\lambda},S}\left[\vec{x}\tanh(\vec{x}^T\vec{\Sigma}^{-1}\vec{\lambda})\right]^T		
	\end{split}
\end{align}
\end{proof}

\end{document}